\documentclass[12pt]{article}    
\usepackage[margin=.75in]{geometry}
\usepackage{setspace}

\usepackage{dsfont}
\usepackage{setspace}

\usepackage{algorithm}
\usepackage{algpseudocode}
\usepackage{graphicx}
\usepackage{tikz}
\usepackage{subcaption}
\usepackage{multirow}
\usepackage{enumitem}
\usepackage{amsmath}
\usepackage{amssymb}
\usepackage{makecell}
\usepackage{booktabs} 
\usepackage{mathtools}
\usepackage{amsthm}

\makeatletter
\let\prism@originalincludegraphics\includegraphics
\RenewDocumentCommand{\includegraphics}{O{}m}{%
  \IfFileExists{#2}{%
    \prism@originalincludegraphics[#1]{#2}%
  }{%
    \fbox{%
      \begin{minipage}[c][0.18\textheight][c]{0.9\linewidth}
        \centering
        Missing graphic\\
        \texttt{\detokenize{#2}}
      \end{minipage}%
    }%
  }%
}
\makeatother

\newcommand{\FIGURE}[3]{#1\caption{#2.\ #3}}
\newenvironment{APPENDICES}{\appendix}{}

\newtheorem{proposition}{Proposition}
\newtheorem{lemma}{Lemma}
\newtheorem{corollary}{Corollary}
\newtheorem{definition}{Definition}

\DeclareMathOperator*{\argmax}{arg\,max}

\usepackage[
  backend=biber,
  natbib=true,   
  style=apa,
  maxbibnames=99
]{biblatex}
\DeclareLabeldate{
  \field{date}
  \field{year}
  \field{eventyear}
  \field{origyear}
  \literal{nodate}
}

\begin{document}
\title{Logging Policy Design for Off-Policy Evaluation}
\author{%
  \makebox[\textwidth][c]{%
    \begin{minipage}[t]{0.3\textwidth}
      \centering
      Connor Douglas\\
      \small New York University\\
      \scriptsize \texttt{cpd8405@stern.nyu.edu}
    \end{minipage}%
    \hfill
    \begin{minipage}[t]{0.3\textwidth}
      \centering
      Joel Persson\\
      \small Spotify\\
      \scriptsize \texttt{joelpersson@spotify.com}
    \end{minipage}%
    \hfill
    \begin{minipage}[t]{0.3\textwidth}
      \centering
      Foster Provost\\
      \small New York University\\
      \scriptsize \texttt{fprovost@stern.nyu.edu}
    \end{minipage}%
  }
}

\date{May 2026}
\maketitle
\begingroup
\renewcommand{\thefootnote}{}
\footnotetext{\textbf{Acknowledgments.} This material is based upon work supported by the Fubon Center of Technology, Business and Innovation at NYU Stern and the National Science Foundation Graduate Research Fellowship under Grant No. DGE-2234660. We would like to thank Kyle Kretchsman for supporting this research at Spotify. We would also like to thank Todd Wasson for helpful discussions on practical problems and solution approaches. 
}
\addtocounter{footnote}{-1}
\endgroup

\begin{abstract}
\begin{spacing}{1.15}
Off-policy evaluation (OPE) estimates the value of a target treatment policy (e.g., a recommender system) using data collected by a different logging policy. It enables high-stakes experimentation without live deployment, yet in practice accuracy depends heavily on the logging policy used to collect data for computing the estimate. We study how to design logging policies that minimize OPE error for given target policies. We characterize a fundamental reward–coverage tradeoff: concentrating probability mass on high-reward actions reduces variance but risks missing signal on actions the target policy may take. We propose a unifying framework for logging policy design and derive optimal policies in canonical informational regimes where the target policy and reward distribution are (i) known, (ii) unknown, and (iii) partially known through priors or noisy estimates at logging time. Our results provide actionable guidance for firms choosing among multiple candidate recommendation systems. We demonstrate the importance of treatment selection when gathering data for OPE, and describe theoretically optimal approaches when this is a firm's primary objective. We also distill practical design principles for selecting logging policies when operational constraints prevent implementing the theoretical optimum.
\end{spacing}
\end{abstract}

\section{Introduction}

Off-policy evaluation (OPE) is a key tool when implementing data-driven decision-making systems (policies), such as recommender and content-selection systems. The aim of OPE is to evaluate the value of some ``target'' policy using data collected under some other ``logging'' policy \citep{uehara2022review}.\footnote{In the literature, target policies are also called evaluation policies and logging policies are called behavior policies or learning policies.} OPE is a crucial first step to improve decision-making systems in many settings, as it enables data-informed exploration of candidate treatment policies without deployment. This step in development is especially important in areas such as medicine \citep{murphy2001marginal}, digital advertising \citep{thomas2017predictive}, and recommendation systems \citep{chen2019industrial}, where online experimentation can be costly or unethical and also can degrade the experience or welfare of individuals.

As a simplified example illustration, consider a content recommendation platform. A key feature of the platform is the user's home page, which displays a prominent recommendation each time the user visits the page. The recommendation might be a playlist, podcast, audiobook, etc. The firm continually iterates on its recommendation systems to improve the user experience and account for new technical developments, dynamic content catalogs, evolving user preferences, and broader platform priorities \citep{fernandez-loria2022comparison}. Before live (``in vivo'') testing, candidate recommendation policies are evaluated offline via OPE on logged, historical data. The logging policy is the policy that served recommendations to users during the logging period, generating the historical data. Previous research and platforms companies have indicated that OPE accuracy can vary dramatically with the choice of logging policy, and that some logging policies require vastly more data to achieve a given level of estimation precision \citep{zhu_safe_2022, tucker2023mval, Wan2022Safe, persson2024}. Yet little is known about how to best design logging policies to support reliable and efficient OPE.

This paper aims to address that gap. We offer a theoretical explanation for the sensitivity of OPE to the logging policy and show how logging policies \textit{should} be designed to support statistically reliable and efficient OPE. Specifically, we optimize the action propensities during data collection to minimize the mean-squared error (MSE) of an IPW estimate of a target policy's value, which is the expected mean reward accrued under that policy. We focus on the IPW estimator because it is unbiased and completely determined by the action propensities of the logging policy and the target policy. Therefore, our design isolates the effect of the logging policy on our OPE estimate. 

Practical considerations complicate the design problem. The action space---the possible recommendations for a user---may be very large, and rewards may be very sparse and heterogeneous across users (e.g., users may click on certain recommendations infrequently even if they are good recommendations). This sparsity problem poses statistical challenges for OPE, yet is precisely what the value of personalization hinges on. Furthermore, the firm may or may not know the exact candidate policies it will want to evaluate with the logged data before it has been collected. This means the logging policy may need to cover a broad range of actions, since the target policy to be evaluated may not be known in advance. Finally, since the logging policy is actually deployed to serve recommendations, a firm typically wants the user experience during the logging policy to be good. In economic terms, the firm wants to avoid the cost of excessively random action selection, if possible. Informed by this, we study logging policy design under varying degrees of knowledge of the target policy and of reward probabilities, and analyze the reward accrual (value produced) under resulting logging policies.

The design of logging policies for OPE is the subject of a small but emerging body of research. Much of this work has focused on the design of ``safe'' logging policies, which aims to attain acceptable performance outcomes during the logging period while still ensuring reliable and efficient OPE. This problem is related to, but distinct from, the problem of bandit learning or other standard adaptive experimentation settings. In the bandit literature, a typical objective is to design adaptive policies that maximize cumulative reward by balancing exploration of action-reward distributions against exploitation of accumulated knowledge \citep{sutton1998reinforcement}. The aim of logging policy design for OPE is different. It is not to adaptively maximize reward, nor to identify the best policy in as few samples as possible. Instead, logging policy design for OPE aims to sample actions so that the value of one or more (potentially unknown) target policies can be estimated offline from the accrued data, with as low statistical error as possible. In addition, firms will generally want to maintain acceptable recommendation performance during the logging period. For a content streaming platform such as Netflix or Spotify, this means that the reward metric of interest (e.g., average streaming rate across users among impressed items) during the deployment of a logging policy should not be much lower than that of the status-quo production policy that delivers the personalized experience per user.

There is substantial literature on how to construct unbiased and efficient statistical estimators for OPE. This work typically treats the data as fixed and generated according to some existing and potentially unknown logging policy. Key estimators from this stream of work include the direct method, inverse propensity weighting (IPW) \citep{chesnaye2022introduction} and doubly-robust (DR) approaches \citep{dudik2014doubly}. In the sample limit, these estimators enjoy strong statistical guarantees, including asymptotic unbiasedness and bounded variance. These hold as long as the data and models involved in the estimation satisfy standard causal inference assumptions and regularity conditions \citep[see e.g.,][for an overview and technical treatment]{uehara2022review}. In finite sample sizes, however, the choice of logging policy has a dramatic effect on the bias and variance of OPE estimates. Since all practical settings have finite samples, this motivates studying how the logging policy's action probabilities should be chosen to minimize bias and variance of the OPE estimator.

\subsubsection*{Summary of contributions.} We present a unified framework for logging policy design with the objective of minimizing the mean squared error (MSE) of an IPW estimate of a target policy's value. Our framework organizes the design problem along two dimensions: knowledge of the target policy and knowledge of the reward distribution. This organization reflects the informational constraints faced in practice and yields a clean space of design problems with tractable optimal solutions.

Section 2 demonstrates the importance of logging policy choice for OPE. Section 3 then sets up the challenges for the design problem. We derive the bias-variance decomposition of the IPW estimator as a function of the logging policy, and show that a ``good'' logging policy must balance a fundamental \textit{reward-coverage} tradeoff: concentrating probability mass on high-reward actions reduces variance from reward realization, but coverage across actions the target policy may take is needed to control both bias and variance. Balancing this tradeoff is the central design challenge and connects logging policy design to classical problems in optimal experimental design \citep{wald1943efficient, kiefer1959optimum} and importance sampling. 

In Section 4, we derive optimal logging policies at the extrema of our space of informational settings. We show that when neither the target policy nor the reward distribution is known, uniform randomization is minimax optimal. This reflects the classical intuition that an uninformative prior is optimal when one assumes nothing about the estimand and wants to control worst-case error. We then consider the ideal case where both the target policy and reward function are known. We show the optimal logging policy takes the form of a Neyman allocation \citep{neyman1934representative} directed toward the target policy, weighting action probabilities by the product of target policy mass and the square root of the reward probability. This solution minimizes IPW variance subject to overlap, and yields two notable byproducts: the resulting OPE estimate based on IPW has lower MSE than the empirical on-policy estimate one would obtain by running the target policy directly, and the logging policy accrues higher expected reward than the target policy during the logging period. This confirms that OPE with an optimized logging policy can jointly attain higher policy value and estimation precision than on-policy evaluation (e.g., an A/B test).

In Section 5, we extend our analysis to intermediate informational settings that may better reflect practical applications. We present and prove the optimal solution for cases where only the distribution over target policies is known. Examlpes of such cases include when the firm plans to evaluate several candidate systems from a single logged dataset. For this case, the optimal logging policy should not allocate sampling mass towards any given target policy, but instead towards a second-moment pseudo-target. We provide a simple plug-in construction that instantiates this idea. We then consider the case that instead the reward distribution is only partially known. This occurs in practice when rewards are estimated, for instance by fitting a machine learning model on historically logged data. We show that using noisy reward predictions directly to in logging policy design inevitably inflates the variance of the downstream IPW estimate. As a solution, we propose the use of posterior shrinkage of reward estimates toward a per-context, across-action mean, and provide the optimal amount of shrinkage under a Gaussian hierarchical prior and noise. This prior can in turn be estimated from data, leading to an ``empirical Bayes'' \citep{robbins1956empirical, efron1973stein} interpretation of the logging policy design solution.

In Section 6, we provide practical guidance for settings where the theoretically optimal logging policy cannot be deployed. This analysis is motivated by organizational constraints. Fully personalized action propensities for actions may not be available at serving time, and deploying an optimized logging policy per target policy can carry excessive engineering costs. Irrespective of the reason, we highlight that even when the theoretically best logging policies cannot be implemented, simple ``\textit{soft-greedy}'' logging policies that weight action propensities toward current, in-production estimates can greatly improve the reliability and efficiency of OPE. We describe three suitable policy classes (top-k, softmax, and power-normalized), each indexed by a single \textit{greediness} parameter that smoothly interpolates between uniform randomization and deterministic greedy selection. By this construction, these policy classes balance the reward-coverage tradeoff through a single decision variable. We show through simulation that well-tuned policies in these classes can approach the MSE of the theoretical optimum, and provide guidance on how the greediness parameter should be chosen relative to sample size and action space.

\section{Setting}

We frame the setting similarly to a contextual bandit problem: a Markov Decision Process without transition dynamics, and where rewards from actions given contexts are realized immediately. This directly maps to our example of a content platform selecting one piece of content at a time to recommend to a particular user through its prominent home page slot. The reward of interest is some binary preference signal, which in our example could be whether a user chooses to stream the recommended content or not. This framing is standard in the literature and enables scalable deployment \citep{xie2018ope}. 

Let $\mathcal{X}$ denote the space of contexts and $\mathcal{A}$ the set of available actions. Here, context refers to a user and the circumstances under which they are being served an action (for example, time of day, day of week, etc.). In our example, actions would refer to the items of content to recommend. Contexts arrive according to a stationary distribution $p_X$, which may be unknown. Given a realized context $x\in\mathcal{X}$, the system selects an action $a\in\mathcal{A}$ according to a policy $\pi(\cdot\mid x)$. We then observe a reward $R$ drawn from a conditional distribution $p_{R\mid X,A}$. We take rewards to be binary variables $R\mid (A,X) \sim \text{Bernoulli}(\mu(A,X))$ where $\mu(a,x)\coloneqq \mathbb E[R\mid A=a,X=x]$ denotes the conditional mean reward, or, equivalently, the success probability $\Pr(R=1\mid A=a,X=x)$. In our running example, $\mu$ can be thought of as the chance that a given user streams a content item after being recommended it, given their context. 


Any policy, including the $\pi_l$, produces some average reward across contexts and the actions it selects. This expected reward is the policy's \emph{value} $V$. Formally, the value of a policy $V(\pi) = \mathbb{E}[R]$, with this expectation taken across context $X$ from $p_X$, actions $A$ drawn from $\pi( \cdot | X)$, and the Bernoulli reward $R$ drawn from $p_{R\mid A,X}$. The aim of off-policy evaluation is to construct an accurate and precise estimate of the policy value, $\hat{V}$, for some ``target'' policy, using data collected by the logging policy $\pi_l$. 

We treat the action probability (propensity score) $\pi(a\mid x)$ as known for each $a\in \mathcal A$ and $x \in \mathcal X$. This is not restrictive, since the task of logging policy design is precisely to determine these probabilities. Our setup satisfies the standard assumptions of consistency and unconfoundedness by construction. These assumptions are central in the causal inference literature \citep[e.g.,][]{hernan2010causal, imbens2015causal} and off-policy evaluation \citep[][Section~2]{uehara2022review}. Consistency means that the realized reward for a given context depends only on the action assigned in that context. This is reasonable in our motivating example of a homepage content recommender with a single recommendation slot and no cross-user interactions. It may be violated in settings such as multi-slot feeds or social platforms with network effects. Unconfoundedness requires that potential rewards are independent of action assignment conditional on the observed context. This holds by construction in our setting, since actions are randomized according to a logging policy chosen and implemented by the logging policy designer.

\subsection{Inverse Propensity Score Weighting (IPW)}

A key class of estimators of a target policy's value is derived via inverse propensity weighting (IPW). Formally, the IPW estimator of a target policy's value is defined as 
\begin{equation}
    \hat{V}_{\text{IPW}}(\pi_t | \pi_l) = \frac{1}{N}\sum_{i=1}^{N}\frac{\pi_t(A_i|X_i)}{\pi_l(A_i|X_i)}R_i,
\end{equation}
which is computed over the sample $\{X_i, A_i, R_i\}^N_{i=1} \overset{i.i.d.}\sim p_X(x)\pi_l(a | x)p_{R\mid X,A}(r\mid x,a)$ generated by deploying the logging policy $\pi_l$, and the sample size $N$ can be interpreted as the finite experimentation budget.

In this paper, we examine how to construct logging policies for off-policy evaluation using the IPW estimator. Our use of the IPW estimator is motivated by two features. \emph{First}, most popular OPE estimators build on the IPW estimator, including the self-normalized IPW and doubly robust variants \citep{dudik2014doubly}). \emph{Second}, the IPW estimator is completely \emph{design-based}: it requires only observed rewards and action propensities. This is a benefit in applications since the designer controls the sampling of actions but not why and how users respond to them. Thus, by considering the IPW estimator, we design logging policy precisely to maximize the accuracy of our OPE estimates. As we will see, however, designing an effective logging policy \emph{will} depend crucially on assumptions about rewards and the target policy. This is inevitable if the logged data is to be most informative for off-policy evaluation. An aim of our work is to make design considerations and their trade-offs explicit.

The problem of logging policy design is based on the observation that even with the IPW estimator held fixed, the choice of logging policy has a dramatic effect on the accuracy of off-policy evaluation. Figure~\ref{fig:estimate_histogram} shows the sampling distributions of IPW estimates under two logging policies, \textit{personalized} and \textit{uniform}, across repeated simulated datasets and two sample sizes. Here, a personalized logging policy assigns equal probability to the 10 actions with the highest noisy reward estimates; the uniform policy samples assigns equal probability across all 10,000 actions. Both policies are used to gather data for evaluation a target policy, which places mass uniformly across the 10 actions with the highest (noiseless) reward probabilities, with the IPW estimator. A logging policy that samples more frequently from high-reward items produces estimates whose sampling distribution is more tightly concentrated around the true target policy value. This effect is particularly pronounced in the small-sample regime, where sampling variability of the estimate is inevitably greater. Here, uniform logging can produce highly inaccurate off-policy estimates, often resulting in extremely small or extremely large estimates (see the spikes at zero and one for the small-sample regime for the uniform policy). Collecting more informative data substantially reduces this error. As the figure shows, $1{,}000$ observations under the personalized logging policy match the accuracy of $100{,}000$ observations under uniform logging.

This example illustrates the practical importance of logging policy design for off-policy evaluation. Because the logging phase represents a finite experimentation period prior to deployment, an effective logging policy can attain a desired level of evaluation accuracy with fewer observations, or equivalently, achieve greater accuracy for a fixed experimentation budget. In recommender systems, this is particularly important because prolonged experimentation can delay product innovations, expose users to suboptimal recommendations, and incur opportunity costs relative to deploying the target policy.


\begin{figure}[h!]
\FIGURE
    {
    \centering
    \includegraphics[width=10cm]{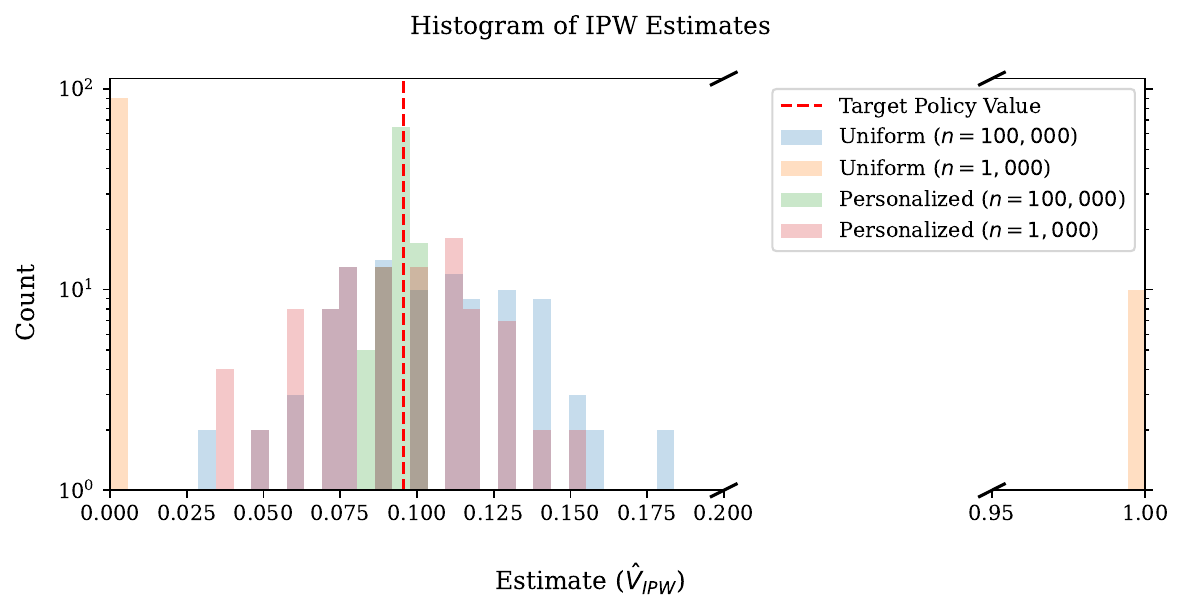}
    }
    {
    Dependence of IPW estimates on logging policy\label{fig:estimate_histogram}
    }
    {
    Histogram of $\hat{V}_{\text{IPW}}$ based on \textit{uniform} and \textit{personalized} logging policies, in small sample ($n=1,000$) and large sample ($n=100,000$) regimes. Uniform logging yields far worse estimates, especially in the small sample regime. Full simulation details are in Online Appendix \ref{app:simulations}
    }
\end{figure}


\subsection{Objective: Minimize MSE in Informational Settings}

The previous section illustrates that the sampling distribution of the IPW estimator depends critically on the logging policy. To compare logging policies, we therefore require a metric that evaluates the accuracy of the estimator $\hat V_{\mathrm{IPW}}(\pi_t \mid \pi_l)$ relative to the true target policy value $V(\pi_t)$, taking into account both systematic bias and sampling variability. A natural choice is the mean squared error (MSE), as it decomposes into bias and variance, represents the convex underlying cost structure of errors, and  is a standard measure of ``risk'' in OPE and statistical decision theory.

We therefore study logging policy design with the objective of minimizing the MSE of the IPW estimator, given by
\begin{equation}
    \mathrm{MSE}\!\left[\hat V_{\mathrm{IPW}}(\pi_t\mid \pi_l)\right]
    =
    \mathbb E 
    \Big[
    \big(  
    \hat{V}_{\text{IPW}}(\pi_t | \pi_l) - V(\pi_t)
    \big)^2
    \Big].
\end{equation}
where the expectation is taken over the joint distribution $p_X(x)\pi_l(a | x)p_{R\mid X,A}(r\mid x,a)$ of the logged data under $\pi_l$, that is, across arriving context, actions selected by the logging policy, and resultant rewards. 

Our focus on the MSE is also motivated by its differentiability and the generally convex nature of underlying costs to the designer with respect to errors in OPE estimates. Throughout the paper, we use the term \emph{error} as shorthand for MSE when convenient, and denote by $\pi^*_l$ a minimizer within some feasible class $\Pi$ of logging policies.


With this objective in mind, we study logging policy design under the different informational settings a designer may face when choosing the logging policy. Applications reside in a space characterized by two dimensions: \textit{knowledge of the target policy $\pi_t$} and \textit{knowledge of the reward distribution $\mu$}, as illustrated in Figure~\ref{fig:info_settings}. Because the data generated by the logging policy are not yet realized at the design stage, the designer must reason beforehand about estimation error, leading to objectives based on worst-case or average error, according to the information available a priori. In Section~4, we analyze logging policies in the extreme cases of full information and no information about these quantities, and in Section~5 we extend the analysis to intermediate settings.

\begin{figure}[h!]
\FIGURE
    {
    \centering
    \includegraphics[width=7cm]{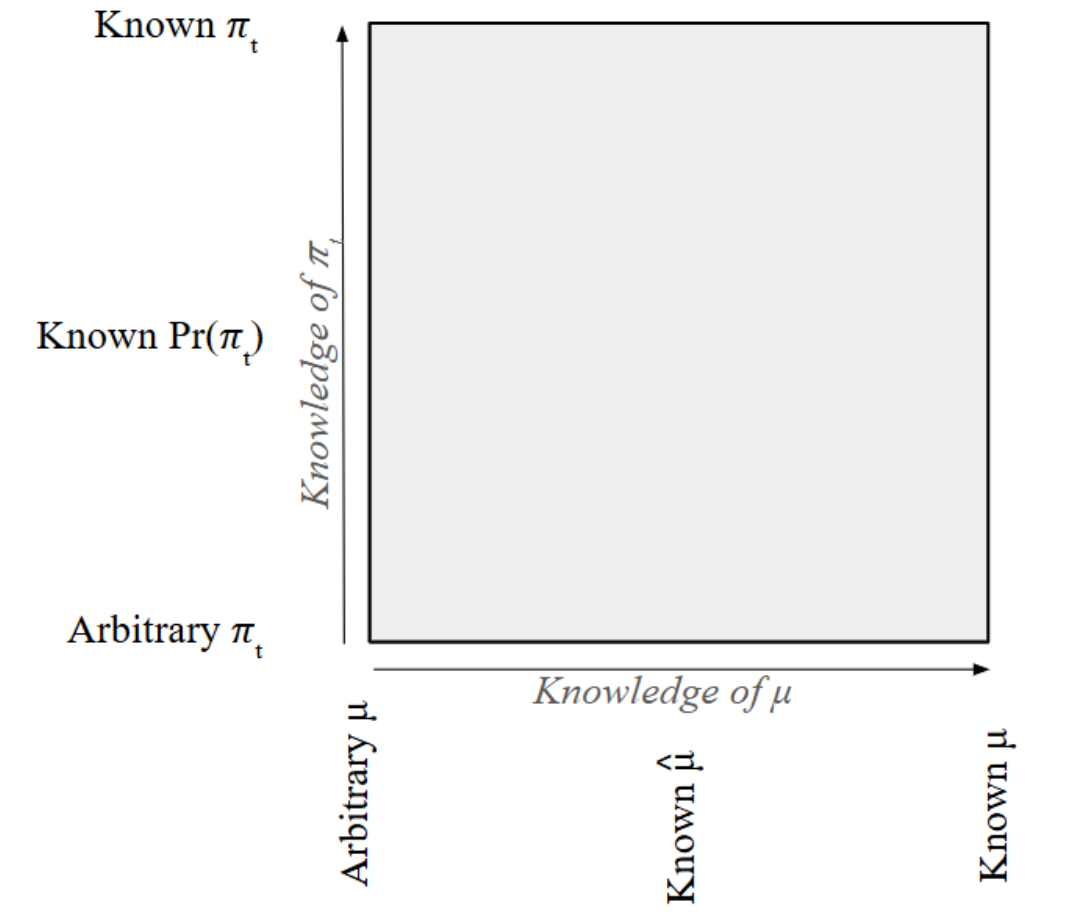}
    }
    {
    Informational settings for logging policy design
    \label{fig:info_settings}
    }
    {
    The two dimensions of information available to the logging policy designer: \textit{knowledge of the target policy $\pi_t$} (vertical axis), and \textit{knowledge of the reward distribution $\mu$ (horizontal axis)}
    }
\end{figure}


\section{MSE Decomposition and the Reward--Coverage Tradeoff}

To analyze the accuracy of OPE estimates under different logging policies, we need to understand the sources of error in the off-policy estimates. The MSE---or the risk, as it is known in statistical decision theory---of $\hat{V}_{\text{IPW}}$ can be decomposed into \textit{bias} and \textit{variance}. This decomposition is useful because the two sources call for different design strategies. Throughout, we treat context arrivals as exogenous and average over them, as they are outside the control of the logging policy designer.

\subsection{Bias-Variance Tradeoff}\label{sec:bias_variance}
Let $\mathcal{A}_x \coloneqq \{a \in \mathcal{A}, \pi_l(a|x)>0\}$ denote the support of $\pi_l$ for context $x$. Following the standard decomposition (see Online Appendix \ref{app:b_v_decomposition}), we have that
\begin{gather*}
\text{Bias}^2 \coloneqq \left( \sum_{x\in\mathcal X}\Pr(x)\sum_{a\notin\mathcal A_x}\pi_t(a\mid x)\mu(a,x)\right)^2,\\
\text{Var} \coloneqq \frac{1}{N}\left[
    \sum_{x \in \mathcal{X}} \Pr(x) \left(\left(\sum_{a \in \mathcal{A}_x} \frac{\pi_t(a \mid x)^2}{\pi_l(a \mid x)} \mu(a, x)\right)\right.\right.\\
     - \left.\left. \left(\sum_{a \in \mathcal{A}_x} \pi_t(a \mid x)\mu(a, x)\right)^2\right)\right],\\
     \mathrm{MSE}\!\left(\hat V_{\mathrm{IPW}}(\pi_t\mid \pi_l)\right)=\text{Bias}^2 + \text{Var}.
\end{gather*}
In words, a logging policy will induce \textit{bias} in $\hat{V}$ when it assigns zero probability to actions with non-zero expected reward that the target policy $\pi_t$ may select. \emph{Variance}, in turn, captures error in the target policy's IPW value estimate due to random sampling, both in terms of sampling actions and the sampling of rewards from actions.

\begin{definition}[Weak Overlap \citep{uehara2022review}]
    A logging policy $\pi_l$ is said to have \textbf{weak overlap} with respect to a target policy $\pi_t$ if $\pi_l(a|x) > 0 \quad \forall a \in \mathcal{A}, x \in \mathcal{X} \text{ where } \pi_t(a|x)>0$. 
 \end{definition}



Weak overlap ensures that $V(\pi_t)$ is point-identified from the data collected under $\pi_l$, and is required for the IPW estimator to be unbiased. The assumption is called ``weak'' to distinguish it from the standard (strong) overlap assumption, which instead invokes that $\pi_l(a|x) > 0$ for all $a \in \mathcal A$ \citep[][Section 2]{uehara2022review}. Strong overlap ensures unbiased evaluation of \emph{any} target policy from the same logged data. Since the goal of logging policy designs is to collect data for a specific target policy, weak overlap with respect to that target is sufficient to ensure an unbiased estimate of $V(\pi_t)$. Since we will be working in this setting, we henceforth refer to weak overlap simply as overlap.

While overlap eliminates bias, it does not necessarily minimize MSE. When overlap fails, the IPW estimator instead targets the value of the target policy restricted to the logging support $\mathcal A_x$, inducing bias relative to $V(\pi_t)$. As shown in Online Appendix \ref{app:examples}, in small-sample or large-action settings, the increase in bias induced by this truncation can be more than offset by a substantial reduction in variance, yielding lower MSE with respect to the true policy value.

More generally, when a context arrives infrequently and the action space is large, the designer may reduce MSE by excluding certain actions from the logging support. This is because the MSE can explode as $\pi_l(a|x)$ approaches $0$, whereas bias is bounded. This implies that if the designer is concerned with minimizing MSE of the IPW estimator for a target policy's value, they may prefer logging policies that induce bias.

There are practical concerns with leaving out actions and incurring this bias, however. Leaving actions out of the support of the logging policy eliminates any signal on certain context-action pairs the logging policy designer may wish to estimate. Since variance decreases with sample size while bias does not, the optimal set of actions to sample depends on sample size, among other factors that may be outside the designer's control..

\subsection{Choice of Action Set}

Having established that a biased estimator may, in certain cases, yield a lower MSE, we now provide a sufficiency condition on when to include an action in the logging support. This condition takes the form of a lower bound on the probability of the logging policy recommending an action to a given user. If an action is sampled with at least this probability, then the MSE will be lower than if the action were left unsampled (i.e. $\pi_l(a|x)=0$).

\begin{proposition}[Sufficiency for sampling]\label{prop:sampling_sufficiency}
    A sufficient condition guaranteeing that the inclusion of $a$ in $\mathcal{A}_x$ for some context $x$ decreases MSE relative to a logging policy that assigns zero probability to action $a$ at context $x$ is if $\pi_l(a|x)>\frac{1}{\mu(a, x)(n\Pr(x)+1)}$.
\end{proposition}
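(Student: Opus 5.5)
The plan is to compare the two MSEs directly, using the bias–variance decomposition of Section~\ref{sec:bias_variance}. Fix a context $x$ and an action $a$. Let $\pi_l$ be a logging policy with $\pi_l(a\mid x)>0$, and let $\pi_l'$ agree with $\pi_l$ everywhere except that $\pi_l'(a\mid x)=0$. So the comparison holds all other propensities fixed; I will remark at the end on how renormalization enters. If $\pi_t(a\mid x)\mu(a,x)=0$, the term for $a$ vanishes from both bias and variance and there is nothing to prove, so assume it is positive. Introduce the shorthand
\begin{equation*}
c \coloneqq \Pr(x)\,\pi_t(a\mid x)\,\mu(a,x)>0,\qquad
S \coloneqq \sum_{a'\in\mathcal A_x\setminus\{a\}}\pi_t(a'\mid x)\mu(a',x)\ge 0,
\end{equation*}
and let $B\ge 0$ be the bias contributed by all actions excluded under $\pi_l$.

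\textbf{Step 1 (bias side).} Under $\pi_l'$ the bias becomes $B+c$, where both terms are nonnegative because $\mu\ge 0$. The squared bias therefore increases by
\begin{equation*}
(B+c)^2-B^2 = 2Bc + c^2 \;\ge\; c^2 .
\end{equation*}

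\textbf{Step 2 (variance side).} Only the context-$x$ summand of the variance changes. Including $a$ adds $\frac{1}{n}\Pr(x)\,\pi_t(a\mid x)^2\mu(a,x)/\pi_l(a\mid x)$ to the first sum. The squared-mean term changes from $S^2$ to $(S+\pi_t(a\mid x)\mu(a,x))^2$, which subtracts an extra $2S\pi_t(a\mid x)\mu(a,x)+\pi_t(a\mid x)^2\mu(a,x)^2$. Dropping the nonnegative cross term $2S\pi_t\mu$ gives the upper bound
\begin{equation*}
\mathrm{Var}(\pi_l)-\mathrm{Var}(\pi_l') \;\le\; \frac{\Pr(x)\,\pi_t(a\mid x)^2}{n}\left(\frac{\mu(a,x)}{\pi_l(a\mid x)}-\mu(a,x)^2\right).
\end{equation*}

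\textbf{Step 3 (comparison).} Including $a$ lowers MSE whenever the bound from Step 2 is strictly below $c^2=\Pr(x)^2\pi_t(a\mid x)^2\mu(a,x)^2$. Cancel the common factor $\Pr(x)\pi_t(a\mid x)^2\mu(a,x)>0$ and multiply by $n$. The condition reduces to $1/\pi_l(a\mid x)-\mu(a,x) < n\Pr(x)\mu(a,x)$, which rearranges to exactly $\pi_l(a\mid x)>1/\bigl(\mu(a,x)(n\Pr(x)+1)\bigr)$.

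Steps 1 and 3 are routine algebra. The step needing care is Step 2: getting the sign bookkeeping of the squared-mean term right, so that discarding $2Bc$ and $2S\pi_t\mu$ pushes the inequality in the safe direction. Discarding these terms is exactly why the result is only a sufficient condition. I would also state explicitly that the comparison holds the other propensities at $x$ fixed. If $\pi_l'$ instead renormalizes the removed mass onto other actions, those actions' variance terms only decrease, so one must check that this does not reverse the conclusion. I would first try to handle that case by noting it simply defines a different comparator policy, and keep the proposition in the fixed-propensity form.
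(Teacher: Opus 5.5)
Your proposal is correct and follows the paper's own proof: the paper lower-bounds the increase in squared bias from dropping $a$ by $\Pr(x)^2\pi_t(a\mid x)^2\mu(a,x)^2$ (discarding the $2Bc$ term) and upper-bounds the variance increase from including $a$ by $\frac{\Pr(x)}{n}\bigl[\pi_t(a\mid x)^2\mu(a,x)/\pi_l(a\mid x)-\pi_t(a\mid x)^2\mu(a,x)^2\bigr]$ (discarding the cross term with $S$); it then shows by a chain of equivalences that the threshold on $\pi_l(a\mid x)$ is exactly the condition that the first bound exceed the second, with the other propensities held fixed as in your comparator. One small correction: when $\pi_t(a\mid x)=0$ the MSE is unchanged rather than decreased, so the statement implicitly requires $\pi_t(a\mid x)>0$ (the paper's equivalences need this too), and that case does not hold trivially as you suggest.
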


\begin{proof} 
See Online Appendix \ref{app:sampling_sufficiency}.
\end{proof}

This requirement serves as a guide guaranteeing that an item should be sampled. However, this is not a necessary condition. If the logging probability falls below this threshold, removing the action from the support does not necessarily improve MSE. These cases call for an exact analysis of the logging policy and the current bias and variance it induces.

For our derivation of optimal solutions, we place an overlap requirement on the logging policy to avoid the complex analyses induced by these edge cases. Beyond the complexity in calculating a solution to these cases, these solutions are themselves uninformative: when a logging policy is designed so that errors are concentrated in the bias term, it effectively foregoes actually gathering any signal about the quality of the target policy. For instance, if a logging policy only collects data on actions that $\pi_t$ would never sample, the resulting IPW estimator produces an estimate of zero. This incurs bias but achieves zero variance by ignoring all collected data. In the small-sample or large-action regimes that are most interesting for applications, such a logging policy can yield a lower MSE than one that produces an unbiased but high-variance estimate due to small logging propensities, simply because variance then dominates the MSE. However, this type of trivial ``optimization'' misses the aim of off-policy evaluation which is to \emph{use} data collected under one policy to estimate the value of another. Since these solutions do not provide signal on the quality of a target policy to OPE, we will focus our theoretical analyses on the problem of variance minimization, placing appropriate constraints on our claims. Specifically, in our theoretical analyses, we restrict the search space of logging policies to those satisfying overlap. We relax this restriction in Section~\ref{sec:restrictions}, where we examine logging policy classes that may intentionally truncate the action support.


\subsection{Reward-Coverage Tradeoff}\label{sec:reward_coverage_tradeoff}

In addition to the standard bias-variance tradeoff, there is another fundamental tradeoff that must be balanced when deciding how to allocate probability mass across actions. From the derivation in Section \ref{sec:bias_variance}, we see that increasing mass on context-action pairs in $\pi_t$ reduces  MSE, as $\text{Variance} \propto \frac{\pi_t^2(a|x)}{\pi_l(a|x)}$, while adding coverage (support) to previously unsupported pairs can ensure bias reduction as well. This incentivizes \textit{coverage} across pairs on which $\pi_t$ places mass. However, we also have that $\text{Variance} \propto \frac{\mu(a, x)}{\pi_l(a|x)}$, so there is an incentive to place more mass on high \textit{reward} pairs in $\mu$. When $\pi_t$ and $\mu$ are misaligned, this creates a tradeoff that must be managed when assigning mass in $\pi_l$. As $\pi_t$ and $\mu$ become more aligned, this tension diminishes.

To illustrate this tradeoff, consider the following example. Let there be a simple, one context $(\mathcal{X} = \{x_1\})$, two action $(\mathcal{A}=\{a_1,a_2\})$ setting, with the logging policy defined via a scalar value as $\pi_l \coloneqq \pi_l(a_1|x_1)$. In this setting, suppose reward probabilities across items for the given user follow $\mu(a_1,x_1)=.9,\; \mu(a_2,x_1)=.1$. So using our running content platform example, in this case there is only one user (or one type of user), and two different possible streaming recommendations. One recommendation is very likely to be streamed, while the other is unlikely.

Continuing, the designer aims to devise a logging policy to evaluate one of two potential target recommendation policies offline from the collected data. In the first scenario, the target policy is \textit{aligned} with rewards: $\pi_t(a_1|x_1) = .9,\pi_t(a_2|x_1)=.1 $; in other words, it is more likely to recommend the content that is more likely to be streamed, as illustrated in Figure~\ref{fig:aligned}(a). In this case, we see from the figure that the optimal logging policy $\pi_l^*$ places nearly all its mass on $a_1$, yielding $\pi_l^* \approx .96$. In the second scenario, as illustrated in Figure~\ref{fig:aligned}(b), the target policy is \textit{misaligned} with rewards, $\pi_t(a_1|x_1) = .1,\pi_t(a_2|x_1)=.9$. Therefore, the logging designer must balance placing mass on those high-reward actions with placing mass on the actions likely to be selected by the target policy. For this setting, we see that the optimal logging policy sets $\pi_l^* = .25$. Note that in neither case is the optimal logging policy the same as the target policy. Therefore, even if the firm could run the target policy in vivo via an A/B test, doing so would not be the optimal strategy for evaluating its performance.

This example demonstrates the need to balance two competing objectives: allocating mass on high-reward actions (\textit{reward}) and on actions the target policy may take (\textit{coverage}). Matching the target policy's distribution minimizes variance due to action sampling, making it a natural objective. However, error also arises from variance in reward realization. Rewards are stochastic, and positive outcomes may be rare. For observations with zero-valued rewards ($r_i = 0$), the absolute error for observation $i$ is simply $\pi_t(a_i,x_i)\mu(a_i,x_i)$. If positive rewards ($r_i = 1$), the error depends inversely on $\pi_l$, and equals $\frac{\pi_t(a_i,x_i)}{\pi_l(a_i,x_i)}-\pi_t(a_i,x_i)\mu(a_i,x_i)$. A large value of $\mu(a_i,x_i)$ makes $r_i=1$ a more likely event, so small it is much more likely to produce $r_i = 1$, yielding an error which explodes in small $\pi_l(a_i,x_i)$ causes this error contribution to explode. This \textit{reward-coverage tradeoff} is the central challenge in logging policy design.

\begin{figure}[ht]
\FIGURE
    {
    \centering
    \begin{subfigure}[t]{0.48\textwidth}
        \centering
        \includegraphics[height=2.5in]{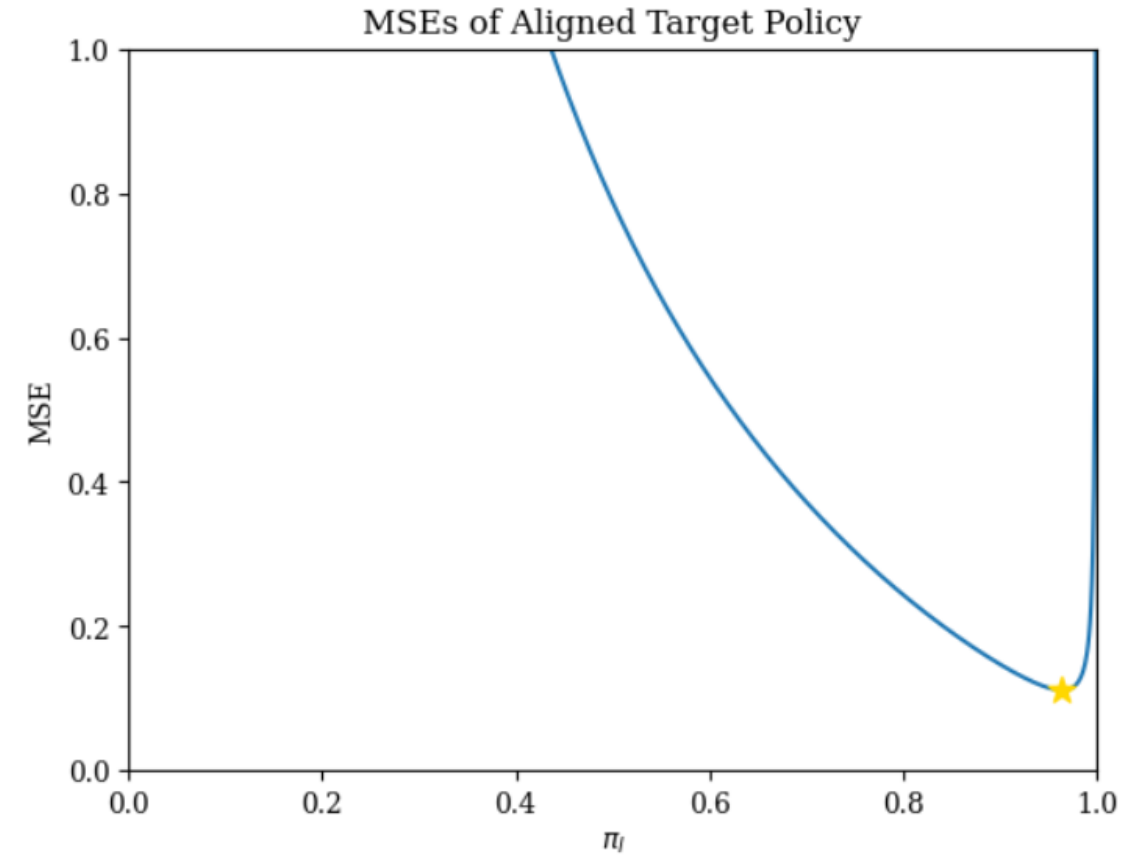}
        \caption{\emph{Aligned} setting}
    \end{subfigure}
    \hfill
    \begin{subfigure}[t]{0.48\textwidth}
        \centering
        \includegraphics[height=2.5in]{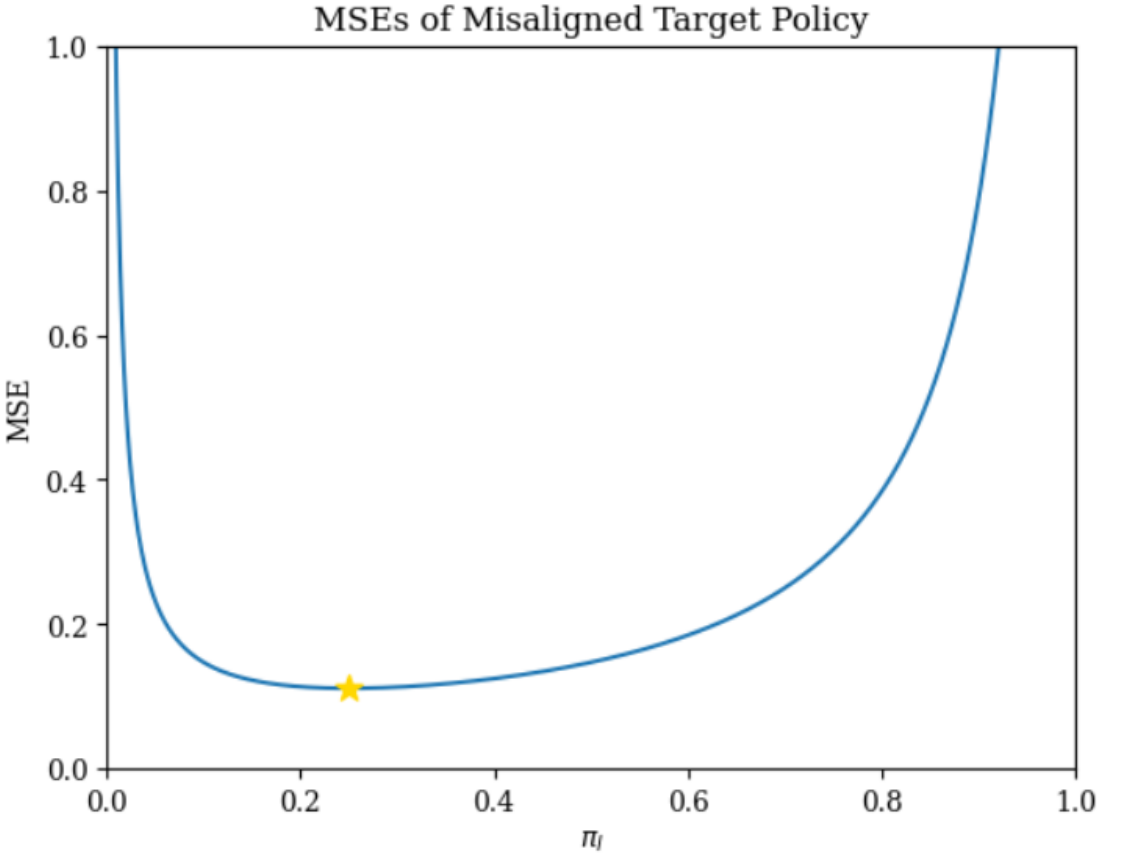}
        \caption{\emph{Misaligned} setting}
    \end{subfigure}
    }
    {
    Illustration of error across logging policy choices in \textit{aligned} and \textit{misaligned} target policies.\label{fig:aligned}
    }
    {
    Panel \emph{(a)} shows the MSE in the \textit{aligned} setting $\hat{V}_{IPW} (\pi_t|\pi_{l,\tilde{\mu}})$ across $\pi_l \coloneqq \pi_l(a_1|x_1)$ with $\mu(a_1,x_1)=.9$, $\mu(a_2,x_1)=.1$ and  $\pi_t(a_1|x_1) = .9$, $\pi_t(a_2|x_1)=.1 $.The MSE-minimizing $\pi_l^*$ is denoted as a star ($\pi_l^* \approx .96$). 
    Panel \emph{(b)} shows the MSE in the \textit{misaligned} setting $\hat{V}_{IPW} (\pi_t|\pi_{l,\tilde{\mu}})$ across $\pi_l \coloneqq \pi_l(a_1|x_1)$ with $\mu(a_1,x_1)=.9$, $\mu(a_2,x_1)=.1$ and $\pi_t(a_1|x_1) = .1$, $\pi_t(a_2|x_1)=.9$. The MSE-minimizing $\pi_l^*$ is denoted as a star ($\pi_l^* = .25$).
    }
\end{figure}


\section{Logging Policy Design at Informational Extrema}

Based on these general considerations for logging policy design, we now examine optimal logging policies in informational extrema (see Figure \ref{fig:info_settings}). We restrict attention to logging policies that induce an unbiased IPW estimator $\hat{V}_{\text{IPW}}(\pi_t \mid \pi_l)$, both to keep the analysis theoretically meaningful and to reflect the practical objective of collecting data to estimate $V(\pi_t)$.

For these no-knowledge settings, either $\mu$, $\pi_t$, or both are unknown. In light of this, we treat the design problem as an adversarial game: the logging policy designer aims to choose $\pi_l$ to minimize MSE with respect to $\pi_t$, while nature selects $\mu$ and $\pi_t$ to maximize MSE. This minimax formulation is standard in statistical decision theory and yields logging policies that are worst-case optimal. We first analyze the case where the target policy is unknown, then turn to the case where reward information is unavailable.


\subsection{Unknown $\pi_t$}

Under a completely unknown $\pi_t$, the only way to ensure overlap is to require full support across actions. With this requirement, we describe optimal logging policy design under \textit{unknown} and \textit{known} reward estimates.

\subsubsection{Unknown $\mu$}
Under settings where the designer has no information about $\pi_t$ and no information about $\mu$, the designer is only aware of the set of users on their platform and the set of items they can recommend. In this setting, the designer will minimize the MSE of their worst possible estimate by uniformly sampling across items. This is described below in Proposition \ref{prop:unknown_mu_unknown_pit}.

\begin{proposition}\label{prop:unknown_mu_unknown_pit}
    For an unconstrained target policy class $\Pi$, and a logging policy requiring full support, $\pi^*_l = \pi_U$ is the designer's solution to $\min\limits_{\pi_l \in \Pi} \max\limits_{\mu \in \mathcal{M},\pi_t \in \Pi} \mathrm{MSE}[\hat{V}_{\text{IPW}}(\pi_t | \pi_l)]$.
\end{proposition}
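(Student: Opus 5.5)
Since $\pi_l$ is required to have full support, weak overlap holds for every $\pi_t\in\Pi$, so $\mathcal A_x=\mathcal A$ and the Bias$^2$ term of Section~\ref{sec:bias_variance} vanishes. The MSE therefore equals the variance expression there. The plan is to compute the inner maximum in closed form for an arbitrary full-support $\pi_l$, and then minimize that closed form over $\pi_l$. I assume $|\mathcal A|\ge 2$ and $\mathcal M=[0,1]^{\mathcal A\times\mathcal X}$.

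\emph{Step 1 (upper bound on the inner max).} Because the variance is an average over contexts of per-context terms, and nature may choose $\pi_t(\cdot\mid x)$ and $\mu(\cdot,x)$ separately for each $x$, it suffices to bound for each fixed $x$
\[
\phi_x(\pi_t,\mu)=\sum_{a}\frac{\pi_t(a\mid x)^2\mu(a,x)}{\pi_l(a\mid x)}-\Big(\sum_a \pi_t(a\mid x)\mu(a,x)\Big)^2 .
\]
Let $p_x=\min_a \pi_l(a\mid x)$ and $s=\sum_a\pi_t(a\mid x)\mu(a,x)\in[0,1]$. Using $\pi_l(a\mid x)\ge p_x$ and then $\pi_t(a\mid x)^2\le \pi_t(a\mid x)$ gives $\phi_x\le s/p_x-s^2$. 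Since $|\mathcal A|\ge2$ forces $p_x\le 1/2$, the map $s\mapsto s/p_x-s^2$ has derivative $1/p_x-2s\ge 0$ on $[0,1]$. Hence it is nondecreasing there, and $\phi_x\le 1/p_x-1$.

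\emph{Step 2 (attainment).} Let nature put $\pi_t(\cdot\mid x)$ as a point mass on $a_x\in\arg\min_a\pi_l(a\mid x)$ and set $\mu\equiv1$. Both are admissible because $\Pi$ is unconstrained. This gives $\phi_x=1/p_x-1$ exactly, for every $x$ simultaneously. Consequently
\[
\max_{\mu,\pi_t}\mathrm{MSE}[\hat V_{\text{IPW}}(\pi_t\mid\pi_l)]=\frac1N\sum_{x}\Pr(x)\Big(\frac{1}{\min_a\pi_l(a\mid x)}-1\Big).
\]

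\emph{Step 3 (outer minimization).} This worst case is decreasing in each $\min_a\pi_l(a\mid x)$. The minimum propensity of a distribution on $\mathcal A$ is at most $1/|\mathcal A|$, with equality if and only if the distribution is uniform. So each context term is minimized, uniquely, by $\pi_l(\cdot\mid x)=\pi_U$. The minimax value is $(|\mathcal A|-1)/N$, which proves the claim.

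The step needing the most care is Step~1. The variance is not jointly convex in $(\pi_t,\mu)$, so a naive ``extreme point'' argument is unjustified. The two elementwise bounds together with the monotonicity in $s$ avoid this, and monotonicity relies on $p_x\le1/2$. That is why I assume at least two actions; the one-action case is trivial.
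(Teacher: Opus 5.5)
Your proof is correct, and it takes a different route from the paper's.

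\textbf{The paper's route.} Via a per-context lemma on a fixed support, the paper first asserts that the conditional variance is convex in $\pi_t$. So the worst-case target is a point mass on some $a^*$, and the inner problem reduces to $\max_{a^*}\bigl[\mu(a^*,x)/\pi_l(a^*\mid x)-\mu(a^*,x)^2\bigr]$. It then handles the maximization over $\mu$ only informally: it says that without knowledge of $\mu$ ``all actions must be treated as having constant reward averages,'' so equalizing $\mu/\pi_l$ gives the uniform policy.

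\textbf{Your route.} You skip the convexity step. The two elementwise bounds $\pi_l(a\mid x)\ge p_x$ and $\pi_t^2\le\pi_t$, together with monotonicity of $s\mapsto s/p_x-s^2$ for $p_x\le 1/2$, give the upper bound. You then exhibit a maximizer, so you obtain the inner maximum in closed form, $\frac1N\sum_x\Pr(x)\bigl(1/\min_a\pi_l(a\mid x)-1\bigr)$.

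\textbf{What each buys.} Your version gives a rigorous treatment of nature's choice of $\mu$, where the paper's appeal to constant rewards is heuristic. It also gives uniqueness of $\pi_U$ (on contexts with $\Pr(x)>0$) and the explicit minimax value $(|\mathcal A|-1)/N$. The paper's reduction to deterministic targets is the reusable structural step. It is what later produces the equalization rule $\mu/(c+\mu^2)$ in Proposition~\ref{prop:known_mu_unknown_pit}, whereas your bound is tailored to adversarial $\mu$.

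\textbf{On your caution about extreme points.} It is stronger than needed. Because the objective is an iterated maximum, one only needs convexity in $\pi_t$ for each fixed $\mu$, and that does hold. The matrix $\mathrm{diag}(\mu/\pi_l)-\mu\mu^\top$ is positive semidefinite, by Cauchy--Schwarz together with $\sum_a\pi_l(a\mid x)\mu(a,x)\le 1$. The paper states this convexity without proof, but the vertex reduction it supports is valid.
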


\begin{proof} 
See Online Appendix \ref{app:proof_uniform}.
\end{proof}

Here, we recover the intuition from statistical inference that uniform random sampling is the best way to collect data when we have no knowledge of the estimand of interest. In that sense, uniform random sampling of actions can be seen as optimal under an uninformative prior over the target policy. 

Unfortunately, this approach can yield very high MSE and cause undesirable actions to be selected during logging. In our running example, this would mean selecting content for users uniformly at random, regardless of their preferences or the target policy to be evaluated. Under even modest content diversity, this approach will recommend items users are unlikely to want and that a personalized target policy will take. As a result, the logging policy may be poorly aligned with both the reward distribution and the target policy, leading to high variance in OPE estimates. Furthermore, by recommending  content uniformly at random, the platform could incur a significantly degraded user experience.

Of course, that is not what platforms would (or should) do in practice. This error can be mitigated as the logging policy designer will generally have an implicit or explicit prior arising from information about rewards in the sampling process. As soon as this information is known, uniform randomization ceases to be the optimal logging policy.

\subsubsection{Known $\mu$}

As the designer learns more about their setting, they can design a solution that improves on uniform sampling. Consider the case where the designer knows the probabilities of reward for each user-item pair. In this case, the designer should recommend higher-reward items more often, designing a logging policy that is increasing in $\mu(\cdot,x)$. Online Appendix \ref{app:proofs} presents this worst-case analysis formally and shows how under known reward distribution $\mu$, a $\pi_l$ (ensuring overlap) that allocates mass proportional to $ \frac{\mu}{c+\mu^2}$ minimizes the worst-case MSE, proving the following proposition.

\begin{proposition}\label{prop:known_mu_unknown_pit}
    For an unconstrained target policy class $\Pi$, and a logging policy requiring full support, $ \pi_l^*(a|x) = \frac{\mu(a, x)}{c + \mu^2(a,x)}$ with $1 = \sum_{a \in \mathcal{A}_x}\frac{\mu(a, x)}{c + \mu^2(a,x)}$  is the designer's solution to the problem $\min\limits_{\pi_l \in \Pi} \max\limits_{\pi_t \in \Pi}\;\mathrm{MSE}[\hat{V}_{\text{IPW}}(\pi_t | \pi_l)]$.
\end{proposition}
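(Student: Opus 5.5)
The plan is to reduce the problem to a per-context minimax over the simplex, show that nature's best response is a point mass, and then solve the remaining problem with an equalizer argument. Under full support the IPW estimator is unbiased, so by the decomposition in Section~\ref{sec:bias_variance} the MSE equals the variance. The variance is a $\Pr(x)$-weighted sum over contexts, scaled by $1/N$. Both $\pi_l(\cdot\mid x)$ and $\pi_t(\cdot\mid x)$ can be chosen separately for each $x$ in the unconstrained class $\Pi$, so the min--max splits context by context. It therefore suffices to fix $x$, abbreviate $\mu_a=\mu(a,x)$, $p_a=\pi_l(a\mid x)$, $q_a=\pi_t(a\mid x)$, and solve
\[
\min_{p\in\Delta}\ \max_{q\in\Delta}\ f(q;p),\qquad f(q;p)\coloneqq q^\top\big(D_p-\mu\mu^\top\big)q,\quad D_p=\mathrm{diag}(\mu_a/p_a).
\]

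\textbf{Step 1: nature picks a vertex.} I will show that $f(\cdot;p)$ is convex in $q$, so its maximum over the simplex is attained at a vertex $q=e_a$, where $f(e_a;p)=\mu_a/p_a-\mu_a^2$. For convexity, take actions with $\mu_a>0$; there $D_p\succ 0$. By the standard rank-one criterion, $D_p-\mu\mu^\top\succeq 0$ if and only if
\[
\mu^\top D_p^{-1}\mu=\sum_a \mu_a^2\,p_a/\mu_a=\sum_a \mu_a p_a\le 1.
\]
This holds because $\mu_a\in[0,1]$ and $p\in\Delta$. Actions with $\mu_a=0$ contribute nothing to $f$ and can be ignored. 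The inner value is therefore $g(p)\coloneqq\max_a\big(\mu_a/p_a-\mu_a^2\big)$.

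\textbf{Step 2: equalizer.} Next I minimize $g$ over $p\in\Delta$. Each term $\mu_a/p_a-\mu_a^2$ is strictly decreasing in $p_a$, so the optimum should equalize all terms. Setting $\mu_a/p_a-\mu_a^2=c$ gives $p_a=\mu_a/(c+\mu_a^2)$. The map $c\mapsto\sum_a \mu_a/(c+\mu_a^2)$ is continuous and strictly decreasing on $(-\min_a\mu_a^2,\infty)$, running from $+\infty$ to $0$. Hence there is a unique $c$ with $\sum_a p_a=1$, which is exactly the normalization in the statement.

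To prove optimality, take any other $p'\in\Delta$ with $p'\neq p^*$. Since both sum to one, some $a$ has $p'_a<p^*_a$. For that action, $\mu_a/p'_a-\mu_a^2>c$, so $g(p')>c=g(p^*)$. Thus $p^*$ is the unique minimax solution in each context. Reassembling the contexts with weights $\Pr(x)/N$ gives the proposition.

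\textbf{Main obstacle.} I expect Step 1 to be the crux: one must confirm that the adversary's inner maximization is over a convex quadratic, which needs the bound $\sum_a\mu_a p_a\le 1$, and handle edge cases. In particular, actions with $\mu_a=0$ get $p^*_a=0$ from the formula, which conflicts with the full-support requirement. I would handle this either by assuming $\mu>0$ or by noting that such actions contribute zero variance, so an arbitrarily small $\epsilon$ of mass on them approaches the infimum. The equalizer step itself is routine once the vertex reduction is in place.
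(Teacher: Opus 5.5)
Your proposal is correct and follows the same route as the paper's proof: you reduce to a per-context problem, use convexity of the conditional variance in $\pi_t$ to restrict nature to point masses, and then equalize $\mu_a/p_a-\mu_a^2$ across actions to obtain $p_a=\mu_a/(c+\mu_a^2)$. Several of your steps supply justification the paper only asserts: the rank-one check $\sum_a\mu_a p_a\le 1$ for convexity, the monotonicity argument for existence and uniqueness of $c$, and the ``some $p'_a<p^*_a$'' argument for optimality of the equalizer. Your remark about actions with $\mu_a=0$ also correctly flags an edge case that the paper's statement glosses over.
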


\begin{proof} 
See Online Appendix \ref{app:proof_known_mu_unknown_pit}.
\end{proof}

In this solution, $c$ is a normalizing constant, ensuring $\pi_l^*(\cdot|x)$ comprises a valid probability distribution. Unfortunately, this constant cannot be solved for analytically, instead requiring numerical estimation. While it may not be immediately clear, this policy preserves the ordering of actions by $\mu(\cdot|x)$ in $\pi_l^*(\cdot|x)$. In our running example, another important implication is that such a policy would be ``safer" that the uniform policy, as it would be less likely to recommend content that the user would not like.

\begin{corollary}[Preservation of Ordering]\label{cor:preservation_of_ordering}
    For all $a, a' \in \mathcal{A} \quad s.t. \quad \mu(a, x) > \mu(a', x), \quad \pi^*_l(a|x) > \pi_l^*(a'|x).$
\end{corollary}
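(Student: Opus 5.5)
The plan is to reduce the corollary to an elementary comparison of the map $f_c(m) = \frac{m}{c+m^2}$ at two points, using the explicit form of $\pi_l^*$ from Proposition~\ref{prop:known_mu_unknown_pit}. Fix a context $x$ and write $u \coloneqq \mu(a,x) > v \coloneqq \mu(a',x) \ge 0$. Since $\pi_l^*$ has full support and is a probability distribution, every denominator $c+\mu^2(\cdot,x)$ is positive. Cross-multiplying by these positive denominators gives
\[
\frac{u}{c+u^2} > \frac{v}{c+v^2}
\iff u(c+v^2) > v(c+u^2)
\iff c(u-v) > uv(u-v)
\iff c > uv .
\]
The last equivalence uses $u>v$. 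So the whole corollary reduces to showing that the normalizing constant satisfies $c > \mu(a,x)\mu(a',x)$ for every such pair.

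Note that $f_c$ is not monotone on all of $[0,1]$: it increases only up to $m=\sqrt c$. This is why the claim cannot be read off the formula alone, and the main step is to extract the bound on $c$ from the normalization $\sum_{a''}\frac{\mu(a'',x)}{c+\mu^2(a'',x)}=1$. I will argue by contradiction and suppose $c \le uv$. Then $0 < c+u^2 \le uv+u^2 = u(u+v)$, so $\frac{u}{c+u^2} \ge \frac{1}{u+v}$. Symmetrically, $0 < c+v^2 \le v(u+v)$ gives $\frac{v}{c+v^2} \ge \frac{1}{u+v}$ whenever $v>0$.

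If $v>0$, the two terms for $a$ and $a'$ together are at least $\frac{2}{u+v}$. Since $u,v \le 1$ are Bernoulli means and $v<u\le 1$, we have $u+v<2$, so this quantity is strictly greater than $1$. All remaining summands are nonnegative, so the normalization equation is violated. If $v=0$, then $\pi_l^*(a'|x)=0$ while $\pi_l^*(a|x) = \frac{u}{c+u^2} > 0$ because $u>0$ and the denominator is positive, so the strict inequality holds directly. In both cases we obtain $c>uv$, which combined with the equivalence above yields $\pi_l^*(a|x) > \pi_l^*(a'|x)$.

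The only delicate point is this bound on $c$. It matters because the Lagrangian-type constant $c$ is only defined implicitly, so a priori it could sit below $\mu^2$ for large rewards, placing the top actions on the decreasing branch of $f_c$. The pairwise contradiction argument avoids solving for $c$. It needs only that rewards lie in $[0,1]$, so that $u+v<2$, and that the denominators are positive, which is implied by $\pi_l^*$ being a valid full-support policy as asserted in Proposition~\ref{prop:known_mu_unknown_pit}.
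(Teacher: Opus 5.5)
Your proof is correct and follows essentially the same route as the paper. The paper likewise reduces the claim to $c>\mu(a,x)\mu(a',x)$ by cross-multiplication, and bounds $c$ by evaluating the normalization sum at $c_0=\mu(a_1|x)\mu(a_2|x)$ (the two largest rewards), where those two terms collapse to $\frac{2}{\mu(a_1|x)+\mu(a_2|x)}\ge 1$, before using that the sum decreases in $c$. Your pairwise contradiction is a slight refinement: applying the identity to the specific pair with $v<u\le 1$ gives the strict bound $c>uv$ that the final strict inequality actually needs (the paper's lemma only states $c\ge\mu(a,x)\mu(a',x)$), and splitting off $v=0$ avoids the $0/0$ term that arises in the paper's lemma when $\mu(a_2|x)=0$.
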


\begin{proof} 
See Online Appendix \ref{app:proof_preservation_of_ordering}.
\end{proof}

This claim builds the intuition that higher-reward actions should be sampled more. We now proceed from the logging policy designer having no knowledge of $\pi_t$ to having complete knowledge of the target policy.

\subsection{Known $\pi_t$}

In practice, it is common for the set of target policies (to be evaluated) to be constrained. At the extreme, there is one target policy to evaluate and the logging policy designer knows it.  In this case our results have the cleanest interpretation.


At first, designing a logging policy for this setting may seem unnecessary: if there is a single, known target recommendation policy, why not just run this policy \emph{in vivo}? FWhy not conduct a live A/B test of the target policy and evaluate it \textit{on-policy}? 

Answering this question is not straightforward: as Figure~\ref{fig:aligned} shows, the optimal logging policy may or may not be the target policy itself. To provide definite answers, we organize our analysis by whether the reward function is known.

\subsubsection{Unknown $\mu$}

In the setting where the designer has no knowledge of $\mu$, but full knowledge of $\pi_t$, the designer will indeed minimize worst-case MSE by collecting data under $\pi_t$ itself.

\begin{proposition}\label{prop:uknown_mu_known_pit}
    For a policy class $\Pi$ requiring full support on the support of $\pi_t$, $\pi^*_l = \pi_t$ is the designer's solution to $\min\limits_{\pi_l \in \Pi}\max\limits_{\mu \in \mathcal{M}} \;\mathrm{MSE}[\hat{V}_{\text{IPW}}(\pi_t | \pi_l)]$.
\end{proposition}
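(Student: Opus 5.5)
The plan is to compute the worst-case MSE of $\pi_t$ itself exactly, and then show that every other admissible logging policy has a worst case at least as large. Nature will be restricted to a simple family of constant reward functions. Throughout, $\mathcal{M}$ is the set of all $\mu:\mathcal{A}\times\mathcal{X}\to[0,1]$.

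Because every $\pi_l\in\Pi$ covers the support of $\pi_t$, the bias term in the decomposition of Section~\ref{sec:bias_variance} vanishes, so $\mathrm{MSE}=\mathrm{Var}$. Fix a context $x$ and write
\[
f_x(\mu)=\sum_{a}\frac{\pi_t(a\mid x)^2}{\pi_l(a\mid x)}\,\mu(a,x)-\Big(\sum_a \pi_t(a\mid x)\mu(a,x)\Big)^2 ,
\]
so that $\mathrm{MSE}=\frac1N\sum_x \Pr(x) f_x(\mu)$.

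\textbf{Step 1 (upper bound for $\pi_l=\pi_t$).} With $\pi_l=\pi_t$ the first term of $f_x$ equals $v_x:=\sum_a\pi_t(a\mid x)\mu(a,x)\in[0,1]$. Hence $f_x=v_x(1-v_x)\le 1/4$ for every $\mu\in\mathcal{M}$. It follows that
\[
\max_{\mu\in\mathcal{M}}\mathrm{MSE}[\hat V_{\text{IPW}}(\pi_t\mid\pi_t)]\le \frac{1}{4N}.
\]
This bound is attained by $\mu\equiv 1/2$.

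\textbf{Step 2 (lower bound for arbitrary $\pi_l$).} Let $\pi_l\in\Pi$ be arbitrary and let nature play the constant reward $\mu\equiv c$ with $c\in[0,1]$. Then $f_x=cW_x-c^2$, where $W_x:=\sum_a \pi_t(a\mid x)^2/\pi_l(a\mid x)$.

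By Cauchy--Schwarz, taking sums over the support of $\pi_t(\cdot\mid x)$,
\[
1=\Big(\sum_a\pi_t\Big)^2\le \Big(\sum_a \tfrac{\pi_t^2}{\pi_l}\Big)\Big(\sum_a\pi_l\Big)\le W_x ,
\]
since $\pi_l$ has total mass at most one on that support. Set $W=\min_x W_x\ge 1$.

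If $W\le 2$, choose $c=W/2\in[0,1]$. Then for every $x$ we have $f_x\ge cW-c^2=W^2/4\ge 1/4$. If $W>2$, choose $c=1$, giving $f_x\ge W-1>1/4$. In either case $\max_\mu \mathrm{MSE}\ge \frac1{4N}$, which matches Step 1. So $\pi_t$ attains the minimax value.

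\textbf{Step 3 (uniqueness and main obstacle).} The main subtlety is that $W_x$ varies across contexts, so nature must use a single $\mu$ that works for all contexts at once. This is why Step 2 uses the uniform lower bound $W_x\ge 1$ rather than optimizing $c$ separately per context.

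For uniqueness, I would note where equality is possible. Equality in Cauchy--Schwarz requires $\pi_l(\cdot\mid x)\propto\pi_t(\cdot\mid x)$ on the support of $\pi_t$, with all mass of $\pi_l$ on that support, i.e.\ $\pi_l(\cdot\mid x)=\pi_t(\cdot\mid x)$. Conversely, if $W_{x_0}>1$ at some context $x_0$ with $\Pr(x_0)>0$, then choosing $c$ per context strictly increases the worst case. This is allowed because $\mu$ may depend on $x$: take $c_{x}=\min(W_{x}/2,1)$, which gives $f_{x}\ge 1/4$ everywhere and strictly more at $x_0$. Hence $\pi_t$ is the minimizer, up to contexts of probability zero.
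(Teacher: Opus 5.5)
Your proof is correct and follows essentially the same route as the paper's. Both arguments use a Cauchy--Schwarz bound $\sum_a \pi_t(a\mid x)^2/\pi_l(a\mid x)\ge 1$ with equality iff $\pi_l=\pi_t$, restrict nature to constant rewards to push the worst case strictly above $1/4$ for any $\pi_l\neq\pi_t$, and bound $v_x(1-v_x)\le 1/4$ when $\pi_l=\pi_t$. You handle $\pi_l$-mass outside the support of $\pi_t$, the boundary case $W_x>2$, and aggregation across contexts more carefully than the paper does. The ``single $\mu$ for all contexts'' concern in Step 2 is not a real obstacle, though: as you yourself note in Step 3, $\mu$ may vary with $x$, so the worst case decomposes context by context.
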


\begin{proof} 
See Online Appendix \ref{app:proof_uknown_mu_known_pit}.
\end{proof}

While the guarantee of optimality of this result is useful in these cases, it is not wholly surprising. However, in many applicable cases, this policy can be improved upon.

\subsubsection{Known $\mu$}

In practice, the reward function $\mu$ is rarely entirely unknown at the time of deciding on a logging policy. Consider the case where $\mu$ is fully observed: the designer knows the probabilities that users will like each item. Using $\pi_t$ to log data will generally yield a sub-optimal solution. The optimal solution is presented in Proposition \ref{prop:known_mu_known_pit}.

\begin{proposition}\label{prop:known_mu_known_pit}
    For known $\pi_t$ and $\mu$ and a logging policy requiring full support on the support of $\pi_t$, setting $\pi^*_l(a|x) =\frac{\pi_t(a|x)\sqrt{\mu(a, x)}}{\sum_{a'\in \mathcal{A}}\pi_t(a'|x)\sqrt{\mu(a', x)}}$ minimizes $\mathrm{MSE}[\hat{V}_{\text{IPW}}(\pi_t | \pi_l)]$.
\end{proposition}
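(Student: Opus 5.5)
Since the logging policy is required to have full support on the support of $\pi_t$, overlap holds and the Bias$^2$ term from Section~\ref{sec:bias_variance} vanishes. The MSE therefore reduces to the variance expression. In that expression, the subtracted term $\sum_x \Pr(x)\big(\sum_a \pi_t(a\mid x)\mu(a,x)\big)^2$ does not depend on $\pi_l$, and neither does the factor $1/N$. Minimizing MSE is then equivalent to minimizing
\[
\sum_{x\in\mathcal X}\Pr(x)\sum_{a:\,\pi_t(a\mid x)>0}\frac{\pi_t(a\mid x)^2\mu(a,x)}{\pi_l(a\mid x)}
\]
over $\pi_l$, subject to $\pi_l(\cdot\mid x)$ being a probability distribution for each $x$.

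This objective is a $\Pr(x)$-weighted sum of terms, each involving only $\pi_l(\cdot\mid x)$, and the constraints also decouple across contexts. So I will solve each context separately. Fix $x$ and write $w_a = \pi_t(a\mid x)^2\mu(a,x)\ge 0$ and $p_a=\pi_l(a\mid x)$. The per-context problem is to minimize $\sum_a w_a/p_a$ subject to $\sum_a p_a \le 1$ and $p_a>0$ on the support of $\pi_t$. Any mass placed outside that support is wasted, since it only shrinks the $p_a$ that appear in the objective, so the optimum puts all mass on the support.

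The key step is Cauchy--Schwarz:
\[
\Big(\sum_a \sqrt{w_a}\Big)^2=\Big(\sum_a \sqrt{w_a/p_a}\,\sqrt{p_a}\Big)^2\le \Big(\sum_a \frac{w_a}{p_a}\Big)\Big(\sum_a p_a\Big)\le \sum_a\frac{w_a}{p_a}.
\]
Equality holds iff $p_a\propto\sqrt{w_a}=\pi_t(a\mid x)\sqrt{\mu(a,x)}$ and $\sum_a p_a=1$. That choice gives exactly the normalized policy in the statement. The minimal variance is $\frac1N\sum_x\Pr(x)\big[(\sum_a\pi_t\sqrt{\mu})^2-(\sum_a\pi_t\mu)^2\big]$, which I would record as a byproduct. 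Alternatively, a Lagrangian argument gives the same result: the objective is strictly convex in $p$ where $w_a>0$, and the first-order condition $w_a/p_a^2=\lambda$ yields the same form.

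The main obstacle is degenerate cases. If $\mu(a,x)=0$ for some $a$ with $\pi_t(a\mid x)>0$, the formula assigns $p_a=0$, which violates the full-support requirement. In that case the infimum is approached but not attained: a small $\epsilon$ of mass on such actions costs nothing in the objective, since their $w_a=0$. I would handle this by stating the result as attained when $\mu>0$ on the support of $\pi_t$, and as an infimum (limit of $\epsilon$-perturbations) otherwise. Similarly, if $\sum_a\pi_t(a\mid x)\sqrt{\mu(a,x)}=0$ for some $x$, then every feasible $\pi_l$ gives zero contribution from that context, and any full-support policy is optimal there.
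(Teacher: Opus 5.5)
Your argument is correct. It shares the paper's outer reduction: overlap removes the bias, the subtracted term is constant in $\pi_l$, and the problem splits into per-context programs $\min\sum_a w_a/p_a$ over the simplex. Where you differ is the key step for solving each per-context program.

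The paper writes a Lagrangian with only the normalization constraint, solves the stationarity condition $c_a/\pi_l(a\mid x)^2=\lambda$, and normalizes. It does not invoke convexity or handle the positivity constraints, so strictly speaking it exhibits a stationary point and asserts it is the minimizer.

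Your Cauchy--Schwarz bound $\big(\sum_a\sqrt{w_a}\big)^2\le\sum_a w_a/p_a$ is a global certificate. It proves optimality outright and gives the optimal value. Its equality case ($p_a\propto\sqrt{w_a}$ with total mass one) also exposes the degenerate situations. In particular, you correctly notice that if $\mu(a,x)=0$ for some $a$ with $\pi_t(a\mid x)>0$, the stated policy puts zero mass on $a$. It then lies outside the proposition's own full-support class, and the optimum there is only an infimum. The paper passes over this silently. The formula does become a genuine minimizer if the class is relaxed to policies covering $\{a:\pi_t(a\mid x)\mu(a,x)>0\}$, which still yields an unbiased estimator.

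The Lagrangian route mainly buys that it derives the form rather than verifying a candidate, and the paper reuses it for Proposition~\ref{prop:expected_pit}. Your argument transfers there unchanged with $w_a=\mathbb{E}[\pi_t(a\mid x)^2]\,\mu(a,x)$.

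One small caution on your byproduct formula. The paper's variance expression omits the between-context term $\mathrm{Var}_X\big(\sum_a\pi_t(a\mid X)\mu(a,X)\big)$ that is present under i.i.d.\ context draws. This term is constant in $\pi_l$ under overlap, so the minimizer is unaffected. The exact minimal variance, however, is $\frac{1}{N}\big[\sum_x\Pr(x)\big(\sum_a\pi_t(a\mid x)\sqrt{\mu(a,x)}\big)^2-V(\pi_t)^2\big]$.
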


\begin{proof} 
See Online Appendix \ref{app:proof_known_mu_known_pit}.
\end{proof}

This result aligns with the optimality of Neyman allocation\footnote{The Neyman allocation describes strata size selection in stratified sampling according to outcome variance and population size \citep{neyman1934representative}.} in stratified sampling and optimal proposal selection in importance sampling, but the connection to optimal design for OPE via IPW has not yet been shown in the literature. 

It is worth noting that, in the case of a deterministic $\pi_t$, we do recover that the optimal logging policy for MSE-minimization in the policy value estimate is $\pi_t$, meaning that on-policy evaluation is the optimal approach to data collection. In our running example, a deterministic policy would always recommend the same streaming content to a user in a particular context. Such ``degenerate" policies are rarely used in practice as they are counter to standard recommendation system design principles encouraging exploration across items and diversity of recommendations. Nonetheless, this result fleshes out the design space, and is given formally in Corollary \ref{cor:deterministc_pit_pil}.

\begin{corollary}\label{cor:deterministc_pit_pil}
    When $\pi_t$ is deterministic, setting $\pi_l \coloneqq \pi_t$ will minimize variance.
\end{corollary}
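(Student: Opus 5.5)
The plan is to read Corollary~\ref{cor:deterministc_pit_pil} off Proposition~\ref{prop:known_mu_known_pit}, and then to add a direct argument that does not use knowledge of $\mu$. Fix a context $x$ and suppose $\pi_t(\cdot\mid x)$ puts all its mass on a single action $a^*_x$, so $\pi_t(a\mid x)=\mathds{1}\{a=a^*_x\}$.

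In the Neyman-type formula of Proposition~\ref{prop:known_mu_known_pit}, the numerator $\pi_t(a\mid x)\sqrt{\mu(a,x)}$ vanishes for every $a\neq a^*_x$. The denominator collapses to $\sqrt{\mu(a^*_x,x)}$, so $\pi_l^*(a\mid x)=\mathds{1}\{a=a^*_x\}=\pi_t(a\mid x)$. This requires $\mu(a^*_x,x)>0$, and I will treat the case $\mu(a^*_x,x)=0$ separately below.

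I would also verify the claim directly from the variance expression in Section~\ref{sec:bias_variance}, because that argument holds for every $\mu$. Under overlap, $a^*_x\in\mathcal A_x$. The per-context variance term is then
\[
\frac{\mu(a^*_x,x)}{\pi_l(a^*_x\mid x)}-\mu(a^*_x,x)^2 .
\]
The second term does not depend on $\pi_l$. The first term is nonincreasing in $\pi_l(a^*_x\mid x)\in(0,1]$, so it is minimized at $\pi_l(a^*_x\mid x)=1$, which is exactly $\pi_l=\pi_t$ at $x$. At that choice the term equals $\mu(1-\mu)$, the Bernoulli variance, as expected for on-policy evaluation.

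Because the total variance is a $\Pr(x)$-weighted sum of independent per-context terms, I would minimize each term separately to get the global minimizer. This also shows that $\pi_t$ is optimal even without knowing $\mu$, which is consistent with Proposition~\ref{prop:uknown_mu_known_pit}.

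The only delicate point is degeneracy. When $\mu(a^*_x,x)=0$, the variance term is zero for every feasible $\pi_l$. In that case $\pi_t$ is still a minimizer, though not the unique one, and the statement ``minimizes'' is satisfied. I would also note that any mass placed off the support of $\pi_t$ contributes nothing to the IPW sum but lowers $\pi_l(a^*_x\mid x)$. So there is no benefit from the full-support relaxation on actions outside the target's support, and restricting to policies with overlap on the support of $\pi_t$ loses nothing.
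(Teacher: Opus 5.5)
Your first step is exactly the paper's proof: you specialize the Neyman formula of Proposition~\ref{prop:known_mu_known_pit} to a point-mass $\pi_t$, so the numerator vanishes off $a^*_x$ and the normalizer collapses to $\sqrt{\mu(a^*_x,x)}$. Your supplementary direct argument is also correct and slightly strengthens the paper's version: the per-context variance reduces to $\mu(a^*_x,x)/\pi_l(a^*_x\mid x)-\mu(a^*_x,x)^2$, which is nonincreasing in $\pi_l(a^*_x\mid x)$, so the argument needs no knowledge of $\mu$ and covers the case $\mu(a^*_x,x)=0$, where the formula is undefined.
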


\begin{proof} 
See Online Appendix \ref{app:proof_deterministc_pit_pil}.
\end{proof}


However, when $\pi_t$ is \textit{not} degenerate, the optimal logging policy for minimizing worst-case MSE is not $\pi_t$ itself. 

\begin{corollary}\label{cor:on_policy_off_policy}
    Denote the empirical \textbf{on-policy estimator} as $\hat{V}_{\text{emp}}(\pi_t) = \frac{1}{N}\sum_{i = 1}^{N}r_i$ with actions drawn according to $\pi_t$. Then $\mathrm{MSE}[\hat{V}_{\text{IPW}}(\pi_t|\pi_l^*)] \leq \mathrm{MSE}[\hat{V}_{\text{emp}}(\pi_t)]$.
\end{corollary}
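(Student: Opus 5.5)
The plan is to observe that the empirical on-policy estimator is itself a special case of the IPW estimator, and then to invoke the optimality of $\pi_l^*$ from Proposition~\ref{prop:known_mu_known_pit}. When actions are drawn from $\pi_t$, every importance weight $\pi_t(A_i|X_i)/\pi_l(A_i|X_i)$ with $\pi_l=\pi_t$ equals one. Hence $\hat V_{\mathrm{IPW}}(\pi_t\mid\pi_t)=\frac1N\sum_i R_i=\hat V_{\mathrm{emp}}(\pi_t)$ as random variables, computed on identically distributed samples. Therefore $\mathrm{MSE}[\hat V_{\mathrm{emp}}(\pi_t)]=\mathrm{MSE}[\hat V_{\mathrm{IPW}}(\pi_t\mid\pi_t)]$.

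Next I would check that $\pi_t$ lies in the feasible class of Proposition~\ref{prop:known_mu_known_pit}. That class consists of policies with full support on the support of $\pi_t$, which $\pi_t$ trivially satisfies. Both $\pi_t$ and $\pi_l^*$ satisfy overlap, so by the decomposition in Section~\ref{sec:bias_variance} both IPW estimators are unbiased and their MSE equals their variance. Since $\pi_l^*$ minimizes MSE over this class and $\pi_t$ belongs to it, we get $\mathrm{MSE}[\hat V_{\mathrm{IPW}}(\pi_t\mid\pi_l^*)]\le \mathrm{MSE}[\hat V_{\mathrm{IPW}}(\pi_t\mid\pi_t)]$, which gives the corollary.

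To make the argument self-contained rather than resting entirely on the earlier proposition, I would also verify the inequality directly, context by context. The only term in the variance that depends on $\pi_l$ is $\sum_a \pi_t(a|x)^2\mu(a,x)/\pi_l(a|x)$. Plugging in $\pi_l^*$ turns this term into $\bigl(\sum_a \pi_t(a|x)\sqrt{\mu(a,x)}\bigr)^2$, whereas plugging in $\pi_l=\pi_t$ gives $\sum_a \pi_t(a|x)\mu(a,x)$. The first is at most the second by Jensen's (or Cauchy--Schwarz) inequality, because $\pi_t(\cdot|x)$ is a probability distribution. Averaging over $\Pr(x)$ and dividing by $N$ preserves the inequality.

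The main subtlety, rather than a real obstacle, is whether the variance formula in Section~\ref{sec:bias_variance} matches the true variance of $\hat V_{\mathrm{emp}}$. That formula averages within-context variances and appears to omit the between-context component arising from random $X$. I would sidestep this issue by relying on the exact identity $\hat V_{\mathrm{emp}}=\hat V_{\mathrm{IPW}}(\pi_t\mid\pi_t)$. Any between-context term is $\frac1N\mathrm{Var}_X\bigl(\sum_a\pi_t(a|X)\mu(a,X)\bigr)$, which depends only on $\pi_t$ and $\mu$ and not on $\pi_l$. It therefore appears identically on both sides of the comparison, and the inequality is unaffected.
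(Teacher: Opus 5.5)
Your proposal is correct. Your self-contained check is exactly the paper's proof: the paper plugs $\pi_l^*$ into the conditional variance to get $Z^2-\bar\mu^2$, with $Z=\sum_a\pi_t(a|x)\sqrt{\mu(a,x)}$ and $\bar\mu=\sum_a\pi_t(a|x)\mu(a,x)$. It then bounds $Z^2\le\bar\mu$ by Cauchy--Schwarz and compares with $\mathrm{Var}(R\mid X=x)=\bar\mu-\bar\mu^2$, which it computes by the law of total variance rather than through your identity $\hat V_{\mathrm{emp}}=\hat V_{\mathrm{IPW}}(\pi_t\mid\pi_t)$.

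Your primary route is genuinely different. By recognizing on-policy evaluation as the feasible design $\pi_l=\pi_t$, you make the statement an immediate consequence of Proposition~\ref{prop:known_mu_known_pit}, with no computation. The explicit computation buys something in return. It does not lean on the optimality claim of Proposition~\ref{prop:known_mu_known_pit}, whose proof only writes down the stationarity condition. It also identifies the per-context gain as $\bar\mu-Z^2=\mathrm{Var}_{A\sim\pi_t(\cdot|x)}\bigl(\sqrt{\mu(A,x)}\bigr)$. Hence the improvement is strict unless $\mu(\cdot,x)$ is constant on the support of $\pi_t(\cdot|x)$, equivalently unless $\pi_l^*=\pi_t$ there.

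Your remark on the between-context term is a real improvement in care. The formula in Appendix~\ref{app:b_v_decomposition} is $\frac1N\mathbb E[\mathrm{Var}(Z\mid X)]$ and omits $\frac1N\mathrm{Var}_X(\mathbb E[Z\mid X])$ under i.i.d.\ contexts. The paper's aggregation step therefore silently relies on that omitted term being common to both estimators, exactly as you argue.

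One small inaccuracy: if $\mu(a,x)=0$ for some $a$ with $\pi_t(a|x)>0$, then $\pi_l^*(a|x)=0$. In that case $\pi_l^*$ does not satisfy overlap as you claim and is not literally in the full-support class. Unbiasedness still holds because the dropped terms contribute $\pi_t(a|x)\mu(a,x)=0$, and your direct check covers this case unchanged.
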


\begin{proof} 
See Online Appendix \ref{app:proof_on_policy_off_policy}.
\end{proof}

The optimal design $\pi^*_l$ in Proposition \ref{prop:known_mu_known_pit} has a notable side-benefit: it also weakly improves live reward accrual during the logging period, relative to running $\pi_t$ directly.

\begin{corollary}\label{cor:safety_off_policy}
     The policy value (expected accrued reward) under the optimal logging policy $V(\pi_l^*) \geq V(\pi_t)$ for all valid $\pi_t$ and for all $\mu(a, x) \in [0,1]$.
\end{corollary}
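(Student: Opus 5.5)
We plan to work context by context. Since $V(\pi) = \sum_x \Pr(x)\sum_a \pi(a\mid x)\mu(a,x)$ and $\Pr(x)\ge 0$, it suffices to show, for each fixed $x$,
\[
\sum_a \pi_l^*(a\mid x)\mu(a,x) \;\ge\; \sum_a \pi_t(a\mid x)\mu(a,x).
\]
We then sum over contexts with weights $\Pr(x)$. Fix $x$ and abbreviate $p_a = \pi_t(a\mid x)$, $m_a=\mu(a,x)\in[0,1]$, and $s_a=\sqrt{m_a}$. By Proposition~\ref{prop:known_mu_known_pit}, $\pi_l^*(a\mid x)=p_a s_a/\sum_{a'}p_{a'}s_{a'}$. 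If $\sum_{a'}p_{a'}s_{a'}=0$, then every action in the support of $\pi_t$ has $\mu=0$. In that case $V$ restricted to this context is $0$ for $\pi_t$, and any valid overlap choice gives value at least $0$, so the inequality holds trivially. We therefore assume this sum is positive.

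The per-context inequality becomes
\[
\frac{\sum_a p_a s_a^3}{\sum_a p_a s_a} \;\ge\; \sum_a p_a s_a^2,
\]
which is equivalent to $\mathbb{E}_p[S^3] \ge \mathbb{E}_p[S]\,\mathbb{E}_p[S^2]$. Here $S$ is the random variable taking value $s_a$ with probability $p_a$ under $\pi_t(\cdot\mid x)$. This is a Chebyshev sum (association) inequality: $S$ and $S^2$ are both nondecreasing functions of the same random variable $S\ge 0$, so their covariance is nonnegative, that is, $\mathrm{Cov}_p(S,S^2)\ge 0$. A self-contained proof writes
\[
\mathbb{E}[S^3]-\mathbb{E}[S]\mathbb{E}[S^2]=\tfrac12\,\mathbb{E}\big[(S-S')(S^2-S'^2)\big],
\]
where $S'$ is an independent copy of $S$. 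The right-hand side is nonnegative because $(s-s')(s^2-s'^2)=(s-s')^2(s+s')\ge 0$.

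Summing over $x$ with weights $\Pr(x)$ yields $V(\pi_l^*)\ge V(\pi_t)$. Note that $\mu\in[0,1]$ is used only to guarantee that $\sqrt{\mu}$ is real and nonnegative. Equality holds exactly when $\mu$ is constant on the support of $\pi_t(\cdot\mid x)$ in every context with $\Pr(x)>0$, which also covers the deterministic case of Corollary~\ref{cor:deterministc_pit_pil}.

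The only step needing care is the reduction to the covariance inequality and the degenerate zero-denominator case. The rest is a routine rearrangement.
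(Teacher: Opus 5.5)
Your proof is correct and is essentially the paper's argument. The paper also works context by context, introduces $Y(x)$ taking value $\sqrt{\mu(a,x)}$ with probability $\pi_t(a\mid x)$, and writes the per-context gap as $\mathrm{Cov}(Y,Y^2)/\mathbb{E}[Y]\ge 0$. Your symmetrization identity and your treatment of the case $\mathbb{E}[Y]=0$ fill in two points the paper only asserts or skips. Your equality condition ($\mu$ constant on the support of $\pi_t(\cdot\mid x)$, setting aside the degenerate case where $\pi_l^*$ is undefined) is also more accurate than the paper's closing remark that nonzero support of $Y(x)$ suffices for strictness.
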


With knowledge of $\pi_t$ and $\mu$, Corollaries \ref{cor:on_policy_off_policy} and \ref{cor:safety_off_policy} form an important joint result: This general logging policy attain a (weakly) greater value than the target policy it is derived from, \emph{and} produces a more precise estimate of that target policy's true value than running it.

While emerging research on on-policy evaluation in bandit-adaptive designs highlights a trade-off between statistical inference and regret minimization \citep{duan2024regret, qin_optimizing_2024, simchi2023multi}, our result here demonstrates that it is possible, in certain settings, to jointly improve statistical precision and accrued regret under non-adaptive designs for off-policy evaluation. The implication for the designer of the logging policy is clear. To evaluate a single target policy, the designer should collect data under this modified solution $\pi_l^*$ and perform OPE of $\pi_t$ on this data, rather than live testing $\pi_t$.

\section{Logging Policy Design in Intermediate Information Settings}


%

In practice, the designer may not know the exact target policy they will be evaluating at logging time. A firm may anticipate evaluating several candidate policies without knowing in advance which will ultimately be deployed. For instance, it is likely the case that the currently deployed (``incumbent", ``production'') policy is already quite good, and that candidate policies will therefore be derived from it. This effectively constrains the class of target policies. Such a constraint may take the form of a lower bound on policy value, membership in a given policy class, or an organizational requirement such as a maximum deviation from the incumbent policy. In this case, the target policy can be characterized as a draw from a distribution. 

Furthermore, the reward function $\mu$ may be only partially known. In applications, we only have an estimate of $\mu$, for example from a machine learning model. This estimate carries uncertainty, which affects the ability to efficiently allocate sampling mass towards what to evaluate (cf. Propositions \ref{prop:known_mu_unknown_pit} and \ref{prop:known_mu_known_pit}).

Here the designer faces an \textit{intermediate} information setting. For analysis, we characterize these settings by two main features. The first is that a distribution over the target policies $\Pi_t$ is known. This framing reflects a setting where the designer does not know the exact target policy, but has prior knowledge about the class of policies that might be evaluated. The second feature is that a noisy estimate of $\mu$, the conditional expectation of rewards, is available at the design stage. For example, the designer has access to estimates of reward probabilities for the available items based on item features and user contexts. In recommendation systems, such an estimate is generally available from prior logged data, either via empirical frequencies after a cold-start period, or from a statistical model that generalizes over contexts. For the simulation results below, we make the further assumption that $\Pi_t$ is restricted to some class of ``reasonable'' policies.

\subsection{Known $\Pr(\pi_t)$}


We start with the case where the designer does not know the exact target policy but can formulate a prior distribution over the policy class. In this setting, a firm may wish to derive a single logging policy for the evaluation of multiple known target policies, linearly weighting the MSE of each in the overall objective. 

Let $\Pi_t$ be the \textit{distribution} that $\pi_t$ can assume.\footnote{This is a minor overloading of $\Pi_t$, so as not to introduce too many symbols.} Without a single $\pi_t$, we now have multiple possible objectives to minimize when designing a logging policy. Our objective is to minimize expected MSE, averaging over the distribution of target policies $\Pi_t$. Formally, the objective is: 
\[
\min_{\pi_l} \mathbb{E}_{\pi_t \sim \Pi_t} [\mathrm{MSE}(\pi_t|\pi_l)].
\]
From this, we can solve for the optimal logging policy. 
\begin{proposition}\label{prop:expected_pit}
    With known $\Pr(\pi_t)$, define $\tilde{\pi}_t(a|x)\coloneqq\sqrt{\mathbb{E}[\pi_t(a|x)^2]}$. Under the overlap assumption, $\pi_l^* =\frac{\tilde{\pi}_t(a|x)\sqrt{\mu(a, x)}}{\sum_{a'\in\mathcal A}\tilde{\pi}_t(a|x)\sqrt{\mu(a',x)}}$ satisfies $\min_{\pi_t} \mathbb{E}_{\pi_t \sim \Pi_t}[\mathrm{MSE}(\pi_t|\pi_l)]$.
\end{proposition}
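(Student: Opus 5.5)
The plan is to write the expected MSE as a linear functional of the moments of $\pi_t$, observe that only $\mathbb{E}[\pi_t(a\mid x)^2]$ interacts with the design variable $\pi_l$, and then reduce the problem to the known-$\pi_t$ case of Proposition~\ref{prop:known_mu_known_pit} applied to the pseudo-target $\tilde\pi_t$.

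\textbf{Step 1: restrict to the unbiased class and take expectations.} Under the overlap assumption the squared bias vanishes. Here overlap means $\pi_l(a\mid x)>0$ wherever $\tilde\pi_t(a\mid x)>0$, i.e., wherever some $\pi_t$ in the support of $\Pi_t$ places mass. The variance formula of Section~\ref{sec:bias_variance} then gives
\begin{equation*}
\mathbb{E}_{\pi_t\sim\Pi_t}[\mathrm{MSE}(\pi_t\mid\pi_l)]=\frac{1}{N}\sum_{x}\Pr(x)\left(\sum_{a}\frac{\mathbb{E}[\pi_t(a\mid x)^2]\,\mu(a,x)}{\pi_l(a\mid x)}-\mathbb{E}\Big[\Big(\sum_a\pi_t(a\mid x)\mu(a,x)\Big)^2\Big]\right).
\end{equation*}
This uses linearity of expectation and the fact that $\pi_l$ and $\mu$ are fixed with respect to the draw of $\pi_t$. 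The second term does not depend on $\pi_l$, so minimizing the objective is equivalent to minimizing
\[
\sum_x\Pr(x)\sum_a\frac{\tilde\pi_t(a\mid x)^2\mu(a,x)}{\pi_l(a\mid x)}.
\]

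\textbf{Step 2: reduce to the known-$\pi_t$ problem.} This reduced objective has exactly the $\pi_l$-dependent form of the known-$\pi_t$ variance, with $\pi_t$ replaced by $\tilde\pi_t$. Note that $\tilde\pi_t$ need not sum to one, but this does not matter here.

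\textbf{Step 3: solve context by context.} The objective separates across contexts because the constraint $\sum_a\pi_l(a\mid x)=1$ is imposed separately for each $x$. For each $x$, Cauchy--Schwarz gives
\[
\Big(\sum_a \tilde\pi_t\sqrt{\mu}\Big)^2=\Big(\sum_a\frac{\tilde\pi_t\sqrt\mu}{\sqrt{\pi_l}}\sqrt{\pi_l}\Big)^2\le \sum_a\frac{\tilde\pi_t^2\mu}{\pi_l}\cdot\sum_a\pi_l=\sum_a\frac{\tilde\pi_t^2\mu}{\pi_l}.
\]
Equality holds iff $\pi_l\propto\tilde\pi_t\sqrt{\mu}$. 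Normalizing yields the stated $\pi_l^*$, whose denominator should read $\sum_{a'}\tilde\pi_t(a'\mid x)\sqrt{\mu(a',x)}$. Alternatively, a Lagrangian argument with the strictly convex map $p\mapsto c/p$ gives the same stationarity condition $\pi_l\propto\sqrt{\tilde\pi_t^2\mu}$ and confirms uniqueness on the support.

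\textbf{Main obstacle: boundary cases.} The only delicate point is actions with $\mu(a,x)=0$ but $\tilde\pi_t>0$. There the formula assigns probability zero, which formally violates overlap. However, these actions contribute nothing to the bias or the variance, since their reward is almost surely $0$. I would handle this exactly as in the proof of Proposition~\ref{prop:known_mu_known_pit}: interpret the optimum as an infimum that is attained, or equivalently note that the estimator remains unbiased because those terms vanish. The degenerate case $\sum_a\tilde\pi_t\sqrt\mu=0$, in which every policy is trivially optimal, would be set aside.
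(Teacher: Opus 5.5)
Your proposal is correct and follows essentially the same route as the paper. Both arguments pull the expectation over $\pi_t$ inside the sums so that the design variable only meets $\mathbb{E}[\pi_t(a\mid x)^2]\mu(a,x)$, discard the $\pi_l$-independent term, and solve the per-context program $\min\sum_a c(a)/\pi_l(a\mid x)$ on the simplex; the paper does this via Lagrangian stationarity and you via Cauchy--Schwarz, which also certifies global optimality and uniqueness directly, and your remarks on the denominator index and on actions with $\mu(a,x)=0<\tilde{\pi}_t(a\mid x)$ cover points the paper's proof does not address.
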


This result follows directly from the structure of the expected MSE and allows for the simple construction of a \emph{plug-in pseudo-target policy} in the logging policies introduced above. Intuitively, uncertainty over the target policy is summarized by $\tilde{\pi}_t(a|x)$, which can be treated as a pseudo target when allocating action probabilities at logging time. We next derive a corresponding plug-in construction for $\mu$ in settings where reward estimates are noisy.


\subsection{Noisy $\mu$}

For applications like recommender systems, the logging policy designer may only have partial knowledge of the reward function $\mu$. In these cases, the designer will typically have a model-based estimate $\hat{\mu}_l$ learned from data previously collected. For instance, one may use a machine learning model that predicts the probability of a user streaming a recommendation conditional on it being impressed and given contextual features of items and the user. Even if the training dataset is large and has broad coverage, $\hat{\mu}_l$ will still tend to deviate from the true $\mu$. Likewise, the target policy will often be derived from noisy (but potentially different) model estimates $\hat{\mu}_t$. 

In Figure \ref{fig:mse_in_noise}, we show how using $\hat{\mu}_l$ as a surrogate for $\mu$ can affect the resulting MSE for OPE. We simulate the performance of logging policies under uncertainty in the reward function $\mu$ by computing the MSE of target policy value estimates across a range of noise levels in reward predictions. In the simulations, we draw $\mu$ according to a geometric decay in reward probabilities across actions, as described in Online Appendix~\ref{app:simulations}. We produce noisy reward predictions as $\hat{\mu}_l(a,x) = \epsilon(a,x) \mu(a,x)$, where $\epsilon(a,x) \sim \mathcal{N}(1,\sigma)$ adds multiplicative noise and heteroskedasticity in the reward predictor. To generate a target policy $\pi_t$, we analogously construct $\hat{\mu}_t(a,x) = \epsilon(a,x)\,\mu(a,x)$.\footnote{The standard deviation of this noise is chosen empirically to be .25 to ensure a ``good'' (see the figure) but not perfect target policy.} We then define the target policy to evaluate, $\pi_t$, to place equal probability mass on the top 30 actions ranked by $\hat{\mu}_t(\cdot\mid x)$ for each context $x$. This target policy reflects the usual desire in recommendation systems of balancing immediate reward accrual with exploration and diversity of recommendations, and is in the spirit of the top-$k$ policy we consider in the next subsection. Note that a fully greedy (top-1) target policy would instead coincide with the optimal logging policy, as discussed earlier. Full simulation details are provided in Online Appendix~\ref{app:simulations}.

Figure \ref{fig:mse_in_noise} shows the MSE for estimating the value of $\pi_t$ from various logging policies, as a function of the level of noise $\sigma$ in the reward predictor $\hat{\mu}_l$. We observe that the MSE of off-policy evaluation under this logging policy varies with the precision of $\hat{\mu}_l$, and may be lower, equal to, or higher than that of on-policy evaluation. Still, some excess error relative to on-policy evaluation is generally tolerable, as off-policy evaluation is often preferred or necessary due to practical constraints. Nonetheless, we show in the next subsection that the designer can mitigate this risk of off-policy evaluation by shrinking the reward predictions toward a per-context mean, reducing the effective noise in the off-policy evaluations from optimized logging policy designs compared to plugging-in reward estimates directly.

\begin{figure}[ht]
\FIGURE
    {
    \centering
    \includegraphics[width=9cm]{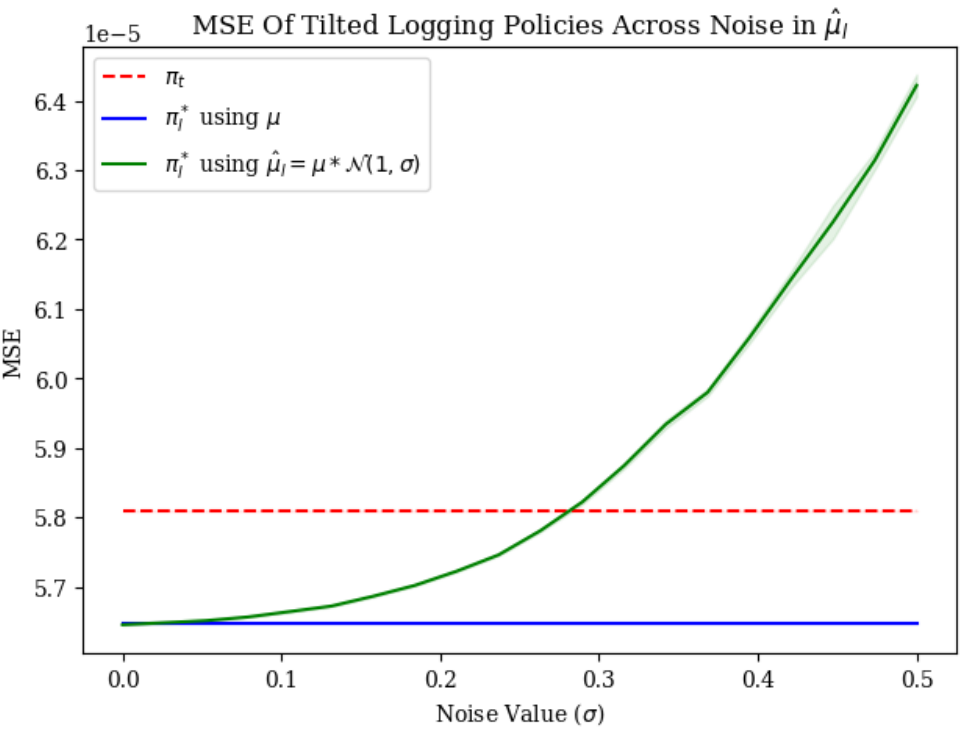}
    }
    {
    MSE as a function of the level of noise in the reward estimates $\hat{\mu}$.
    \label{fig:mse_in_noise}
    }
    {
    The (red) dashed line shows the MSE of on-policy estimation. Estimation with the ``optimal" logging policy can be either better or worse than on-policy estimation, depending on how noisy the reward estimates are, with a convex shape of the expected error in the noise in reward estimates.
    }
\end{figure}


\subsection{Posterior Shrinkage}

When the reward function $\mu$ is not known exactly, it is often suboptimal to use $\hat{\mu}_l$ as a plug-in estimator for designing logging policies. Intuitively, the uncertainty in $\mu$ increases as the error in $\hat{\mu}_l$ grows. In the extreme where noise dominates variation in estimates, $\hat{\mu}_l$ carries little information about $\mu$. Allocating probability mass toward such estimates could produce logging policies that perform worse than uniform randomization.

This observation motivates using some degree of \textit{posterior shrinkage}: we want our reward predictor to shrink towards a per-context \emph{average} the more uncertain its reward prediction is for a given context-action pair. This form of shrinkage is closely related to empirical Bayes estimators under hierarchical models, where noisy conditional point estimates are pulled to a common mean to reduce the MSE of an estimator \citep{robbins1956empirical, efron1973stein}. We now derive a practical procedure for implementing posterior shrinkage in logging policy designs for off-policy evaluation via the IPW estimator. This enables the designer to regularize noisy reward estimates toward more stable logging policies.

\subsubsection*{Theoretical derivation of shrinkage weight}

Optimal shrinkage varies in how reward probabilities are distributed per context and how noise is distributed in reward estimates. We make standard assumptions on each to make this problem tractable. To arrive at a simple and interpretable shrinkage rule, we adopt the following hierarchical Gaussian prior. First, we assume that the reward predictor is Gaussian distributed about the underlying reward function with fixed variance $\varepsilon^2$ representing noise:
\[
\hat{\mu}(a,x) \sim \mathcal{N}\left(\mu(a,x), \varepsilon^2 \right).
\]
We then assume that the reward function $\mu(a,x)$ itself is normally distributed about the per-context, across-action mean reward $m(x)$ with variance $\sigma^2$:
\[
\mu(a,x) \sim \mathcal{N}\left(m(x), \sigma^2 \right).
\]
This yields the posterior
\[
\mu(a,x)|\hat{\mu}(a,x) \sim \mathcal{N}\left( \frac{\frac{\hat{\mu}(a,x)}{\varepsilon^2}+\frac{m(x)}{\sigma^2}}{\frac{1}{\varepsilon^2}+\frac{1}{\sigma^2}}, \frac{1}{\frac{1}{\varepsilon^2}+ \frac{1}{\sigma^2}}\right).
\]
Rewriting, we get the posterior mean
\[
\mathbb{E}[\mu(a,x)|\hat{\mu}(a,x)]=\frac{\sigma^2\hat{\mu}(a,x)+\varepsilon^2m(x)}{\varepsilon^2 + \sigma^2}.
\]
Hence, given the reward predictor, the expected value of the underlying reward function is a linear combination of the predictor $\hat{\mu}$ and the true per-context expectation $m(x)$. 

Letting $\tilde{\mu}\coloneqq\mathbb{E}[\mu(a,x)|\hat{\mu}(a,x)]$, we have the form $\tilde{\mu}(a,x)=(1-w)\hat{\mu}(a,x)+wm(x)$ for some shrinkage weight $w\in[0,1]$ based on the ratio of the estimation variance $\varepsilon^2$ and cross-action variability $\sigma^2$; that is, $w = \varepsilon^2/(\varepsilon^2+\sigma^2)$.

Figure~\ref{fig:shrinkage_mse_weight} illustrates the effect of using the posterior shrinkage predictor $\tilde{\mu}$ for different values of the shrinkage weight $w$. We see that when noise is present in $\hat{\mu}$, moderate shrinkage toward the per-context mean reward reduces MSE for estimating the target policy value, and may even produce estimates of the target policy's value that are more accruate than running the target policy and calculating empirical reward (i.e., on-policy evaluation) .\footnote{Simulation parameters are presented in Online Appendix \ref{app:simulations}.} As the shrinkage weight approaches $1$, the shrinkage effect cancels out across actions and the correction toward the mean reward disappears.

However, the shrinkage weight of the policy must be applied \textit{ex ante}, prior to data collection under the logging policy. As a result, empirical grid search over values of $w$ is infeasible in practice. We therefore require a principled method for selecting the shrinkage weight based on information available at the time of logging policy design. We now discuss how to do so using evaluation-set level metrics on $\hat{\mu}$. 

With full information about $\mu$ and $\hat{\mu}$, the optimal shrinkage can be solved analytically. In practice, however, we observe only the realized reward $r$ for context-action pairs in prior data, which affects our accuracy of $\hat{\mu}$ and knowledge of $\mu$ across \emph{all} action-context pairs. This reflects the bandit feedback setting and the fundamental problem of causal inference.

To that end, we use proxies $m(x) \approx \bar{\mu}(x)$ and $\varepsilon^2 + \sigma^2 \approx \text{Var}(\hat{\mu}(\cdot\mid x))$ to establish the expectation of the prior of $\mu$ the the total variance of $\hat{\mu}$. We now use logged data to estimate the quality of the estimate function $\hat{\mu}$ with metrics that serve directly as plug-in estimators for computing $w^*$. To estimate $\sigma^2$ directly, we can use observed rewards from any previously collected data, sampled under any logging policy, for which we also have the reward predictions on the same context-action pairs. We then use the following identity, which holds exactly under the mean-zero noise assumption on $\hat \mu$, to approximate the variance of $\mu$:
\[
\text{Cov}(\hat{\mu}(X,A), R) =  \mathbb{E}[\hat{\mu}(X,A) \cdot R] - \mathbb{E}[\hat{\mu}(X,A)]\mathbb{E}[R].
\]
Using the decomposition,
\begin{align}
\mathbb{E}[\hat{\mu}(X,A) \cdot R] 
&= \mathbb{E}[\mathbb{E}\{\hat{\mu}(X,A)\cdot R \mid {\mu}(X,A)\}]\\
 &=\mathbb{ E}[\mathbb{E}\{\hat{\mu}(X,A) \mid {\mu}(X,A)\}]  \cdot  \mathbb{E}[R \mid {\mu}(X,A)]\\
 &= \mathbb{E}[{\mu}(X,A) \cdot {\mu}(X,A)] \\
 &= \mathbb{E}[{\mu}(X,A)^2],
\end{align}
we arrive at the form
\[
\text{Cov}(\hat{\mu}(X,A), R) =  \mathbb{E}[{\mu}(X,A)^2] - \mathbb{E}[\hat{\mu}(X,A)]\mathbb{E}[R].
\]
By the mean-zero noise assumption on $\hat{\mu}$ and with $R$ drawn from i.i.d. Bernoulli trials, we then have
\begin{align}
\text{Cov}(\hat{\mu}(X,A),R)& =  \mathbb{E}[{\mu}(X,A)^2] - \mathbb{E}[{\mu}(X,A)]^2\\
& = \text{Var}[{\mu}(X,A)]\\
&= \sigma^2.
\end{align}
Specifically, we leverage an auxiliary dataset $\{(x_i,a_i,r_i)\}_{i=1}^M$ of realized context, action, reward triples, over which $\hat{\mu}(x_i,a_i)$ is defined for all $i \in \{1,\ldots,M\}$. So, we can approximate the optimal shrinkage weight $w^*$ empirically as
\begin{align}\label{eq:w_opt}
w^* &= 1-\frac{\text{Cov}[\hat{\mu}(X,A),R]}{\text{Var}[\hat{\mu}(X,A)]}\\
&\approx 1-\frac{\widehat{\text{Cov}}_M[\hat{\mu}(x_i,a_i),r_i]}{\widehat{\text{Var}}_M[\hat{\mu}(x_i,a_i)}
\end{align}
where the hat and sample size subscripts denote that these are the empirical covariance and variance. This yields the posterior-shrinkage predictor of rewards  as
\[
\tilde{\mu}^*(a|x) = (1-w^*)\hat{\mu}(a|x) + w^*\bar{\mu}(x).
\]
The data used for computing Equation~\eqref{eq:w_opt} should be held out from the data used to train $\hat \mu$ to avoid optimistic bias in the estimate of $w^*$. Otherwise, the fit of $\hat \mu$ will inflate the empirical covariance in the numerator, biasing $w^*$ downward. 

Figure \ref{fig:shrinkage_noise} demonstrates the efficacy of this approach through simulation, using simulation parameters similar to those described above, which are described in detail in Online Appendix \ref{app:simulations}. We see in that even when the underlying distribution of $\mu$ and $\hat{\mu}$ is not Gaussian, this weight selection rule is still effective and can still yield MSE that is better than on-policy evaluation.

\begin{figure}[ht]
\FIGURE
    {
    \centering
    \begin{subfigure}[b]{0.425\textwidth}
    \centering
    \includegraphics[width=\linewidth]{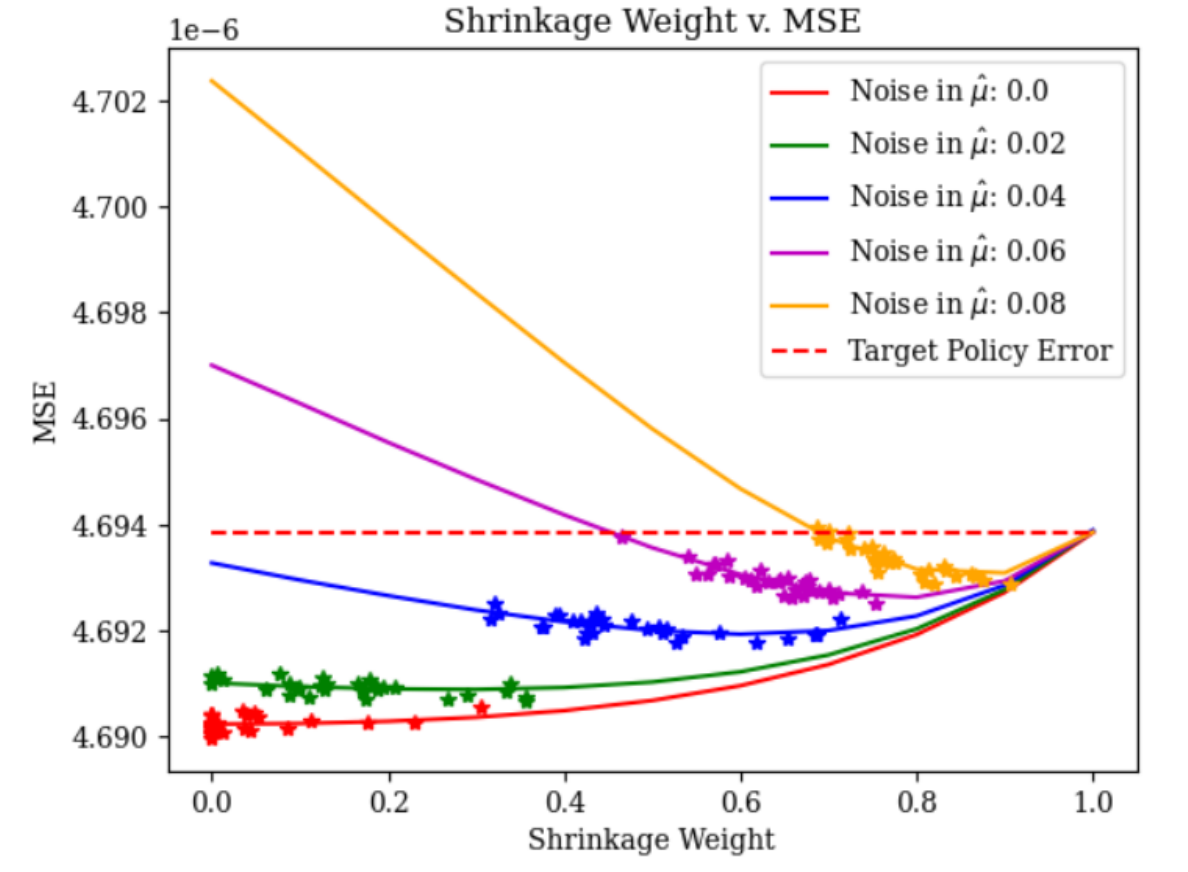}
    \caption{
    MSE as a function of shrinkage weight
    }
    \label{fig:shrinkage_mse_weight}
  \end{subfigure}
  \hfill  
  \begin{subfigure}[b]{0.485\textwidth}
    \centering
    \includegraphics[width=\linewidth]{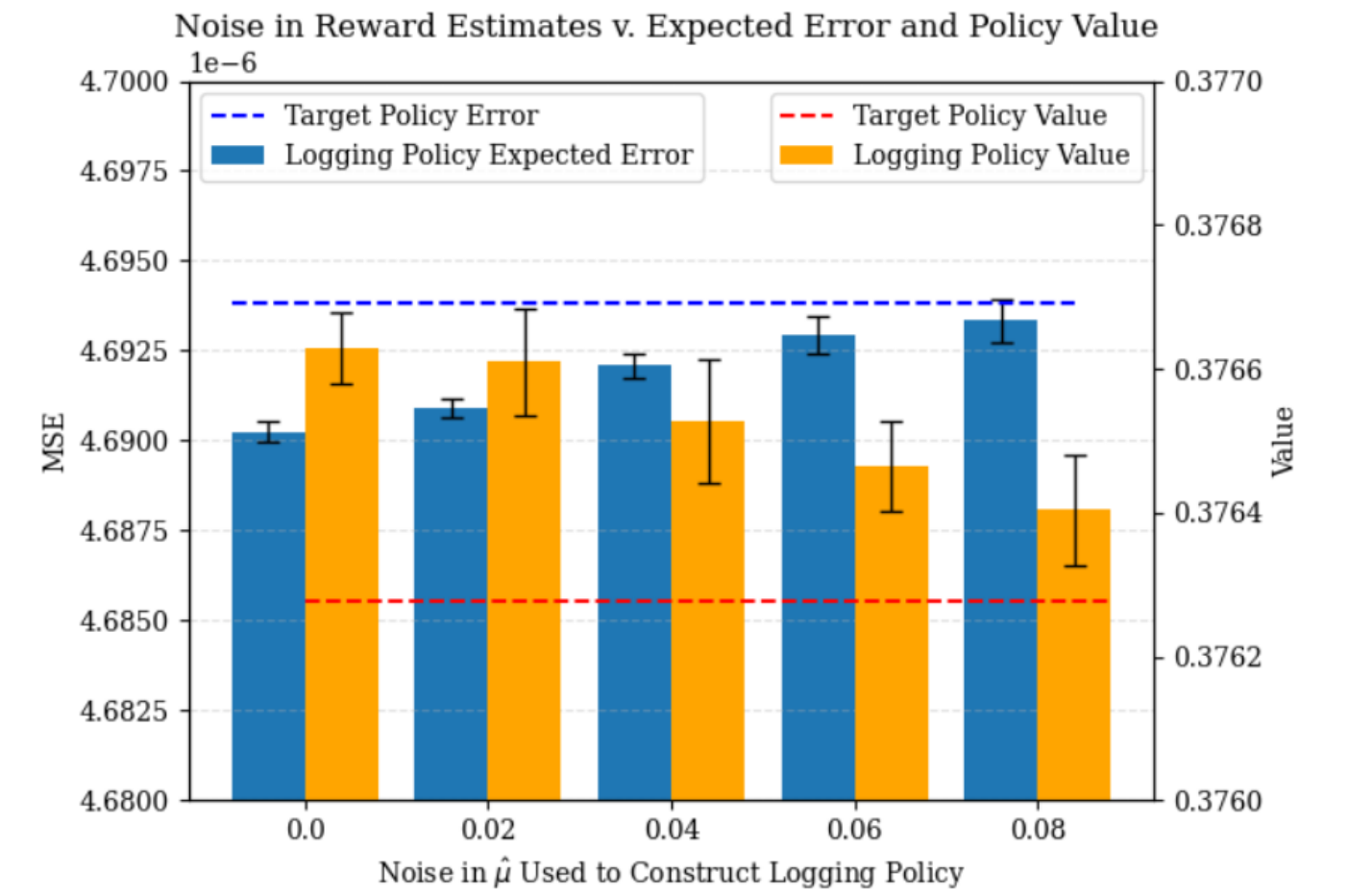}
    \caption{
    MSE and policy value under shrinkage
    }
    \label{fig:shrinkage_noise}
  \end{subfigure}
    }
    {
    Effect of posterior shrinkage and reward prediction noise on MSE and policy value
    \label{fig:shrinkage}
    }
    {
    A demonstration of $\text{MSE}(\pi_t|\pi_{l,\tilde{\mu}}^*)$ and $V(\pi_{l,\tilde{\mu}}^*))$ with $\pi_{l,\tilde{\mu}}^*$ constructed according to various shrinkage values and noise values in estimates. Shrinkage weights $w^*$ are constructed according to our approximate solution.
    Panel (a) plots the expected MSE of the IPW estimator for estimating $V(\pi_t)$ as a function of the shrinkage weight $w$, across different noise levels in the reward predictor $\hat{\mu}$. Stars represent the shrinkage weight computed according to the derived shrinkage rule and associated MSE of the derived optimal policy across each of $30$ trials. ``Target Policy Error'' represents the MSE of on-policy value estimator, obtained as the sample average observed reward on data from the target policy. 
    Panel (b) reports the MSE of $\hat{V}_{\mathrm{IPW}}(\pi_t\mid\pi_{l,\tilde{\mu}})$ and the logging policy value $V(\pi_{l,\tilde{\mu}})$ when using the reward predictor $\tilde{\mu}^*$ constructed according to the empirically optimal shrinkage weight $w^*$, compared to on-policy evaluation of $\pi_t$, and across noise in the estimates of $\hat{\mu}$ (30 trials per noise level). Even across levels of noise, the optimal logging policy yields greater expected reward during the logging period and lower MSE in estimating the target policy value than on-policy evaluation, both as byproducts of the MSE-minimizing design. Simulation details are provided in Online Appendix~\ref{app:simulations}.
    }
\end{figure}


  

\section{Discussion}\label{sec:Discussion}
In some settings, designers may have limited ability to specify policy classes. We now provide high-level guidance on how a designer can choose which logging policy to implement, viewed through the lens of the reward-coverage tradeoff described in Section \ref{sec:reward_coverage_tradeoff}.




\subsection{Restrictions on $\Pi_l$}\label{sec:restrictions}

In practical settings, there may be organizational or technical reasons why the designer is not able to deploy or justify deploying the exact optimal logging policy for minimizing MSE. For example, a fully greedy optimal logging policy may not be deployable due to the firm's desire to maintain diversity in recommendations. Alternatively, a logging policy that strays far from known high-reward actions may be deemed risky. To capture such constraints, we consider logging policies indexed by a single ``greediness' parameter that trades off mass on high-reward actions against broader coverage of actions the target policy may take. We next demonstrate how MSE varies with this greediness parameter across two logging policy classes.

These classes describe mappings from reward estimates to policies. Importantly, these classes can be used to define target policies in addition to logging policies. Their use as target policies describe some simple model of production recommendations with a dial balancing exploration and exploitation that could be made by the downstream, target policy designer. However, our analysis focuses on the efficacy of these classes for logging policy design.

We define a (fully) greedy policy to be one that deterministically selects the action per context $x$ with highest predicted reward; that is $A = \argmax_{a \in \mathcal A}\hat{\mu}_t(a, x)$ for all $x\in \mathcal X$. We begin by examining how the MSE from off-policy evaluation with the IPW estimator varies as a logging policy shifts from fully greedy to uniformly random. We refer to policies at intermediate values of this parameter as \textit{soft-greedy}. To assign this level of greediness to a scalar value, we define three related policy classes: \textit{top-$k$}, \textit{power-normalized} and \textit{softmax}. 

\subsubsection*{top-$k$ Policy Class}
The \textit{top-k} policy space $\Pi_{TK}(\hat{\mu})\coloneqq\Big\{ \pi_k \colon k \in 1, \ldots,|\mathcal{A}| \Big\}$, is the set of policies $\pi_k$  that assign equal weight to the $k \leq |\mathcal A|$ actions with the highest values of $\hat{\mu}(A,X)$ for each context $X=x$. Thus, $\pi_{k=1}$ is the fully greedy policy and $\pi_{k=|\mathcal A|}$ is the fully uniform policy.

\subsubsection*{Softmax Policy Class}
The \textit{softmax} policy space  smoothly adjusts for greediness: $\Pi_{SM}(\hat{\mu}) \coloneqq \Big\{\pi_\alpha, \pi_{\alpha}(a|x)=\frac{e^{\alpha\hat{\mu}(a,x)}}{\sum_{a' \in \mathcal{A}}e^{\alpha\hat{\mu}(a',x)}} \colon \forall \alpha \in [0,\infty)\Big\}$. Here, $\alpha = 0$ corresponds to the uniform policy while $\alpha = \infty$ equates to a greedy policy. Because this class of logging policies places probability mass on all actions for $\alpha \in [0,\infty)$, softmax policies ensure the IPW estimator $\hat V(\pi_t \mid \pi_l)$ is unbiased for any target policy $V(\pi_t)$ even if this mass is very small. This policy is directly related to Boltzmann sampling.

\subsubsection*{Power-normalized Policy Class}
The \textit{power-normalized} policy space also smoothly adjusts for greediness. The class is defined as  $\Pi_{PN}(\hat{\mu}) \coloneqq \Big\{\pi_d, \pi_{d}(a|x)=\frac{\hat{\mu}^d(a,x)}{\sum_{a' \in \mathcal{A}}\hat{\mu}^d(a',x)} \colon \forall \;d \in [0,\infty)]\Big\}$. Here, $d = 0$ corresponds to the uniform policy while $d= \infty$ equates to a greedy policy. This policy also enjoys the same property of ensuring weak overlap with any target policy, as long as reward estimates are bounded away from 0.

\subsection{Simulations Across Logging Policy Classes}

We simulate how the choice of greediness affects the MSE of IPW estimates using a simulated environment, detailed in Online Appendix \ref{app:simulations} . Here, $\pi_t$ is defined to be the \textit{top-200} policy based on $\hat{\mu}_t$.

Figure \ref{fig:softgreedy_errors} shows the results for small ($N=1000$) and large ($N=100,000$) sample-size settings, where the target policy $\pi_t$ is a top-$200$ policy based on $\hat{\mu}_t$ and a large action space of $|\mathcal{A}|=100,000$. We see that even these substantially constrained logging policy classes can achieve strong performance, approaching the MSE of the theoretical optimum, and in some cases matching or improving upon on-policy evaluation as a natural benchmark when the greediness parameter is chosen appropriately.

In both the small-sample and large-sample settings, well-tuned \textit{top-$k$} policies substantially outperform \textit{softmax} policies, with optimal values of \textit{k} around $200$. This result demonstrates that narrowing the logging support toward high-reward actions---intentionally incurring some bias---can reduce overall MSE relative to policies that enforce overlap (i.e., \textit{softmax} and\textit{ power-normalized}). As the sample size increases, the policies that enforce overlap perform relatively better. This is driven by the fact that bias does not shrink in sample size, but variance does. This phenomenon is also what leads the optimal $k$ value to be larger in the large sample-size setting. Likewise, power-normalized policies also perform well in the large-sample regime. With appropriate tuning, they approach the performance of top-$k$ while controlling worst-case MSE over a reasonable range $d$. Yet, for any finite sample size, as $d$ and $\alpha$ approach $\infty$ (i.e. approaching greedy), MSE approaches infinity. The strong performance of top-$k$ policies and bounded error near greedy suggest that truncating support can improve MSE when logging propensities are small relative to sample size. This findings is consistent with the sufficiency condition in Proposition \ref{prop:sampling_sufficiency}.

More generally, these results show that simple soft-greedy approaches to logging policy design can yield substantial improvements in off-policy evaluation relative to uniform logging, although their performance benefit depends heavily on the choice of greediness parameter. Moreover, the inability for the biased top-$k$ policy to reduce error relative to the variance-optimal policy described in Proposition $\ref{prop:known_mu_known_pit}$ suggests that restricting the optimal policy derivation to variance minimization does not sacrifice the performance gains available from intentionally incurring bias. We examine further heuristic variants in Online Appendix \ref{app:simulations}.

\begin{figure}[h!]
\FIGURE
    {
    \centering
    \begin{subfigure}[b]{0.475\textwidth}
        \includegraphics[width=\linewidth]{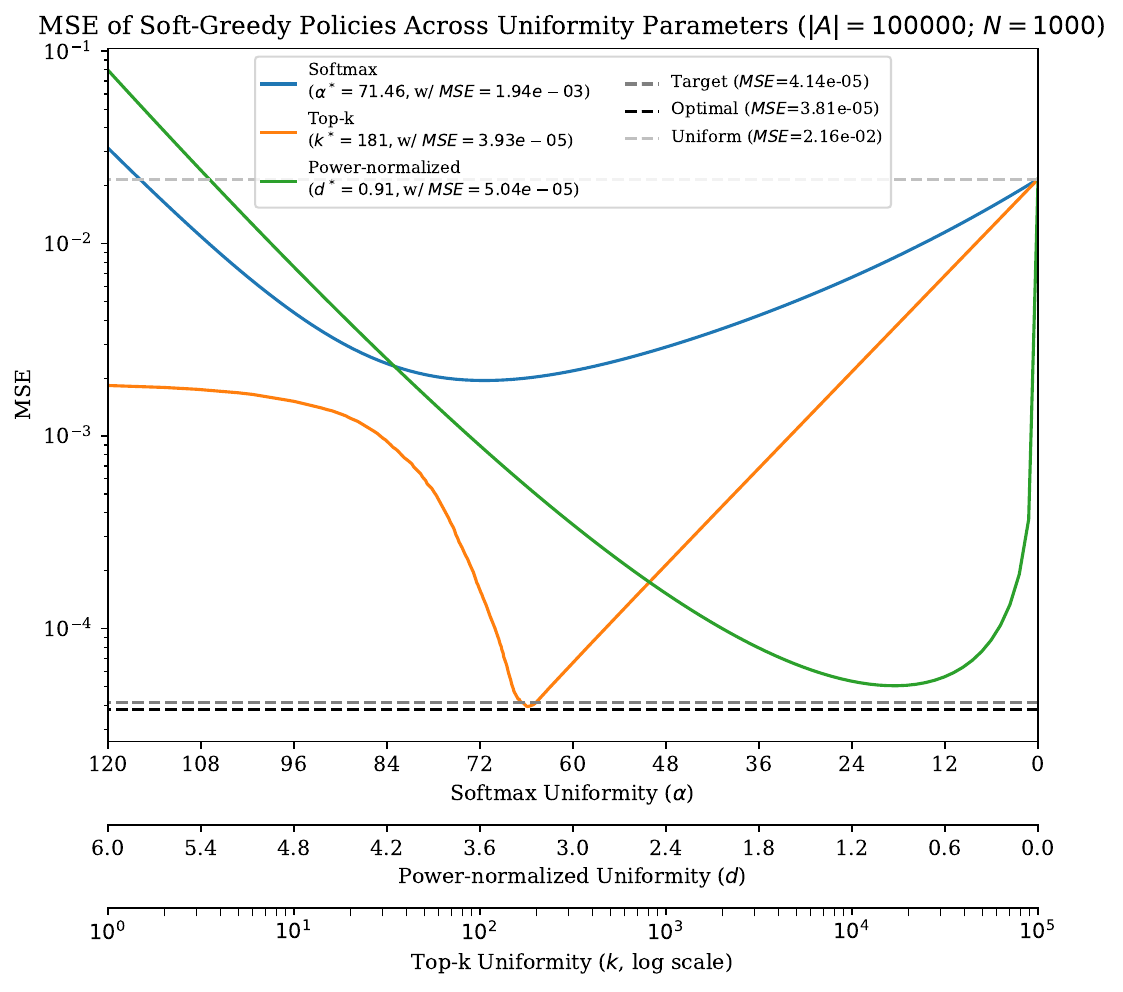}
        \caption{\textit{Small action space} $|\mathcal{A}|=1,000$.} 
    \end{subfigure}
    \hfill
    \begin{subfigure}[b]{0.475\textwidth}
        \includegraphics[width=\linewidth]{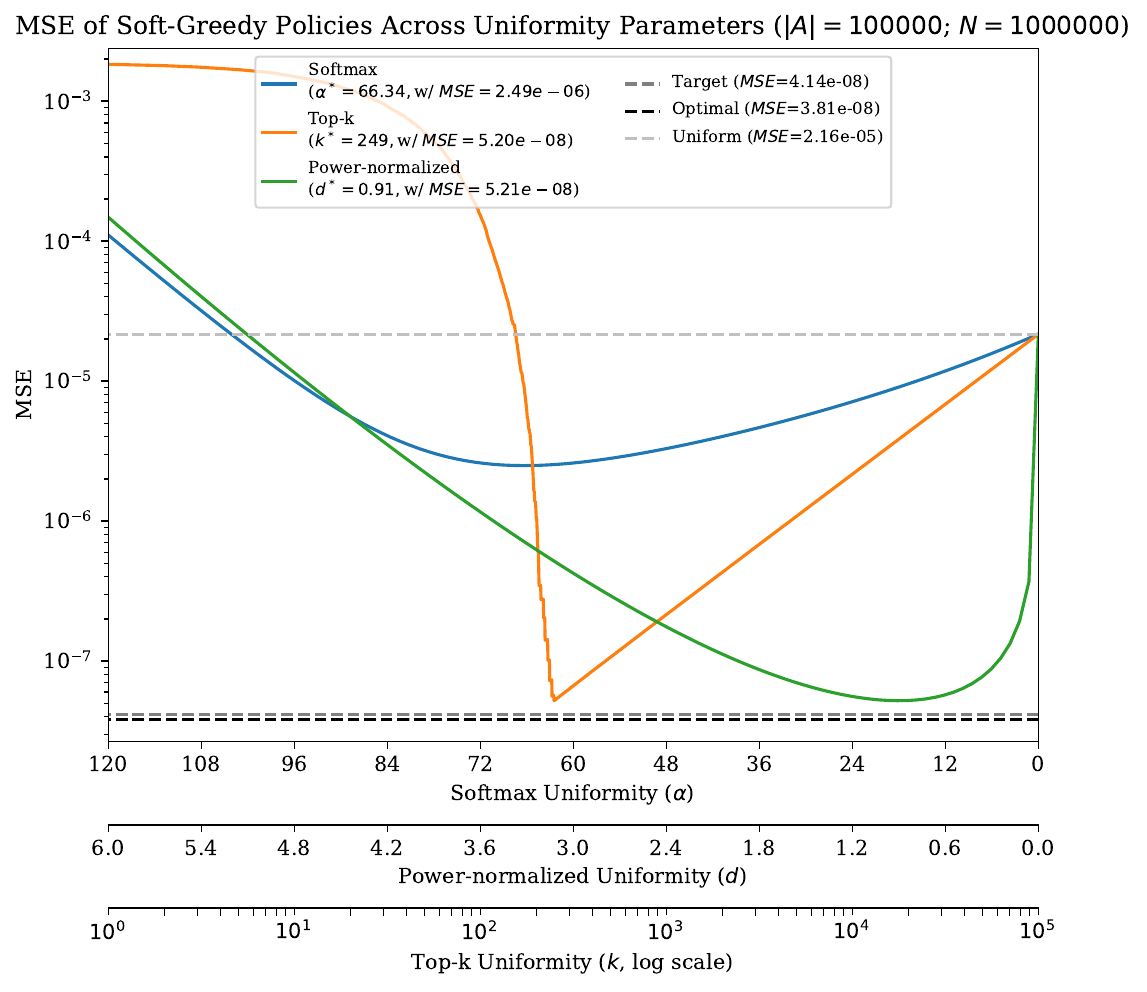}
        \caption{\textit{Large action space} $|\mathcal{A}|=100,000$.}
    \end{subfigure}
    }
    {
    MSE of IPW estimator for soft-greedy logging policy classes \textit{top-$k$} and \textit{softmax}. 
    \label{fig:softgreedy_errors}
    }
    {
     Simulation evidence of the MSE of the IPW estimator as a function of the greediness parameter for the \textit{top-$k$}, \textit{softmax}, and \textit{power-normalized} logging policy classes. The left edge of the \textit{x axis} of each panel corresponds to (near) greedy logging, while the right edge corresponds to uniform random logging. The MSE is measured with respect to the value (expected reward) of a \textit{top-$k$} ($k=200$) target policy based $\mu$. "Target" shows the MSE of running the target policy, if it were known.
     Panel (a) shows that the MSE-minimizing top-$k$ logging policy occurs at $k^* = 181$, yielding $\text{MSE}=3.93 \times 10^{-5}$; 
     the MSE-minimizing softmax parameter is $\alpha^* = 71.46$, yielding $\text{MSE}=1.94 \times 10^{-3}$; and the SE-minimizing power-normalized degree is $d^* = .91$, yielding $\text{MSE}=5.04 \times 10^{-5}$.
     Panel (b) shows analogous results for the large sample-size setting, with MSE-minimizing top-$k$ being $k^* = 249$ for the top-$k$ logging policy, yielding $\text{MSE}=5.20 \times 10^{-8}$; the MSE-minimizing $\alpha$ value for softmax is $\alpha^* = 66.34$, yielding $\text{MSE}=2.49 \times 10^{-6}$; and the MSE-minimizing degree $d$ for softmax is $d^* = .91$, yielding $\text{MSE}=5.21\times 10^{-8}$.
     Full simulation details are in Online Appendix \ref{app:simulations} 
    }
\end{figure}

Selecting the greediness parameter in practice can be guided by the simulation results above. When the target policy class is known, $k$ can be initialized at the target policy's approximate support size, as the MSE-minimizing $k$ closely tracks this quantity across both sample-size regimes. For power-normalized policies, $d$ can be tuned analogously via held-out MSE evaluation. For softmax policies, $\alpha$ lacks a direct analytical interpretation and should be selected via held-out MSE evaluation; however, given the consistently weaker performance of softmax relative to top-$k$ and power-normalized across our simulations, the latter two classes are generally preferable when the action space is large.

When the target policy is unknown, a moderately soft-greedy policy is preferable to a fully greedy one, since bias from support truncation does not diminish in sample size. In all of these settings, a natural approach is to treat the parameter as a hyperparameter and select it by computing IPW estimates of a reference policy's value on historically collected data, choosing the value that minimizes the resulting MSE. This requires only prior logged data and reward predictions --- both available before the logging period begins.


\section{Related work}

The closest body of related work is the nascent stream examining the design of safe logging policies. In the most closely related work, \citet{Wan2022Safe} describe the design of safe logging policies to evaluate bandit policies offline. Here, safety is defined as the value of the logging policy (average reward accrual) being no less than some fixed proportion of the value of a baseline policy. The authors highlight standard approaches, involving a mixture between some given policy and a uniform exploration logging policy. Under hard safety and sample budget constraints, the authors derive optimal logging policies for various bandit agents to best study the difference in policy values relative to some baseline policy. 

Other works study the augmentation of logging policies for better OPE. \citet{tucker2023mval} is most similar in scope to ours, but describes a method for adding observations. The resultant method confirms the intuition behind our solution to the fully observable problem. Demonstrating the sub-optimality of an $\epsilon$-mixing approach, \citet{zhu_safe_2022} cast logging policy design as an optimal experimental design problem. These authors design an optimal solution derived from \textit{G-optimal design} which describes the quality of a logging policy in terms of information gain. Their optimal policy follows a ``water-filling'' method, ensuring the minimal action-context pairs have equal propensity scores. However, this derived policy is tailored to off-policy learning, not to off-policy evaluation.

\citet{Li2023Optimal} study optimal treatment allocation for OPE in sequential decision-making problems. The authors here propose allocating more samples to actions where IPW estimates would be noisy, due to the target policy or due to reward variance. Our approach follows a similar thread, where we highlight that a \textit{reward-coverage tradeoff} needs to be balanced in designing a logging policy; however, we note that mass needs to be allocated explicitly toward high-reward items, not high-variance-in-reward items. 

An emerging literature focuses on developing estimators for use in experimental data under various assumptions. \citet{sakhi2025practical} describe a more efficient estimator of the ATE of treatment policies when run via A/B tests. This estimator leverages overlap in the treatment policies being tested, a key factor we consider when designing logging policies. \citet{li15toward} consider the worst case estimation error in a similar setup to one of our ``informational settings.'' Their results focus on describing bounds on minimax squared error across several estimators; we focus on the optimal design of logging policies to achieve minimal squared error. \citet{Ma2022Minimax} study minimax optimality in a similar setting but propose a switch estimator to achieve an optimal competitive ratio on this risk. Their work looks at this risk under varying levels of knowledge about the logging policy, whereas our work designs the logging policy under varying levels of knowledge about the setting.

Another part of this literature focuses on how to conduct statistical inference on data that has been adaptively collected via a bandit algorithm \citep{zhang2020inference, shen2024doubly, zhang2021statistical, bibaut2025demystifying, bibaut2021post, chen2021statistical}. A closely related stream of work focuses specifically on how to conduct OPE in contextual bandits \citep{wang2017optimal, zhan2021off}. Our work differs in that we focus on how to collect data for a given OPE estimator. Because the logging policy is non-adaptive, the OPE estimator can be applied as usual while retaining its properties for statistical inference.

Other recent research focuses on the online problem of simultaneous regret minimization and statistical inference on treatment arms, and the trade-off between the two objectives \citep{duan2024regret, simchi2023multi}. We do not focus on regret minimization or online inference. Nonetheless, our result that uniform logging is worst-case optimal for OPE (yet can accrue arbitrarily poor rewards) hints at a broader trade-off between data collection performance and what can be learned from it.

Taken together, the above literature emphasizes either safety constraints, designing efficient estimators, or a trade-off between regret minimization and statistical inference. Our contribution instead centers on the design of logging policies to minimize policy-value estimation error under informational constraints. This design perspective connects our problem to two foundational literatures.

The general problem of choosing assignment probabilities to improve estimator precision traces back to optimal experimental design, beginning with \citet{wald1943efficient} and developed through geometric and information-based criteria \citep{elfving1952optimum, kiefer1959optimum}. For recent surveys through an optimization lens, see \citet[][Sections 1 and 2]{xiong2025automated} and \citet{zhao2024experimental}. In this classical literature, treatment probabilities are selected to minimize the variance of an estimator of the average treatment effect, typically under a parametric outcome model. In contrast, our work considers the design of logging policies for off-policy evaluation, with the objective of minimizing MSE of a policy value estimate based on inverse propensity weighting. As such, we do not aim to maximize precision of an average contrast in potential outcomes, but rather minimize the estimation error of the average potential outcome under a counterfactual treatment assignment mechanism. Our logging policy solutions can be viewed as contextual analogs of unequal-probability sampling for the Horvitz–Thompson estimator \citep{horvitz1952generalization}, where action probabilities are chosen to control MSE. From an importance-sampling perspective, our results reflect the trade-off between variance inflation due to small proposal probabilities and bias from truncating the support of the proposal distribution \citep[e.g.,][]{ionides2008truncated}.

A second connection is to empirical Bayes methods for statistical decision problems, initiated by \citet{robbins1956empirical, robbins1964empirical}. In the empirical Bayes framework, parameters are treated as draws from an unknown prior, estimated from data and then substituted into the Bayes decision rule. Under an MSE criterion, optimal solutions to this problem formulation yield shrinkage estimators that reduce risk relative to naive plug-in estimation \citep[see, e.g.,][]{efron1973stein}. In our setting, direct plug-in estimators inflate the variance of induced inverse propensity weights when reward parameters are estimated with noise, but this can be mitigated via posterior shrinkage applied to the reward model's estimates. As such, our solution admits an empirical Bayes interpretation that reduces MSE of the IPW estimator while improving reward accrual during logging.


\section{Conclusion and Future Work}

This paper presents a unified framework for the optimal design of logging policies for off-policy evaluation. The central insight is that the logging policy is not merely a data collection mechanism. Instead, it is a design choice with direct consequences for the statistical quality of downstream policy evaluation. To this end, we frame logging policy design as the problem of minimizing the MSE of the IPW estimator, which is a simple and model-free estimator. Our analysis shows how firms can make principled, theoretically grounded choices about how to collect data for OPE, rather than relying on ad hoc heuristics, safety-only constraints, or uniform randomization as a conservative fallback.

Beyond this takeaway for practice, our work makes several contributions to the literature. First, it provides a clean sufficiency condition on when an action should be covered by the logging policy, while noting that there is no simple closed-form expression for when an action should be included. We describe a fundamental \textit{reward-coverage} tradeoff that needs to be balanced when designing a logging policy for use in OPE. We then provide optimal solutions to the problem of minimizing MSE of off-policy evaluation with the IPW estimator in settings of informational extremes. As part of our results, we note an important byproduct of the optimal logging policy: in certain settings, the optimal logging policy accrues higher expected reward than the target policy during the logging period, and yields an IPW estimate with lower MSE than the empirical on-policy estimator. The paper extends these findings in informational extremes to intermediate settings, and describes the construction of tractable logging policies for settings where reward probabilities are not exactly known. Finally, the paper provides more general insights into balancing the reward-coverage tradeoff when the theoretically optimal logging policy cannot be deployed. In particular, we provide approximate, soft-greedy methods for balancing these considerations and demonstrate that these can attain strong performance by balancing logging mass between high-reward items and coverage across items the target policy is likely to sample.

Taken together, our results provide a principled basis for logging policy design. The framework clarifies when and why different logging strategies are appropriate, and gives firms concrete tools for improving offline experimentation efforts within their operational and organizational constraints.

\paragraph{Limitations.} The results of this paper rely on several assumptions. The first assumption is that the IPW estimator is used for the off-policy evaluation. By being free of a reward model, this estimator decouples the design stage from the estimation stage. For model-based estimators such as the direct method or doubly robust, complications may arise when the outcome model used in estimation is correlated with the reward model used to design the logging policy. Extending our framework to such estimators is left for future work. Another limitation is that our theoretical results rely on weak overlap, leading our MSE minimization to reduce to variance minimization. In the sample limit, variance dominates bias, so the derived policies minimize MSE. In small-sample regimes, however, inducing bias can improve MSE relative to the variance-optimal policy, as we demonstrate through counterexample.

\paragraph{Future work.} There are several dimensions to explore further in future work. The first is a generalization of these results to arbitrary real-valued rewards. Similar extensions could expand on more sophisticated recommendation settings, such as slate-based or sequential recommendations. A second direction is optimal parameter selection for common soft-greedy policies.  Finally, exploring logging policy design experimentally could yield insights into the practical performance of our theoretically derived policies.

\printbibliography[title={References}]

\clearpage
\thispagestyle{empty}

\vfill
\begin{center}
  {\Huge\bfseries Online Appendix}
\end{center}
\vfill

\clearpage
\begin{APPENDICES}

\section{Additional Proofs}\label{app:proofs}

\subsection{Proof of Proposition \ref{prop:sampling_sufficiency}}
\label{app:sampling_sufficiency}
\begin{proof}
    Fix a context $x$ and set of actions $\mathcal{A}_x$ which receive support by $\pi_l(\cdot|x)$. Now, we show that the addition of some action $a' \notin \mathcal{A}_x$ to $\mathcal{A}_x$ will reduce MSE if $\pi_l(a'|x)>\frac{1}{\mu(a', x)(n\Pr(x')+1)}$.  If mass were placed on $a'$ meeting this criteria, we have the following derivation:
    \begin{align*}
        \pi_l(a'|x)&>\frac{1}{\mu(a', x')(n\Pr(x')+1)}\\
        \iff (N\Pr(x')+1)\mu(a', x')&>\frac{1}{\pi_l(a'|x')}\\
        \iff (N\Pr(x')+1)\pi_t(a'|x')^2\mu(a'|'x)^2&>\frac{\pi_t(a'|x')^2}{\pi_l(a'|x')}\mu(a', x')\\
        \iff N\Pr(x')\pi_t(a'|x')^2\mu(a', x')^2&>\frac{\pi_t(a'|x')^2}{\pi_l(a'|x')}\mu(a', x')-\pi_t(a'|x')^2\mu(a', x')^2\\
        \iff \pi_t(a'|x')^2\mu(a', x')^2&>\frac{1}{N\Pr(x')}\left[\frac{\pi_t(a'|x')^2}{\pi_l(a'|x')}\mu(a', x')-\pi_t(a'|x')^2\mu(a', x')^2\right].\\
        \iff \Pr(x')^2\pi_t(a'|x)^2\mu(a', x')^2&>\frac{\Pr(x')}{N}\left[\frac{\pi_t(a'|x')^2}{\pi_l(a'|x')}\mu(a', x')-\pi_t(a'|x')^2\mu(a', x')^2\right].
    \end{align*}

    We now manipulate the RHS to show that this quantity is at least the increase in the variance from sampling action $a'$ with probability $\pi_l>\frac{1}{\mu(a', x')(n\Pr(x')+1)}$. For notational simplicity, let $V(\mathcal{X} \setminus \{x'\})\coloneqq \frac{1}{N}\sum_{x \in \mathcal{X}\setminus \{x'\}}\Pr(x)\left[\sum_{a\in\mathcal{A}_{x}}\frac{\pi_t(a|x)^2}{\pi_l(a|x)}\mu(a, x')-\left(\sum_{a\in\mathcal{A}_{x'}}\pi_t(a|x)\mu(a, x')\right)^2\right]$ denote the variance term contributed by all contexts other than $x'$ which are added linearly. The RHS of the above inequality is then
       \begin{align*}
    \Delta\text{Variance}&\coloneqq\frac{\Pr(x')}{N}\left[\sum_{a\in\mathcal{A}_{x'}}\frac{\pi_t(a|x')^2}{\pi_l(a|x')}\mu(a, x')+\frac{\pi_t(a'|x')^2}{\pi_l(a'|x')}\mu(a', x')\right.\\
    &- \underbrace{\left.\left(\sum_{a\in\mathcal{A}_{x'}}\pi_t(a|x')\mu(a, x') + \pi_t(a'|x')\mu(a', x')\right)^2\right] +V(\mathcal{X}\setminus \{x'\})}_{\text{Variance of setting } \pi_l(a'|x')>0}\\
        &- \underbrace{\left(\frac{\Pr(x')}{N}\left[\sum_{a\in\mathcal{A}_{x'}}\frac{\pi_t(a|x')^2}{\pi_l(a|x')}\mu(a, x')-\left(\sum_{a\in\mathcal{A}_{x'}}\pi_t(a|x')\mu(a, x')\right)^2\right]+V(\mathcal{X}\setminus \{x'\})\right)}_{\text{Variance of setting } \pi_l(a'|x')=0 }\\
        &=\frac{\Pr(x')}{N}\left[\frac{\pi_t(a'|x')^2}{\pi_l(a'|x')}\mu(a', x') -\left(\sum_{a\in\mathcal{A}_{x'}}\pi_t(a|x')\mu(a, x') + \pi_t(a'|x')\mu(a', x')\right)^2 \right.\\
        &+ \left.\left(\sum_{a\in\mathcal{A}_{x'}}\pi_t(a|x')\mu(a, x')\right)^2\right]\\
         &\leq\frac{\Pr(x')}{N}\left[\frac{\pi_t(a'|x')^2}{\pi_l(a'|x')}\mu(a', x')-\pi_t(a'|x')^2\mu(a', x')^2\right]
    \end{align*}
    Likewise, we now manipulate the LHS to show that this term is no more than the increase in bias squared---that is, the other component of the MSE decomposition---from not sampling action $a'$. Similarly, let $B(\mathcal{X}\setminus \{x'\})$ correspond to the bias contributed by all contexts other than $x'$, with $B(\mathcal{X}\setminus \{x'\})\coloneqq\sum_{x \in \mathcal{X}\setminus \{x'\}}\Pr(x)\sum_{a\notin\mathcal{A}_x}\pi_t(a|x)\mu(a, x)$. The LHS of the inequality is then
    \begin{align*}
       \Delta\text{Bias}^2 &\coloneqq \underbrace{\left(B(\mathcal{X}\setminus \{x'\})+\Pr(x')\sum_{a\notin\mathcal{A}_x}\pi_t(a|x)\mu(a, x) + \Pr(x')\pi_t(a'|x)\mu(a', x)\right)^2}_{\text{Bias}^2\text{ of setting }\pi_l(a'|x)=0}\\
       -&  \underbrace{\left(B(\mathcal{X}\setminus \{x'\})+\Pr(x')\sum_{a\notin\mathcal{A}_x}\pi_t(a|x)\mu(a, x) \right)^2}_{\text{Bias}^2\text{ of setting }\pi_l(a'|x)>0}\\
       &\geq \Pr(x')^2\pi_t(a'|x)^2\mu(a', x)^2
    \end{align*}
    
    Now, we sandwich these inequalities to arrive at the per-context conclusion:
    \begin{align*}
        \Delta\text{Bias}^2\geq\Pr(x')^2\pi_t(a'|x)^2\mu(a', x)^2&>\frac{\Pr(x)}{N}\left[\frac{\pi_t(a'|x)^2}{\pi_l(a'|x)}\mu(a', x)-\pi_t(a'|x)^2\mu(a', x)^2\right]\geq\Delta\text{Variance}\\
        \Rightarrow \Delta\text{Bias}^2&>\Delta\text{Variance}.
    \end{align*}

\end{proof}

\subsection{Proof of Proposition \ref{prop:unknown_mu_unknown_pit}}\label{app:proof_uniform}
\begin{proof}
    Denote $\pi^{*}$ to be the logging policy solution to $\min\limits_{\pi_l \in \Pi} \max\limits_{\mu \in \mathcal{M}, \pi_t \in \Pi}\;\mathrm{MSE}[\hat{V}_{\text{IPW}}(\pi_t | \pi_l)]$. We first demonstrate that for any context $x$, $\pi^{*}(\cdot | x) = \pi_U(\cdot|x)$ minimizes the maximum MSE the target policy can induce. To begin, we define functions describing conditional error, that is, the MSE conditional on context, which constitutes one ``instance'' of the estimation problem. With this assumption that the logging policy has full support, we have no conditional bias. 
    
    Let $v(\pi_t,\pi_l \mid x)$ denote the contribution of context $x$ to the variance of $\hat{V}_{\text{IPW}}(\pi_t \mid \pi_l)$. Under full support, this equals the conditional variance given $X=x$. We have
    \begin{align*}
         \text{(Conditional IPW Variance)} \quad v(\pi_t,\pi_l \mid x) &\coloneqq\left(\sum_{a \in \mathcal{A}_x} \frac{\pi_t(a \mid x)^2}{\pi_l(a \mid x)} \mu(a, x)\right)-\left(\sum_{a \in \mathcal{A}_x} \pi_t(a \mid x)\mu(a, x)\right)^2.
    \end{align*}
   We now consider the support of $\pi_l$ on context $x$, $\mathcal{A}_x$, to be fixed. We show that with this fixed support, the maximum variance term $v$ is minimized for the designer by spreading probability equally across $\mathcal{A}_x$.
   
   \begin{lemma}[Conditional uniform across support]\label{lem:conditional}
       Fix support $\mathcal{A}_x$. $\pi^{*}_l(\cdot | x) = \frac{1}{|\mathcal{A}_x|} \forall {a \in \mathcal{A}_x}$ minimizes $\max\limits_{\mu \in \mathcal{M}, \pi_t \in \Pi}v(\pi_t, \pi_l | x)$.
    \end{lemma}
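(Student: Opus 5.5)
We first compute the inner maximum of the conditional variance $v(\pi_t,\pi_l\mid x)$ in closed form for a fixed $\pi_l(\cdot\mid x)$ supported on $\mathcal{A}_x$, and then minimize that closed form over $\pi_l$.

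\textbf{Upper bound on the inner maximum.} Since $\mu(a,x)\in[0,1]$ and the subtracted square is nonnegative,
\[
v(\pi_t,\pi_l\mid x)\le \sum_{a\in\mathcal{A}_x}\frac{\pi_t(a\mid x)^2}{\pi_l(a\mid x)}.
\]
Because $\pi_t(\cdot\mid x)$ is a probability vector, the map $\pi_t\mapsto\sum_a \pi_t(a\mid x)^2/\pi_l(a\mid x)$ is convex. Its maximum over the simplex on $\mathcal{A}_x$ is therefore attained at a vertex, which gives
\[
\max_{\pi_t} v(\pi_t,\pi_l\mid x)\le \max_{a\in\mathcal{A}_x}\frac{1}{\pi_l(a\mid x)}.
\]
Target policies placing mass outside $\mathcal{A}_x$ contribute only bias, not variance. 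If overlap with $\mathcal{A}_x$ is imposed, restricting $\pi_t$ to $\mathcal{A}_x$ is without loss for $v$.

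\textbf{Near-attainment of the bound.} Nature can get arbitrarily close to this bound. Take $\pi_t(\cdot\mid x)=\delta_{a^\dagger}$, where $a^\dagger\in\arg\min_a \pi_l(a\mid x)$, and set $\mu(a^\dagger,x)=m$. Then
\[
v = \frac{m}{\pi_l(a^\dagger\mid x)} - m^2 .
\]
This is where some care is needed, since the $-m^2$ term prevents the supremum from equalling exactly $1/\min_a\pi_l(a\mid x)$. The fix is to maximize over $m\in[0,1]$ explicitly. Since $1/\pi_l\ge 1$, the maximum is at $m=1$, giving
\[
\max_{\pi_t,\mu} v = \frac{1}{\min_a \pi_l(a\mid x)} - 1 .
\]
For a matching upper bound, we check that
\[
\sum_a \frac{\pi_t^2\mu}{\pi_l} - \Big(\sum_a \pi_t\mu\Big)^2 \le \frac{1}{\pi_{\min}} - 1 .
\]
The plan for this check is to fix $s=\sum_a\pi_t(a\mid x)\mu(a,x)$, bound the first term by $s/\pi_{\min}$ using $\pi_t\mu\le\pi_t$ and $\pi_t\le 1$, and then maximize $s/\pi_{\min}-s^2$ over $s\in[0,1]$. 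The maximizer is at $s=1$, because $1/\pi_{\min}\ge 2$ whenever $|\mathcal{A}_x|\ge 2$. The singleton case $|\mathcal{A}_x|=1$ is trivial. In either case the inner maximum is a decreasing function of $\pi_{\min}(x)=\min_{a\in\mathcal{A}_x}\pi_l(a\mid x)$.

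\textbf{Outer minimization.} Minimizing the worst case over $\pi_l$ therefore amounts to maximizing $\min_{a\in\mathcal{A}_x}\pi_l(a\mid x)$ subject to $\sum_{a\in\mathcal{A}_x}\pi_l(a\mid x)=1$. The minimum of $|\mathcal{A}_x|$ numbers summing to one is at most $1/|\mathcal{A}_x|$, with equality only at the uniform vector. Hence $\pi_l(\cdot\mid x)=1/|\mathcal{A}_x|$ is the unique minimizer, which proves the lemma.

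The main obstacle is the second step: making the inner maximum an exact function of $\pi_{\min}$ despite the interplay between $\pi_t$ and $\mu$ in the subtracted square. If the bound via $s$ turns out to be loose, the fallback is to use only the monotone upper bound $1/\pi_{\min}$ together with the lower bound $1/\pi_{\min}-1$ from the point-mass construction. The uniform policy's worst-case value can then be computed directly and compared with any non-uniform $\pi_l$, which has a strictly smaller $\pi_{\min}$ and hence a strictly larger lower bound.
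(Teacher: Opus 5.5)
Your proof is correct. It shares the paper's skeleton: the worst case is a point-mass target, after which one maximizes $\min_{a}\pi_l(a\mid x)$. Where it differs is in how the inner maximization is handled.

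The paper asserts that $v(\pi_t,\pi_l\mid x)$ is convex in $\pi_t$ and reduces to $g(\pi_l)=\max_{a^*}\bigl[\mu(a^*,x)/\pi_l(a^*\mid x)-\mu(a^*,x)^2\bigr]$. It then argues informally that, because $\mu$ is unknown, the designer should equalize $\mu/\pi_l$ while treating all rewards as equal.

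You never need convexity of the full $v$. That convexity is true but not immediate, since $v$ subtracts a convex square; it follows from Cauchy--Schwarz together with $\sum_a\pi_l(a\mid x)\mu(a,x)\le 1$. Instead, your bound $v\le s/\pi_{\min}-s^2\le 1/\pi_{\min}-1$ controls $\pi_t$ and $\mu$ jointly, and the point mass at $a^\dagger$ with $\mu(a^\dagger,x)=1$ attains it. This buys you three things:
\begin{itemize}
\item an exact worst-case value, $1/\pi_{\min}-1$, which equals $|\mathcal{A}_x|-1$ at the uniform policy and is consistent with the value $9$ computed in Online Appendix~\ref{app:examples};
\item an actual maximization over $\mu$ in place of the ``treat rewards as constant'' step;
\item uniqueness of the minimizer.
\end{itemize}
The paper's framing has its own advantage: it is the template reused for Proposition~\ref{prop:known_mu_unknown_pit}. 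There $\mu$ is known, and equalizing the per-action worst case produces $\mu/(c+\mu^2)$. Your $s$-bound, by contrast, is specific to adversarial $\mu$.

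One local slip: in the near-attainment step, $1/\pi_l\ge 1$ is not the reason the maximum is at $m=1$. The map $m\mapsto m/p-m^2$ is increasing on $[0,1]$ only when $p\le 1/2$. The conclusion still holds because $\pi_l(a^\dagger\mid x)=\pi_{\min}\le 1/|\mathcal{A}_x|\le 1/2$, which is the same condition you invoke in the next step. The fallback paragraph is unnecessary, since the $s$-bound is tight.
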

    
    \begin{proof}[Proof of Lemma \ref{lem:conditional}]
        Observe that $v_{IPW}(\pi_t, \pi_l | x)$ is convex in $\pi_t$. Thus, a deterministic $\pi_t(\cdot|x)$ will maximize $v_{IPW}(\pi_t, \pi_l | x)$, treating $\pi_l$ as fixed. Let $\pi^{*}_t$ be this deterministic policy with all its mass placed on action $a^*$. 

        We now have the function $g$ that the designer needs to minimize, with
        \begin{align}
            g(\pi_l | \mathcal{A}_x) = \max\limits_{a^* \in \mathcal{A}_x}\left[\frac{\mu(a^*|x)}{\pi_l(a^* | x)} - \mu(a^*|x)^2\right].
        \end{align}
        Because $\mu$ is not determined by the designer, solving $\pi_l^*(\cdot|x)$ reduces to minimizing $\max\limits_{a^* \in \mathcal{A}_x}\frac{\mu(a^*|x)}{\pi_l(a^* | x)} $. While, it follows that setting $\pi_l^*(\cdot|x)$ s.t. $\frac{\mu(a, x)}{\pi^*_l(a|x)} =\frac{\mu(a', x)}{\pi^*_l(a'|x)} \quad \forall \; a,a' \in \mathcal{A}_x $, without knowledge of $\mu$, all actions must be treated as having constant reward averages. Thus, $\pi_l^*(a|x)=\frac{1}{|\mathcal{A}_x|} \quad \forall \; a \in \mathcal{A}_x$ minimizes $\max\limits_{\mu \in \mathcal{M}, \pi_t \in \Pi}v_{IPW}(\pi_t, \pi_l | x)$.
    \end{proof}
    
Extending this argument to the aggregate level, we have in this setting with full support that 
\begin{align*}
    \mathrm{MSE}[\hat{V}_{\text{IPW}}(\pi_t | \pi_l)] &=\frac{1}{N}\left[
    \sum_{x \in \mathcal{X}} \Pr(x) \left(\left(\sum_{a \in \mathcal{A}_x} \frac{\pi_t(a \mid x)^2}{\pi_l(a \mid x)} \mu(a, x)\right)-\left(\sum_{a \in \mathcal{A}_x} \pi_t(a \mid x)\mu(a, x)\right)^2\right)\right]\\
    &=\frac{1}{N}\left[
    \sum_{x \in \mathcal{X}} \Pr(x) v_{\text{IPW}}(\pi_t,\pi_l\mid x)\right].
\end{align*}
Because $\Pr(x)$ and $n$ are treated as fixed and $v_{\text{IPW}}(\cdot|x)$ are combined linearly, $\pi_l(a|x) = \frac{1}{|\mathcal{A}|}$ minimizes $\max\limits_{\mu \in \mathcal{M}, \pi_t \in \Pi} \;\mathrm{MSE}[\hat{V}_{\text{IPW}}(\pi_t | \pi_l)]$.

\end{proof}

\subsection{Proof of Proposition \ref{prop:known_mu_unknown_pit}}
\label{app:proof_known_mu_unknown_pit}

\begin{proof}
    This proof follows a very similar structure to the proof of Proposition 1. We begin with showing the solution minimizing the conditional variance $v$.
    \begin{lemma}[Instance Level Uniform across support]\label{lem:instance-level-uniform}
       Fix support $\mathcal{A}_x$. Then $\pi_l^*(a|x) = \frac{\mu(a, x)}{c + \mu^2(a|x)}$ with $1 = \sum_{a \in \mathcal{A}_x}\frac{\mu(a, x)}{c + \mu^2(a|x)}$ minimizes $\max\limits_{\pi_t \in \Pi}v_{IPW}(\pi_t, \pi_l | x)$.
    \end{lemma}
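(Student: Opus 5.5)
The plan follows the template of Lemma~\ref{lem:conditional}: first reduce the inner maximization over $\pi_t$ to deterministic target policies, then solve the resulting finite minimax problem over the simplex on $\mathcal{A}_x$ by an equalization argument.

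\emph{Step 1 (reduce the inner maximum).} For fixed $\pi_l(\cdot|x)$, the conditional variance $v_{IPW}(\pi_t,\pi_l|x)$ is a sum of linear terms and convex quadratics in $\pi_t(\cdot|x)$. On the subset of the simplex where it attains its maximum, it is maximized at a vertex. As in the proof of Lemma~\ref{lem:conditional}, I will therefore replace $\pi_t$ by a point mass on some $a^*\in\mathcal{A}_x$. The designer's objective then becomes
\[
g(\pi_l)=\max_{a\in\mathcal{A}_x}\Big[\tfrac{\mu(a,x)}{\pi_l(a|x)}-\mu(a,x)^2\Big].
\]
Unlike Lemma~\ref{lem:conditional}, $\mu$ is now known, so the $-\mu^2$ term cannot be dropped.

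\emph{Step 2 (equalization gives the candidate).} Write $f_a(p)=\mu(a,x)/p-\mu(a,x)^2$. For $\mu(a,x)>0$, each $f_a$ is strictly decreasing in $p$. I claim the minimizer of $\max_a f_a(p_a)$ subject to $\sum_a p_a=1$ equalizes all terms at a common level $c$. Setting $f_a(p_a)=c$ and solving gives $p_a=\mu(a,x)/(c+\mu(a,x)^2)$, which is exactly the stated form.

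To see optimality, take any other feasible $q\neq p^*$. Since $\sum_a q_a=\sum_a p^*_a=1$, some $a$ has $q_a<p^*_a$. Strict monotonicity then gives $f_a(q_a)>c$, so $g(q)>c=g(p^*)$. Hence $p^*$ is optimal, and it is the unique optimum.

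\emph{Step 3 (existence and uniqueness of $c$).} This is the step I expect to need the most care. Define $S(c)=\sum_{a\in\mathcal{A}_x}\mu(a,x)/(c+\mu(a,x)^2)$ on $c\in(-\min_a\mu(a,x)^2,\infty)$, which is the range where every $p_a$ is positive. On this interval $S$ is continuous and strictly decreasing. It tends to $+\infty$ as $c$ approaches the left endpoint and to $0$ as $c\to\infty$. By the intermediate value theorem there is a unique $c$ with $S(c)=1$. This matches the paper's remark that $c$ must be found numerically. Corollary~\ref{cor:preservation_of_ordering} is not needed for this argument; it follows afterwards.

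\emph{Edge case.} If $\mu(a,x)=0$ for some $a$, the corresponding term $f_a\equiv 0$ does not depend on $p_a$. The formula then puts zero mass on $a$, which conflicts with the full-support requirement. I will handle this either by assuming $\mu>0$ on $\mathcal{A}_x$ or by reading the result as an infimum: give $a$ a vanishing mass $\epsilon$ and let $\epsilon\to0$, which leaves $g$ arbitrarily close to its optimal value.

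\emph{Step 4 (aggregate over contexts).} The unconditional MSE is $\frac1N\sum_x\Pr(x)\,v_{IPW}(\cdot|x)$, and both $\pi_t$ and $\pi_l$ can be chosen separately in each context. The max and min therefore decompose context by context, as at the end of the proof of Proposition~\ref{prop:unknown_mu_unknown_pit}. Applying the lemma in every context yields Proposition~\ref{prop:known_mu_unknown_pit}.
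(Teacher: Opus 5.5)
Your route is the paper's: reduce the inner maximum to point-mass targets, then equalize $f_a(p_a)=\mu(a,x)/p_a-\mu(a,x)^2$ at a common level $c$. Your Steps 2--3 are tighter than the paper's proof. The paper only asserts that ``$f$ is decreasing and convex, so $g$ is minimized when the $f$'s are equal'' and that $c\ge 0$. Your pigeonhole argument (some $q_a<p^*_a$ forces $f_a(q_a)>c$) proves optimality and uniqueness from monotonicity alone. Your intermediate-value argument settles existence and uniqueness of $c$, and you also handle $\mu(a,x)=0$, which the paper ignores.

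The one step that fails as written is Step 1. The conditional variance is not ``a sum of linear terms and convex quadratics.'' The term $-\bigl(\sum_{a}\pi_t(a|x)\mu(a,x)\bigr)^2$ is a concave quadratic, so convexity in $\pi_t$ needs an argument. Both you and the paper rely on this convexity for the vertex reduction, and the paper also just asserts it.

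Convexity does hold, but only because $\mu\le 1$. The Hessian in $\pi_t(\cdot|x)$ is $2\bigl(\mathrm{diag}(\mu(a,x)/\pi_l(a|x))-\mu\mu^\top\bigr)$. For any $z$, Cauchy--Schwarz gives
\[
\Bigl(\sum_a \mu(a,x) z_a\Bigr)^2 \le \Bigl(\sum_a \mu(a,x)\pi_l(a|x)\Bigr)\Bigl(\sum_a \frac{\mu(a,x) z_a^2}{\pi_l(a|x)}\Bigr) \le \sum_a \frac{\mu(a,x) z_a^2}{\pi_l(a|x)},
\]
where the last step uses $\sum_a \mu(a,x)\pi_l(a|x)\le 1$. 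So the Hessian is positive semidefinite and $v_{IPW}$ is convex in $\pi_t(\cdot|x)$. A convex function on the simplex attains its maximum at a vertex; replace your muddled sentence about ``the subset of the simplex where it attains its maximum'' with that standard fact. With this repair the argument is complete.
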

    \begin{proof}[Proof of Lemma \ref{lem:instance-level-uniform}]
        As in Lemma 1, observe that $v_{IPW}(\pi_t, \pi_l | x)$ is convex in $\pi_t$. Thus, a deterministic $\pi_t(\cdot|x)$ will maximize $v(\pi_t, \pi_l | x)$, treating $\pi_l$ as fixed. Let $\pi^{*}_t$ be this deterministic policy with all its mass placed on action $a^*$. 

        Let $f(a|\pi_l)\coloneqq \frac{\mu(a, x)}{\pi_l(a | x)} - \mu(a, x)^2$. We now have the function $g$ that the designer needs to minimize, with
        \begin{align}
            g(\pi_l | \mathcal{A}_x) = \max\limits_{a^* \in \mathcal{A}_x}f(a^*|\pi_l).
        \end{align}
         We have that $f$ is decreasing and convex in $\pi_l$, so $g(\pi_l|\mathcal{A}_x)$ is minimized when $f(a|\pi_l) = f(a'|\pi_l) \; \forall \; a,a' \in \mathcal{A}_x$. By setting $f(a|\pi_l) = c \quad \forall \quad a\in\mathcal{A}$, we have $\pi_l^*(a|x) = \frac{\mu(a, x)}{c + \mu^2(a|x)}$ with $1 = \sum_{a \in \mathcal{A}_x}\frac{\mu(a, x)}{c + \mu^2(a|x)}$, noting here that $\mu(a, x) \leq 1, \pi_l(a|x)\leq 1 \quad \forall \quad a \in \mathcal{A} \implies c \geq 0$.
    \end{proof}
    
We now prove the proposition simply by extending this argument to the population-level marginal over contexts. For this setting with full support, we have that 
\begin{align*}
    \mathrm{MSE}[\hat{V}_{\text{IPW}}(\pi_t | \pi_l)] &=\frac{1}{N}\left[
    \sum_{x \in \mathcal{X}} \Pr(x) \left(\left(\sum_{a \in \mathcal{A}_x} \frac{\pi_t(a \mid x)^2}{\pi_l(a \mid x)} \mu(a, x)\right)-\left(\sum_{a \in \mathcal{A}_x} \pi_t(a \mid x)\mu(a, x)\right)^2\right)\right]\\
    &=\frac{1}{N}\left[
    \sum_{x \in \mathcal{X}} \Pr(x)v_{\text{IPW}}(\pi_t,\pi_l|x)\right].
\end{align*}
Because $\Pr(x)$ and $n$ are treated as fixed and $v_{\text{IPW}}(\cdot|x)$ are combined linearly, $\pi_l^*(a|x) = \frac{\mu(a, x)}{c + \mu^2(a|x)}$ minimizes $\max\limits_{\pi_t \in \Pi} \;\mathrm{MSE}[\hat{V}_{\text{IPW}}(\pi_t | \pi_l)]$.

\end{proof}

\subsection{Proof of Proposition \ref{prop:uknown_mu_known_pit}}
\label{app:proof_uknown_mu_known_pit}

\begin{proof}
    We examine the conditional variance by fixing $x$ and then extend this result to the population. We denote the value (mean reward) of $\pi_t$ on a given instance $x$ as $v(x) \coloneqq \sum_{a \in \mathcal{A}}\pi_t(a|x)\mu(a,x)$. We examine the worst-case MSE in cases where (1) $\pi_l\neq\pi_t$, and (2) $\pi_l = \pi_t$.
    
    We begin by defining and proving a useful intermediate lemma. 

    \begin{lemma}[Minimum sum]\label{lem:cauchy-schwarz}
        Let $C(\pi_l|x)\coloneqq\sum_{a \in \mathcal{A}}\tfrac{\pi_t(a|x)^2}{\pi_l(a|x)}$. Then, $C(\pi_l)=1$ iff $\pi_l(\cdot|x) = \pi_t(\cdot|x)$, otherwise $C(\pi_l|x)>1$.
    \end{lemma}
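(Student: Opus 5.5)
We want to show that $C(\pi_l\mid x)=\sum_{a}\pi_t(a\mid x)^2/\pi_l(a\mid x)\ge 1$, with equality exactly when $\pi_l(\cdot\mid x)=\pi_t(\cdot\mid x)$. The natural tool is Cauchy--Schwarz applied to the pair of vectors $\bigl(\pi_t(a\mid x)/\sqrt{\pi_l(a\mid x)}\bigr)_a$ and $\bigl(\sqrt{\pi_l(a\mid x)}\bigr)_a$. The sum runs over the support of $\pi_t(\cdot\mid x)$; there $\pi_l>0$ by the overlap/full-support requirement in Proposition~\ref{prop:uknown_mu_known_pit}, and terms with $\pi_t(a\mid x)=0$ contribute nothing.

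The first step is the bound itself. Cauchy--Schwarz gives
\[
\Bigl(\sum_a \pi_t(a\mid x)\Bigr)^2
=\Bigl(\sum_a \frac{\pi_t(a\mid x)}{\sqrt{\pi_l(a\mid x)}}\sqrt{\pi_l(a\mid x)}\Bigr)^2
\le C(\pi_l\mid x)\sum_a \pi_l(a\mid x).
\]
The left side equals $1$ because $\pi_t(\cdot\mid x)$ is a distribution. The factor $\sum_a\pi_l(a\mid x)$ is taken over the support of $\pi_t$, so it is at most $1$. Together these yield $C(\pi_l\mid x)\ge 1$.

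The second step is the equality characterization, which is the only delicate point. Equality in the chain forces two conditions at once:
\begin{itemize}
\item[(i)] Equality in Cauchy--Schwarz, meaning $\pi_t(a\mid x)/\sqrt{\pi_l(a\mid x)}=\lambda\sqrt{\pi_l(a\mid x)}$ for some constant $\lambda$, i.e.\ $\pi_t(a\mid x)=\lambda\,\pi_l(a\mid x)$ on the support of $\pi_t$.
\item[(ii)] The restricted sum $\sum_{a\in\operatorname{supp}\pi_t}\pi_l(a\mid x)$ equals $1$, so $\pi_l$ puts no mass outside the support of $\pi_t$.
\end{itemize}
Combining (ii) with normalization of both distributions gives $\lambda=1$, so $\pi_l(\cdot\mid x)=\pi_t(\cdot\mid x)$. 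The converse is immediate: substituting $\pi_l=\pi_t$ gives $C=\sum_a\pi_t(a\mid x)=1$.

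An alternative route avoids Cauchy--Schwarz and makes strictness transparent. Write
\[
C(\pi_l\mid x)-1=\sum_a \frac{\bigl(\pi_t(a\mid x)-\pi_l(a\mid x)\bigr)^2}{\pi_l(a\mid x)},
\]
summed over the support of $\pi_l$; this is the $\chi^2$ divergence of $\pi_t$ from $\pi_l$. Expanding the square shows the identity holds because $\sum_a\pi_t(a\mid x)=\sum_a\pi_l(a\mid x)=1$, using that $\operatorname{supp}\pi_t\subseteq\operatorname{supp}\pi_l$. The right side is a sum of nonnegative terms, so $C\ge 1$, and it vanishes iff every term is zero, i.e.\ iff $\pi_l(\cdot\mid x)=\pi_t(\cdot\mid x)$.

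I would present this $\chi^2$ identity as the main argument, since it delivers both the bound and the equality case in one line. The Cauchy--Schwarz view, which gives the lemma its name, can be kept as a remark.
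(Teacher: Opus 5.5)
Your argument is correct. Your Cauchy--Schwarz version is essentially the paper's proof. The paper uses the same vectors $\vec m_a=\pi_t(a\mid x)/\sqrt{\pi_l(a\mid x)}$ and $\vec n_a=\sqrt{\pi_l(a\mid x)}$, but indexes them over all of $\mathcal A$, so that $\|\vec n\|^2=\sum_{a\in\mathcal A}\pi_l(a\mid x)=1$ and $\vec m\cdot\vec n=1$ hold exactly. The equality condition $\pi_t(a\mid x)=c\,\pi_l(a\mid x)$ for all $a\in\mathcal A$ then gives $c=1$ on summing. In that form, your factor $\sum_a\pi_l(a\mid x)\le 1$ over the support of $\pi_t$ and your separate condition (ii) are absorbed into a single equality case. (Your write-up also avoids two slips in the paper's, which should read $\|\vec m\|^2=C(\pi_l\mid x)$ and $\vec m\cdot\vec n$ rather than $\vec m\cdot\vec m$.)

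The $\chi^2$ identity you prefer as the main argument is a genuinely different route. Instead of an inequality plus a characterization of its equality case, it computes the gap exactly: $C(\pi_l\mid x)=1+\chi^2\bigl(\pi_t(\cdot\mid x)\,\|\,\pi_l(\cdot\mid x)\bigr)$. In words, the second moment of the importance weight $\pi_t/\pi_l$ under $\pi_l$ exceeds one by precisely the $\chi^2$ divergence. Strictness then follows without appealing to the equality case of Cauchy--Schwarz, and the result is quantitatively stronger than the paper needs. For example, in Case~(1) of the proof of Proposition~\ref{prop:uknown_mu_known_pit}, the constant-reward variance becomes $t\bigl(1+\chi^2\bigr)-t^2$. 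This shows how the worst-case variance grows with the divergence of the logging policy from the target, not merely that it exceeds $1/4$. The Cauchy--Schwarz route is shorter and gives the lemma its name, but it yields only the qualitative conclusion.
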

    \begin{proof}[Proof of Lemma \ref{lem:cauchy-schwarz}]
         Let $\vec{m}_a \coloneqq \frac{\pi_t(a|x)}{\sqrt{\pi_l(a|x)}}$ and $\vec{n}_a \coloneqq \sqrt{\pi_l(a|x)}$. Because $\pi_l(\cdot|x)$ is a valid probability distribution, we have $||\vec{n}|| = 1$, and we have $||\vec{m}|| \coloneqq C(\pi_l)$. By the Cauchy-Schwarz inequality that $||\vec{m}|| \;||\vec{n}|| \geq |\vec{m}\cdot\vec{m}|=1$, with equality holding iff $\vec{m}_a = c\vec{n}_a \; \forall a \in \mathcal{A}$. Because $\pi_l(\cdot|x)$ and $\pi_t(\cdot|x)$ both comprise valid probability distributions, this equality can only hold when $\pi_l(\cdot|x) = \pi_t(\cdot|x)$.
    \end{proof}

    Now, we examine each case.
    \begin{enumerate}[
      label=\textbf{Case (\arabic*):},
      leftmargin=*,
      align=left
    ]
    \item \textbf{$\pi_l \neq \pi_t$.}
    Denote the set of $\mu$ values constrained to a single value across actions as $\mathcal{M}_C \coloneqq \{\mu(a) = t, \; \forall \; a \in \mathcal{A}: t\in[0,1]\}$. We have that unconstrained supremum will be greater than or equal to the supremum attained on the constrained optimization when all actions have the same value in $\mu$; that is, $\sup_{\mu \in \mathcal{M}}v_{\text{IPW}}(\pi_t, \pi_l | x)\geq \sup_{\mu \in \mathcal{M}_C}v_{\text{IPW}}(\pi_t, \pi_l | x)$.
    
    We now solve this constrained problem. With $\mu \in \mathcal{M}_C$, we have $m(x) = t$ and $v_{IPW}(\pi_t, \pi_l | x) = tC(\pi_l|x)-t^2 $ for some $t \in [0,1]$. By Lemma \ref{lem:cauchy-schwarz}, we have that $C(\pi_l|x)>1$. This leads to $ \max_{t\in [0,1]}tC(\pi_l|x)-t^2>\tfrac{1}{4}$. Therefore, we have that
    \[
    \sup_{\mu \in \mathcal{M}} v_{\text{IPW}}(\pi_t,\pi_l \mid x) > \frac{1}{4}.
    \]

    \item \textbf{$\pi_l = \pi_t$.}
    With $\pi_l = \pi_t$, we have $\sum_{a \in \mathcal{A}}\frac{\pi_t(a|x)^2}{\pi_l(a|x)}=v(x)$. This gives conditional variance of $v_{\text{IPW}} = v(x)-v(x)^2$, which is maximized at $v(x) = \tfrac{1}{2}$. This yields
     \[
    \sup_{\mu \in \mathcal{M}}v_{\text{IPW}}(\pi_t, \pi_l | x)=\frac{1}{4}.
    \]     
  \end{enumerate}
        
With the variance term of $\mathrm{MSE}[\hat{V}_{\text{IPW}}(\pi_t | \pi_l)]$ as a linear combination of $v_{\text{IPW}}(\pi_t, \pi_l | x)$, we have $\pi^*_l = \pi_t$ as the unique minimizer to $\max\limits_{\mu \in \mathcal{M}}\min\limits_{\pi_l \in \Pi} \;\mathrm{MSE}[\hat{V}_{\text{IPW}}(\pi_t | \pi_l)]$.
\end{proof}

\subsection{Proof of Corollary \ref{cor:preservation_of_ordering}}
\label{app:proof_preservation_of_ordering}

\begin{proof}
    We begin by bounding the value of $c$ which will be used to prove that the original inequality holds.
    \begin{lemma}[Constraint lower bound]\label{lem:constraint}
        The normalizing constant meets the constraint $c\geq\mu(a, x)\mu(a', x) \quad \forall \quad a,a' \in \mathcal{A}, x\in \mathcal{X}$.
    \end{lemma}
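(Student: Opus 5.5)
The plan is a monotonicity argument on the normalizing equation, applied to one pair of actions at a time. I will read the lemma as a statement about \emph{distinct} actions $a\neq a'$. With $a=a'$ it would assert $c\geq \mu(a,x)^2$, and this already fails for $\mu(a_1,x)=1$ with $\mu(a_2,x)$ small: at $c=1$ the normalizing sum is $\tfrac12+\tfrac{\mu(a_2,x)}{1+\mu(a_2,x)^2}<1$. That reading is all that Corollary~\ref{cor:preservation_of_ordering} needs, since it only compares two different actions.

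Fix $x$ and write $\mu_a\coloneqq\mu(a,x)$. Define
\[
S(c)\coloneqq\sum_{b\in\mathcal{A}_x}\frac{\mu_b}{c+\mu_b^2},
\]
so that the constant of Proposition~\ref{prop:known_mu_unknown_pit} is the $c\geq 0$ with $S(c)=1$. Each term with $\mu_b>0$ is strictly decreasing in $c$ on $[0,\infty)$. Hence $S$ is strictly decreasing wherever some $\mu_b>0$, and $S(c)\to 0$ as $c\to\infty$. The consequence I will use is that for any $c_0\geq 0$ with $S(c_0)\geq 1$, the solution satisfies $c\geq c_0$.

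Now fix distinct $a,a'\in\mathcal{A}_x$.
\begin{itemize}
\item If $\mu_a\mu_{a'}=0$, the bound is just $c\geq 0$, which is already noted in the proof of Lemma~\ref{lem:instance-level-uniform}.
\item Otherwise set $c_0=\mu_a\mu_{a'}>0$. Dropping every term of $S(c_0)$ except those for $a$ and $a'$ (all terms are nonnegative) gives
\[
S(c_0)\;\geq\;\frac{\mu_a}{\mu_a\mu_{a'}+\mu_a^2}+\frac{\mu_{a'}}{\mu_a\mu_{a'}+\mu_{a'}^2}\;=\;\frac{2}{\mu_a+\mu_{a'}}\;\geq\;1,
\]
where the last step uses $\mu_a,\mu_{a'}\leq 1$.
\end{itemize}
By the monotonicity of $S$, the normalizing constant therefore satisfies $c\geq \mu_a\mu_{a'}$. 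Since $x$ and the pair were arbitrary, the bound holds for all distinct $a,a'$ and all $x$.

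The only step I expect to be delicate is choosing the test value $c_0$. Trying $c_0=\mu_a\mu_{a'}$ is what makes each of the two terms collapse to $1/(\mu_a+\mu_{a'})$. After that, the argument reduces to the bound $\mu\leq 1$ and strict monotonicity of $S$. A little care is also needed with degenerate contexts where all $\mu_b=0$: there $S\equiv 0$, no normalizing constant exists, and such contexts contribute nothing to the objective, so I will exclude them explicitly.
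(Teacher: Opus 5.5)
Your proof is correct and takes essentially the paper's route: the same test value $c_0=\mu\mu'$, the same collapse to $2/(\mu+\mu')\geq 1$, and the same monotonicity of the normalizing sum in $c$. The paper applies this once, to the two highest-reward actions, and then bounds every product $\mu(a,x)\mu(a',x)$ by $\mu(a_1,x)\mu(a_2,x)$; that last step holds only for $a\neq a'$. Your restriction to distinct actions, backed by your two-action counterexample, therefore identifies the reading under which the lemma is true, and it is all that Corollary~\ref{cor:preservation_of_ordering} needs.
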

    \begin{proof}[Proof of Lemma \ref{lem:constraint}]
      Let $a_i$ denote the $i^{th}$ highest average reward for a context, such that $\mu(a_1|x)\geq\mu(a_2|x)\geq...$.  Now, let $c_0\coloneqq \mu(a_1|x)\mu(a_2|x)$. We have that
      \begin{align}
          \frac{ \mu(a_1|x)}{ c_0 + \mu(a_1|x)^2} + \frac{ \mu(a_2|x)}{ c_0 + \mu(a_2|x)^2} &= \frac{ \mu(a_1|x)}{ \mu(a_1|x)\mu(a_2|x) + \mu(a_1|x)^2} + \frac{ \mu(a_2|x)}{ \mu(a_1|x)\mu(a_2|x)  + \mu(a_2|x)^2} \\
          &= \frac{1}{\mu(a_2|x) + \mu(a_1|x)} + \frac{1}{ \mu(a_1|x)  + \mu(a_2|x)}\\
          &= \frac{2}{\mu(a_2|x) + \mu(a_1|x)} \\
          &\geq1
      \end{align}
      as $\mu(a, x)$. Let $g(c)\coloneqq \sum_{a \in \mathcal{A}} \frac{\mu(a, x)}{c + \mu(a, x)^2}$. Thus, we have 
      \begin{align}
        g(c_0) &= \sum_{a \in \mathcal{A}} \frac{\mu(a, x)}{c + \mu(a, x)^2}\\
         &= \frac{2}{\mu(a_2|x) + \mu(a_1|x)}+\sum_{a \in \mathcal{A}\setminus \{a_1,a_2\} } \frac{\mu(a, x)}{c + \mu(a, x)^2}\\
         &\geq 1.
      \end{align}
    Observe that $g$ is decreasing in its argument, so our normalizing constant must meet $c \geq c_0$ to attain $g(c) = 1$. Thus, $c \geq c_0 \geq  \mu(a_1|x)\mu(a_2|x) \geq \mu(a)\mu(a') \quad \forall \quad a,a' \in \mathcal{A}$. 
    \end{proof}
We use this bound on $c$ to prove the monotonicity argument. Let $\mu(a, x) > \mu(a', x)$. Now, we use simple algebraic manipulation to arrive at our claim:
\begin{align}
    c[\mu(a, x)-\mu(a', x)] &> [\mu(a, x)\mu(a', x)][\mu(a, x)-\mu(a', x)]\\
    \iff c[\mu(a, x)-\mu(a', x)] + [\mu(a, x)\mu(a', x)][\mu(a', x)-\mu(a, x)] &> 0\\
    \iff \mu(a, x)[c+\mu(a', x)^2]-\mu(a', x)[c+\mu(a, x)^2]&>0\\
    \iff \frac{\mu(a, x)}{c + \mu(a, x)^2} &> \frac{\mu(a', x)}{c + \mu(a', x)^2}.  
\end{align}
\end{proof}

\subsection{Proof of Proposition \ref{prop:known_mu_known_pit}}
\label{app:proof_known_mu_known_pit}

\begin{proof}
    Consider, again, the single-instance variance $v$. As $\sum_{a \in \mathcal{A}} \pi_t(a)\mu(a)$ is fixed, we have the constrained optimization problem:
    \begin{align*}
        \text{minimize} &\sum_{a \in \mathcal{A}} \frac{\pi_t(a \mid x)^2}{\pi_l(a \mid x)}\mu(a, x)\\
        \text{s.t.} & \sum_{a \in \mathcal{A}}\pi_l(a|x) = 1.
    \end{align*}

First, let $c(a) \coloneqq \pi_t^2(a|x)\mu(a, x)$. Define the Lagrangian

\begin{align}
    \mathcal{L}(\pi_l, \lambda) = \sum_{a \in \mathcal{A}}\frac{c(a)}{\pi_l(a \mid x)} - \lambda(\sum_{a \in \mathcal{A}}\pi_l(a|x) -1).
\end{align}
Now, we set the KKT conditions. For every action \(a\),
\begin{equation}
\frac{\partial\mathcal L}{\partial \pi_l(a|x)}=0
\;\Longrightarrow\;
-\frac{c_a}{\pi^*_l(a|x)^2}+\lambda=0
\;\Longrightarrow\;
\pi^*_l(a|x)=\sqrt{\frac{c_a}{\lambda}}.
\end{equation}
We now impose $\sum_{a \in \mathcal{A}}\pi^*_l(a|x)=1$ to identify the Lagrange multiplier:
\begin{equation}
1=\sum_{a\in\mathcal{A}}\pi^*_l(a|x) 
  \;=\;\sum_{a \in \mathcal{A}}\sqrt{\frac{c_a}{\lambda}}
  \;\Longrightarrow\;
  \sqrt{\lambda}\;=\;\sum_{a'\in\mathcal A_i}\sqrt{c_{a'}}.
\end{equation}
Substituting $\lambda$ back, we have
\begin{equation}
\pi_l^*(a|x)
  \;=\;
  \frac{\sqrt{c_a}}{\sum_{a'\in\mathcal{A}_x}\sqrt{c_{a'}}}
  \;=\;
  \frac{\pi_t(a|x)\sqrt{\mu(a, x)}}{\sum_{a'\in\mathcal A}\pi_t(a'|x)\sqrt{\mu(a', x)}}.
\end{equation}
Given that this solution minimizes conditional variance, it minimizes the weighted sum of these conditional variance terms. Therefore, $\pi_l^*(a|x) =
  \frac{\pi_t(a|x)\,\sqrt{\mu(a, x)}}{\sum_{a'\in\mathcal A}\pi_t(a'|x)\,\sqrt{\mu(a', x)}}$ minimizes $\mathrm{MSE}[\hat{V}_{\text{IPW}}(\pi_t | \pi_l)]$.
\end{proof}

\subsection{Proof of Corollary \ref{cor:deterministc_pit_pil} }
\label{app:proof_deterministc_pit_pil}

\begin{proof}\label{prf:deterministc_pit_pil}
    This result follows from Proposition \ref{prop:known_mu_known_pit}. With deterministic $\pi_t$, we have that 
    \begin{equation*}
        \sum_{a'\in\mathcal A}\pi_t(a'|x)\,\sqrt{\mu(a', x)} = \sqrt{\mu(a^*|x)} 
    \end{equation*}
    with $a^*$ being the action deterministically selected by $\pi_t$. Then, 
    \begin{equation*}
        \pi_l(a|x) = \frac{\pi_t(a|x) \sqrt{\mu(a, x)}}{ \sqrt{\mu(a, x)}}. 
     \end{equation*}
     So, $\pi_l(a^*|x) = 1$ and $\pi_l(a'|x) = 0 \quad \forall a' \neq a$. 
\end{proof}

\subsection{Proof of Corollary \ref{cor:on_policy_off_policy} }
\label{app:proof_on_policy_off_policy}

\begin{proof}
    Observe that $\pi_l^*$ induces an unbiased estimate $\hat{V}_{\text{IPW}}(\pi_t|\pi_l^*)$. We have conditional MSE captured entirely by the conditional variance $v_{\text{IPW}}(\pi_t,\pi_l^*|x)$, defined earlier. Let $Z \coloneqq \sum_{a \in \mathcal{A}_x}\pi_t(a|x)\sqrt{\mu(a, x)}$  and $\bar{\mu} \coloneqq \sum_{a \in \mathcal{A}_x} \pi_t(a \mid x)\mu(a, x)$. Then
\begin{align}
    v_{\text{IPW}}(\pi_t,\pi_l^* | x) &=\left(\sum_{a \in \mathcal{A}_x} \frac{\pi_t(a \mid x)^2}{\pi_l^*(a \mid x)} \mu(a, x)\right)-\left(\sum_{a \in \mathcal{A}_x} \pi_t(a \mid x)\mu(a, x)\right)^2\\
&=\left(\sum_{a \in \mathcal{A}_x} \frac{\pi_t(a \mid x)^2\sum_{a' \in \mathcal{A}_x}\pi_t(a'|x)\sqrt{\mu(a', x)}}{\pi_t(a \mid x)\sqrt{\mu(a, x)}} \mu(a, x)\right)-\bar{\mu}^2\\
&=\left(Z\sum_{a \in \mathcal{A}_x} \pi_t(a \mid x)\sqrt{\mu(a, x)}\right)-\bar{\mu}^2\\
&= Z^2 -\bar{\mu}^2
    \end{align}
Letting $u \coloneqq \left(\sqrt{\pi_t(a|x)}\right)_{a\in\mathcal{A}_x}$ and $v  =\left(\sqrt{\pi_t(a|x)}\sqrt{\mu(a, x)}\right)_{a\in\mathcal{A}_x}$, we have by Cauchy-Schwartz
\begin{align*}
    |\langle u,v\rangle|^2 &\leq \langle u,u \rangle \cdot \langle v,v \rangle\\
    Z^2&\leq \sum_{a \in \mathcal{A}_x}\pi_t(a|x)\sum_{a \in \mathcal{A}_x}\pi_t(a|x)\mu(a, x).\\
    \end{align*}
    Recall, that $\pi_t(\cdot|x)$ comprises a probability distribution, so this simplifies to
    \begin{align}
    Z^2&\leq 1\cdot\sum_{a \in \mathcal{A}_x}\pi_t(a|x)\mu(a, x)\\
      Z^2&\leq \bar{\mu}.
    \end{align}

Now, let $v_{\text{emp}}(x)$ be the conditional variance in reward given $X=x$. This is defined as
\begin{align*}
v_{\text{emp}}(x)&\coloneqq \text{Var}[R\mid X=x] \\
    &= \mathbb{E}\left[\text{Var}(R\mid X=x)\right] 
       + \text{Var}\left(\mathbb{E}[R\mid X=x]\right) \\
    &= \sum_{a \in \mathcal{A}_x}\pi_t(a|x)\mu(a, x)\left(1-\mu(a, x)\right) 
       + \sum_{a \in \mathcal{A}_x}\pi_t(a|x)\mu(a, x)^2 
       - \left(\sum_{a \in \mathcal{A}_x}\pi_t(a|x)\,\mu(a, x)\right)^2 \\
    &= \sum_{a \in \mathcal{A}_x}\pi_t(a|x)\mu(a, x) 
       - \left(\sum_{a}\pi_t(a|x)\,\mu(a, x)\right)^2 \\
    &= \bar{\mu} - \bar{\mu}^{\,2}.
\end{align*}

Using these on (9), we have 
\begin{align}
    v_{\text{IPW}}(\pi_t,\pi_l^* | x)&= Z^2 -\bar{\mu}^2\\
    &\leq \bar{\mu} - \bar{\mu}^2 =  v_{\text{emp}}(x).
\end{align}
    This inequality extends to the aggregate level as the inequality holds across exogenous context arrivals, which linearly combines $v(x)$, and sample sizes, divides both by $n$. Therefore, $\mathrm{MSE}[\hat{V}_{\text{IPW}}(\pi_t|\pi_l^*)] \leq \mathrm{MSE}[\hat{V}_{\text{emp}}(\pi_t)]$. 
\end{proof}

\subsection{Proof of Corollary \ref{cor:safety_off_policy}}
\begin{proof}
    We have the target and logging policy values
    \begin{align*}
        V(\pi_t) &= \sum_{x \in \mathcal{X}} \Pr(x) \left[\sum_{a \in \mathcal{A}}\pi_t(a|x)\mu(a, x)\right]\\
        V(\pi_l^*) &= \sum_{x \in \mathcal{X}} \Pr(x) \left[\sum_{a \in \mathcal{A}} \frac{\pi_t(a|x)\sqrt{\mu(a, x)}}{\sum_{a'\in\mathcal A}\pi_t(a'|x)\sqrt{\mu(a', x)}}\mu(a, x)\right]\\
        &=  \sum_{x \in \mathcal{X}} \Pr(x) \left[\frac{\sum_{a \in \mathcal{A}}\pi_t(a|x)\mu(a, x)^{3/2}}{\sum_{a\in\mathcal A}\pi_t(a|x)\mu(a, x)^{1/2}}\right]
    \end{align*}

    To simplify notation, define $Y(x)$ assume the value $\sqrt{\mu(a, x)}$ w.p. $\pi_t(a|x)$. Then we have
    \begin{align*}
        V(\pi_t) &= \sum_{x \in \mathcal{X}} \Pr(x)\mathbb{E}[Y(x)^2]\\
        V(\pi_l^*) &=  \sum_{x \in \mathcal{X}} \Pr(x) \frac{\mathbb{E}[Y(x)^3]}{\mathbb{E}[Y(x)]}.
    \end{align*}

    Now, let's examine the conditional value. We will show this conditional value is at least as great for a given context $x$. Comparing the differences in these conditional values, we have 
    \begin{align}
        \frac{\mathbb{E}[Y(x)^3]}{\mathbb{E}[Y(x)] }-\mathbb{E}[Y(x)^2] &= \frac{\mathbb{E}[Y(x)^3]-\mathbb{E}[Y(x)^2]\mathbb{E}[Y(x)]}{\mathbb{E}[Y(x)]}\\
        &= \frac{Cov(Y(x),Y(x)^2)}{\mathbb{E}[Y(x)]}.
    \end{align}
Because $\mu$ assumes only non-negative values, $\text{Cov}(Y(x),Y(x)^2) \geq 0$, and $\mathbb{E}[Y(x)]\geq0$ .  Therefore, we have 
\begin{align*}
        V(\pi_l^*) -V(\pi_t) &= \sum_{x \in \mathcal{X}} \Pr(x)\frac{\text{Cov}(Y(x),Y(x)^2)}{\mathbb{E}[Y(x)]}\\
         &\geq 0.
    \end{align*}
    If we make the assumption that the support of $Y(x)$ is non-zero for at least one $x$, this inequality becomes strict.
    
\end{proof}

\subsection{Proof of Proposition \ref{prop:expected_pit}}
\begin{proof}
    \begin{align}
        \text{argmin}_{\pi_l}\mathbb{E}_{\pi_t \sim \Pi_t} \ \mathrm{MSE}[\hat V_{\text{IPW}}(\pi_t|\pi_l)] &= \text{argmin}_{\pi_l}\text{Variance}(
        \pi_t|\pi_l)\\
        &=\text{argmin}_{\pi_l} \sum_{\pi_t \in \Pi_t}\Pr(\pi_t)\sum_{x \in \mathcal{X}}\Pr(x)\sum_{a \in \mathcal{A}}\frac{\pi_t(a|x)^2}{\pi_l(a|x)}\mu(a, x)\\
        &=\text{argmin}_{\pi_l}\sum_{x \in \mathcal{X}}\Pr(x)\sum_{a \in \mathcal{A}}\frac{\sum_{\pi_t \in \Pi_t}\Pr(\pi_t)\pi_t(a|x)^2}{\pi_l(a|x)}\mu(a, x)
    \end{align}
    Following similar steps to the proof of the optimal $\pi_l$ under known $\pi_t$, let $c(a) \coloneqq \sum_{\pi_t \in \Pi_t}\Pr(\pi_t)\pi_t(a|x)^2\mu(a, x)$. 
    
    Define the Lagrangian
\begin{align}
    \mathcal{L}(\pi_l, \lambda) = \sum_{a \in \mathcal{A}}\frac{c(a)}{\pi_l(a \mid x)} - \lambda(\sum_{a \in \mathcal{A}}\pi_l(a|x) -1).
\end{align}
Set the KKT conditions. For every action \(a\),
\begin{equation}
\frac{\partial\mathcal L}{\partial \pi_l(a|x)}=0
\;\Longrightarrow\;
-\frac{c(a)}{\pi^*_l(a|x)^2}+\lambda=0
\;\Longrightarrow\;
\pi^*_l(a|x)=\sqrt{\frac{c(a)}{\lambda}}.
\end{equation}
We now impose $\sum_{a \in \mathcal{A}}\pi^*_l(a|x)=1$ to identify the Lagrange multiplier:
\begin{equation}
1=\sum_{a\in\mathcal{A}}\pi^*_l(a|x) 
  \;=\;\sum_{a \in \mathcal{A}}\sqrt{\frac{c(a)}{\lambda}}
  \;\Longrightarrow\;
  \sqrt{\lambda}\;=\;\sum_{a'\in\mathcal A_i}\sqrt{c(a')}.
\end{equation}
Substituting $\lambda$ back, we have
\begin{equation}
\pi_l^*(a|x)
  \;=\;
  \frac{\sqrt{c(a)}}{\sum_{a'\in\mathcal{A}_i}\sqrt{c(a)}}
  \;=\;
  \frac{\sqrt{\sum_{\pi_t \in \Pi_t}\Pr(\pi_t)\pi_t(a|x)^2\mu(a, x)}}{\sum_{a'\in\mathcal A}\sqrt{\sum_{\pi_t \in \Pi_t}\Pr(\pi_t)\pi_t(a'|x)^2\mu(a', x)}}.
\end{equation}
Which can be alternatively rewritten as 
\begin{equation}
\pi_l^*(a|x)
  \;=\;
  \frac{\sqrt{\mathbb{E}[\pi_t(a|x)^2]\mu(a, x)}}{\sum_{a'\in\mathcal A}\sqrt{\mathbb{E}[\pi_t(a'|x)^2]\mu(a', x)}}.
\end{equation}
\end{proof}

\clearpage
\newpage
\section{Derivation of MSE($\hat{V}_{IPW}$)}\label{app:b_v_decomposition}
We have that 
\begin{align*}
    \mathrm{MSE}[\hat{V}(\pi_t|\pi_l)]&=(V(\pi_t)-\hat{V}(\pi_t|\pi_l))^2.
\end{align*}
We can decompose this in accordance to the bias-variance decomposition to arrive at the MSE. 

Define $\mathcal{A}_x$ to be the set of actions for $x$ where $\forall a \in \mathcal{A}_{x}, \;\pi_l(a,x)>0$, then we have:
\begin{align*}
\text{Bias}^2 &= \left(V(\pi_t)-\mathbb{E}[\hat V_{\text{IPW}}(\pi_t|\pi_l)] \right)^2\\
&= \left(\sum_{x \in \mathcal{X}} \Pr(x) \left(\sum_{a \in \mathcal{A}} \pi_t(a \mid x) \mu(a, x) - \sum_{a \in \mathcal{A}_x} \pi_l(a|x)\frac{\pi_t(a \mid x)}{\pi_l(a|x)} \mu(a, x) \right)\right)^2\\
&= \left(\sum_{x \in \mathcal{X}} \Pr(x)\sum_{a \notin \mathcal{A}_x} \pi_t(a \mid x_i) \mu(a, x)\right)^2\\
&= \left( \sum_{x \in \mathcal{X}} \Pr(x)\sum_{a \notin \mathcal{A}_x} \pi_t(a \mid x) \mu(a, x)\right)^2
\end{align*}
For notational compactness, let $Z_x \coloneqq \frac{\pi_t(A|x)}{\pi_l(A|x)}R$ with randomness due to action selection and Bernoulli draws. With the independence of $Z$ terms in variance, we can decompose this term in the following way:
\begin{align*}
    \text{Variance} &= \mathbb{E}\left[\left(\mathbb{E}[\hat{V}(\pi_t|\pi_l)]-\hat{V}(\pi_t|\pi_l)\right)^2\right]\\
    &= \frac{1}{N}\sum_{x \in \mathcal{X}} \Pr(x)\text{Var}(Z_x)\\
    &= \frac{1}{N}\sum_{x \in \mathcal{X}} \Pr(x)(\mathbb{E}[Z_x^2]-\mathbb{E}[Z_x]^2)\\
\end{align*}
Because we have $\mathbb{E}[Z_x]
=\sum_{a\in \mathcal{A}_x} \pi_\ell(a\mid x)\,\frac{\pi_t(a\mid x)}{\pi_\ell(a\mid x)}\,\mu(a,x)
= \sum_{a\in \mathcal{A}_x} \pi_t(a\mid x)\,\mu(a,x)$. 

Now,
\begin{align*}
       \text{Variance} &= \frac{1}{N}\sum_{x \in \mathcal{X}} \Pr(x)\left(\mathbb{E}[Z_x^2]-\sum_a \pi_t(a\mid x)\mu(a,x)\right)\\
       &= \frac{1}{N}\sum_{x \in \mathcal{X}} \Pr(x)\left(\left(\sum_{a \in \mathcal{A}_i}\frac{\pi_t(a| x)^2}{\pi_l(a| x)}\mu(a,x)\right) -\left(\sum_{a \in \mathcal{A}_i} \pi_t(a| x_i)\mu(a,x) \right)^2 \right)
\end{align*}
This leads to a final, closed-form expression of MSE.

\begin{align*}
    \mathrm{MSE}[\hat{V}(\pi_t|\pi_l)]&= \left( \sum_{x \in \mathcal{X}}  \Pr(x) \sum_{a \notin \mathcal{A}_i}\pi_t(a \mid x) \mu(a, x)\right)^2 \\
    &+\frac{1}{N}\left[
    \sum_{x \in \mathcal{X}} \Pr(x) \left(\left(\sum_{a \in \mathcal{A}_x} \frac{\pi_t(a \mid x)^2}{\pi_l(a \mid x)} \mu(a, x)\right)-\left(\sum_{a \in \mathcal{A}_x} \pi_t(a \mid x)\mu(a, x)\right)^2\right)\right].
\end{align*}

\clearpage
\newpage
\section{Example of Bias Improving Estimate}\label{app:examples}
There are cases, however, where worst-case MSE can be minimized by not sampling actions that might be sampled by the target policy, inducing a biased estimate.

\begin{proposition}\label{prop:bias_error_reduction}
    For an unconstrained target policy class $\Pi$, a logging policy with full support does not always minimize $\max\limits_{\mu \in \mathcal{M}, \pi_t \in \Pi} \;\mathrm{MSE}[\hat{V}_{\text{IPW}}(\pi_t | \pi_l)]$.
\end{proposition}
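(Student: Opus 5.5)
The plan is to prove Proposition~\ref{prop:bias_error_reduction} with an explicit instance: a single context and a small sample budget $N$ relative to $|\mathcal{A}|$. In this instance the worst-case MSE of \emph{every} full-support logging policy exceeds that of a deliberately truncated policy. I will use the closed-form decomposition from Online Appendix~\ref{app:b_v_decomposition} throughout, with $\mathcal{X}=\{x\}$ and $\Pr(x)=1$.

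\emph{Step 1: lower bound for all full-support policies.} Fix any $\pi_l$ with full support. As in the proof of Lemma~\ref{lem:conditional}, $v(\pi_t,\pi_l\mid x)$ is convex in $\pi_t$, so nature may take $\pi_t$ deterministic on some action $a^*$. This gives MSE $=\frac{1}{N}\bigl(\frac{\mu(a^*,x)}{\pi_l(a^*\mid x)}-\mu(a^*,x)^2\bigr)$. Choosing $\mu(a^*,x)=1$ and $a^*=\arg\min_a \pi_l(a\mid x)$, and using $\min_a\pi_l(a\mid x)\le 1/|\mathcal{A}|$, yields
\[
\max_{\mu,\pi_t}\mathrm{MSE}[\hat V_{\mathrm{IPW}}(\pi_t\mid\pi_l)]\;\ge\;\frac{|\mathcal{A}|-1}{N}.
\]

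\emph{Step 2: upper bound for a truncated policy.} Let $\pi_l^{\circ}$ put all mass on a single action $a_1$. For arbitrary $\pi_t$ and $\mu$, write $p\coloneqq\pi_t(a_1\mid x)$. The bias term is $\bigl(\sum_{a\neq a_1}\pi_t(a\mid x)\mu(a,x)\bigr)^2\le(1-p)^2$. The variance term is $\frac{1}{N}\bigl(p^2\mu(a_1,x)-p^2\mu(a_1,x)^2\bigr)\le \frac{p^2}{4N}$. Hence the MSE is at most $(1-p)^2+\frac{p^2}{4N}\le\max\{1,\tfrac{1}{4N}\}=1$ for $N\ge1$, by convexity in $p$ on $[0,1]$. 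So the worst-case MSE of $\pi_l^{\circ}$ is at most $1$.

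\emph{Step 3: conclude.} Take, for example, $N=1$ and $|\mathcal{A}|=3$, or more generally any $|\mathcal{A}|>N+1$. Every full-support policy then has worst-case MSE at least $(|\mathcal{A}|-1)/N>1$, which exceeds the worst-case MSE of $\pi_l^{\circ}$. Therefore no full-support policy attains the minimax value, which proves the proposition.

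The only delicate point is Step 1: I need the convexity and reduction to a deterministic $\pi_t$, together with the fact that nature is free to choose $\mu=1$ on the least-covered action. Both are already used in Lemma~\ref{lem:conditional}, so I expect little resistance there. The remainder is elementary bounding.
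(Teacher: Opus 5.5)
Your proof is correct, and its core mechanism is the paper's. Both use a single context with $N$ small relative to $|\mathcal{A}|$. Both compare against a logging policy concentrated on one action, whose worst-case MSE is capped at $1$ because the bias is bounded, while full-support policies incur worst-case variance of order $|\mathcal{A}|/N$.

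The difference is in how the full-support side is handled. The paper fixes $|\mathcal{A}|=10$, $N=1$ and computes the worst case only for $\pi_U$, getting $9$ versus $1$. To conclude that \emph{no} full-support policy is minimax, it therefore implicitly relies on Proposition~\ref{prop:unknown_mu_unknown_pit}. Your least-covered-action bound $\max_{\mu,\pi_t}\mathrm{MSE}\ge(|\mathcal{A}|-1)/N$ applies to every full-support policy at once. It is tight at $\pi_U$ and reproduces the paper's $9$. This makes your argument self-contained and valid for any $|\mathcal{A}|>N+1$.

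Your Step 2 also squares the bias term, which the paper's display omits; this is harmless, since the maximum is $1$ either way.

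One small correction to your own assessment: Step 1 is not delicate. Lower-bounding a maximum only requires exhibiting one pair $(\mu,\pi_t)$, so the convexity reduction to a deterministic $\pi_t$ is unnecessary.
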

\begin{proof}
    Let $|\mathcal{A}| = 10$ and $|\mathcal{X}| = 1$ and $n = 1$. Consider a deviation from $\pi_U$ to the degenerate policy $\pi'(a_1|x_1) = 1$.

   For $\pi_U$ we have:
   \begin{align*}
       \max\limits_{\mu \in \mathcal{M}, \pi_t \in \Pi} \;\mathrm{MSE}[\hat{V}_{\text{IPW}}(\pi_t | \pi_U)] &=\max\limits_{\mu \in \mathcal{M}, \pi_t \in \Pi}\left[\sum_{a \in \mathcal{A}}|\mathcal{A}|\pi^2_t(a|x_1)\mu(a, x_1)\right. \\
       &-\left(\left.\sum_{a \in \mathcal{A}}\pi_t(a|x_1)\mu(a, x_1)\right)^2\right] \\
       &= 9.
   \end{align*}

   However, for $\pi_l'$, we have:
     \begin{align*}
       \max\limits_{\mu \in \mathcal{M}, \pi_t \in \Pi} \;\mathrm{MSE}[\hat{V}_{\text{IPW}}(\pi_t | \pi_l')]&=\max\limits_{\mu \in \mathcal{M}, \pi_t \in \Pi}\bigg[\sum_{a \neq a_1 }\pi_t(a|x_1)\mu(a, x_1) \\
       &+\pi^2_t(a_1|x_1)\mu(a_1|x_1) -(\pi_t(a_1|x_1)\mu(a_1|x_1))^2\bigg] \\
       &= 1.
   \end{align*}
So, there exists conditions under which the minimax MSE is minimized via a biased policy.
   
\end{proof}

\clearpage
\newpage
\section{Simulations}\label{app:simulations}
\subsection{Additional Simulations}
\subsubsection{Other Policy Mappings}
We briefly extend the results of Figure \ref{fig:softgreedy_errors} to other policy classes, motivated by the performance of top-$k$ policies with a well chosen $k$. Specifically, we use top-$k$ to truncate logging propensities. Then, rather than sample uniformly across these actions, we sample in accordance to the estimated reward probability. Thus, we end up with two parameters of greediness for the following classes:
\paragraph{Power-normalized across top-k (Top-k PN).} We defined this class to be $\Pi_{TKPN}(\hat{\mu}) \coloneqq \Big\{\pi_{k,d}; \forall d \in [0,\infty), k \in 1...|\mathcal{A}| ]\Big\}$,  where $\pi_{k,d}$ is the power-normalization function using degree $d$ described in Section \ref{sec:Discussion}, however where normalization only happens over the top-$k$ items.
\paragraph{Softmax across top-k (Top-k SM).} We defined this class to be $\Pi_{TKSM}(\hat{\mu}) \coloneqq \Big\{\pi_{k,\alpha}; \forall \alpha \in [0,\infty), k \in 1...|\mathcal{A}| ]\Big\}$,  where $\pi_{k,\alpha}$ is the softmax function using parameter $\alpha$ described in Section \ref{sec:Discussion},  where sampling only happens over the top-$k$ items.

In Figure \ref{fig:softgreedy_errors_more_policies}, we compare these policies to those in Figure \ref{fig:softgreedy_errors}. We plot these additional classes at several different $d$ or $\alpha$ parameters respectively across all $k$ values. In general, these classes seem to improve performance as $k$ grows large, suggesting these can limit the downside of selecting a support which is too large. Specifically, we note that for the given simulation parameters, sampling across the top-$k$ proportional to raw reward estimates ($d=1$) can yield comparable upside MSE when $k$ is tuned well but limit MSE loss as $k$ grows too large. 
\begin{figure}[h!]
\FIGURE
    {
    \centering
    \begin{subfigure}[b]{0.475\textwidth}
        \includegraphics[width=\linewidth]{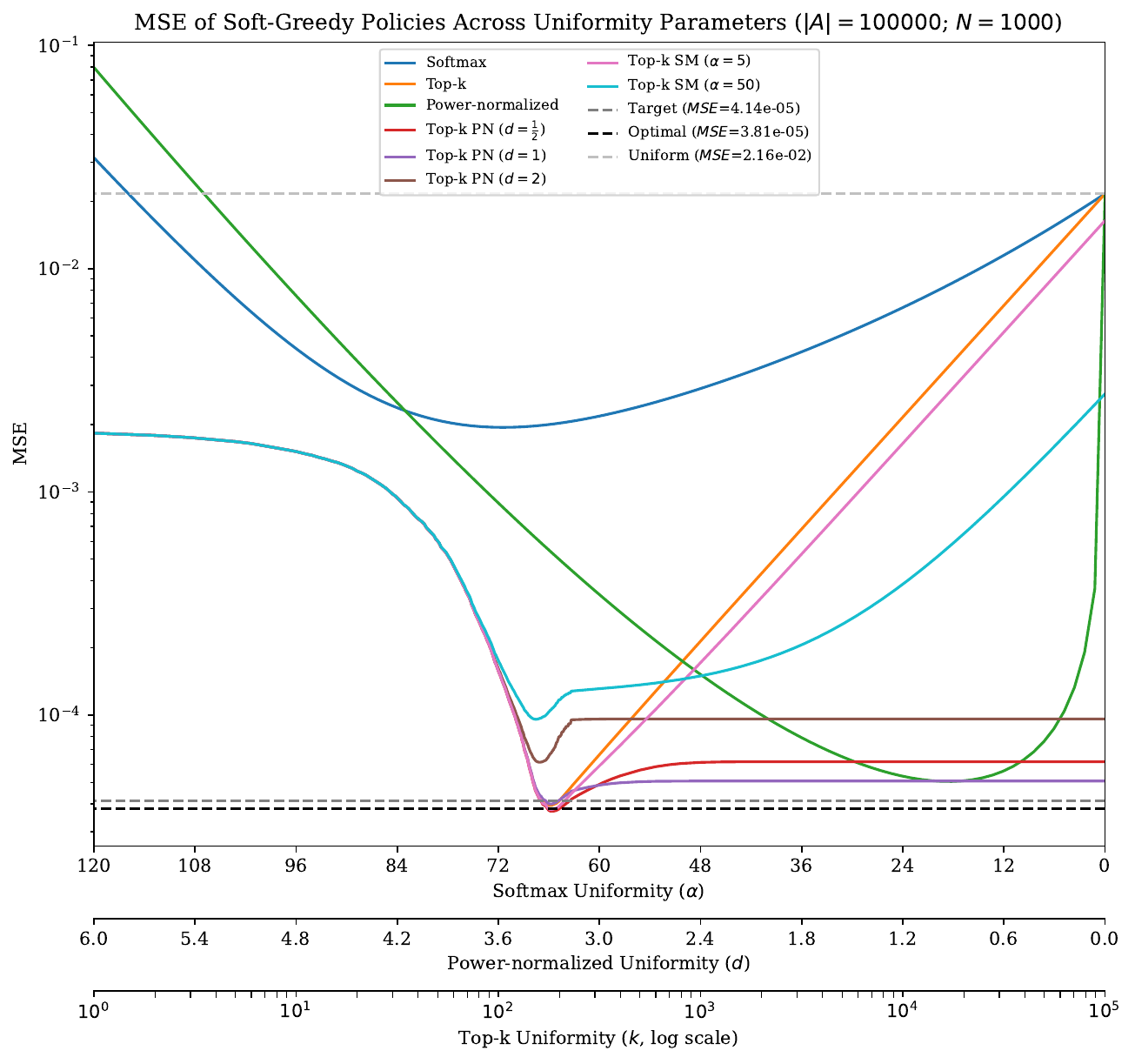}
        \caption{\textit{Small action space} $|\mathcal{A}|=1,000$.} 
    \end{subfigure}
    \hfill
    \begin{subfigure}[b]{0.475\textwidth}
        \includegraphics[width=\linewidth]{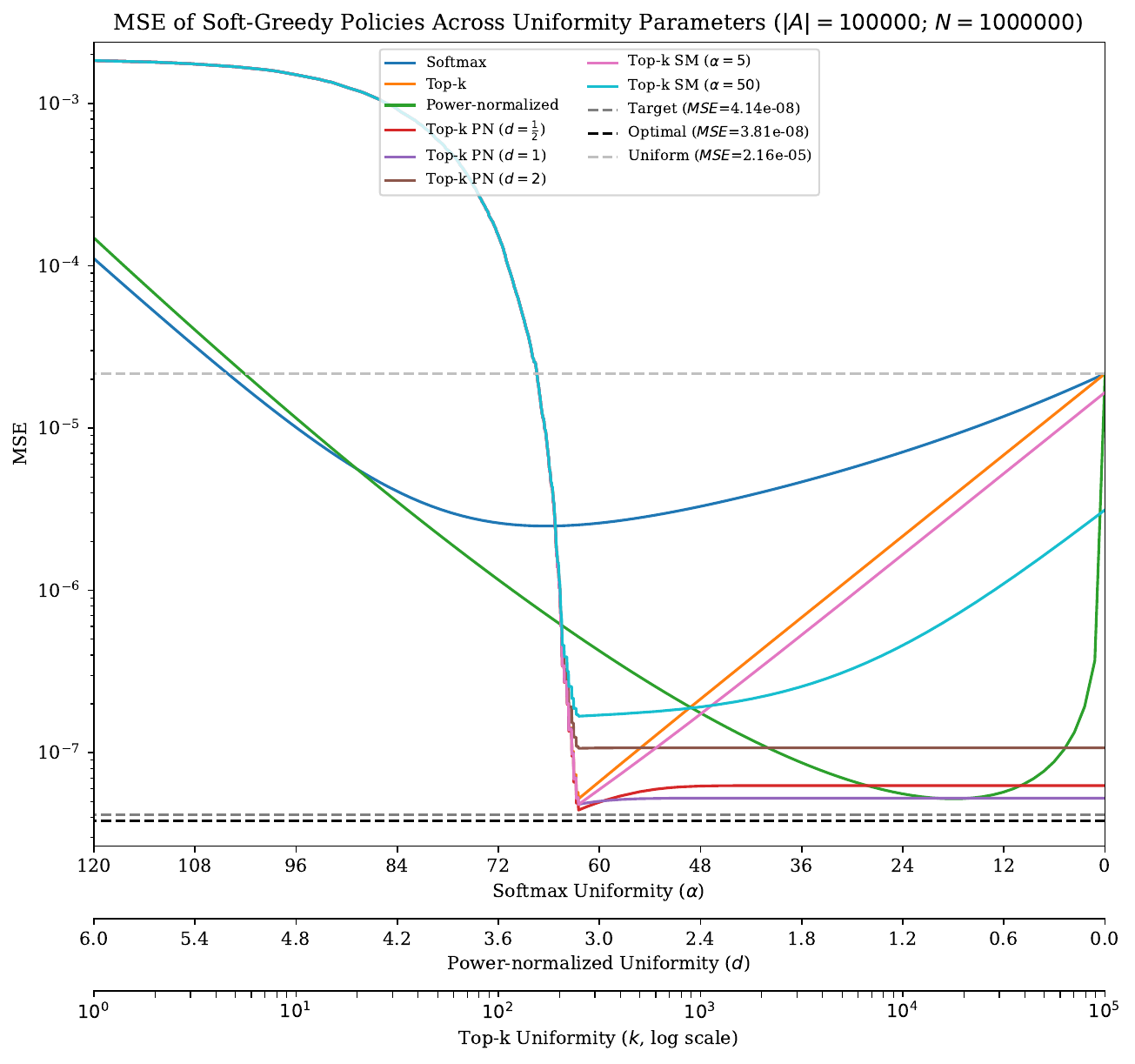}
        \caption{\textit{Large action space} $|\mathcal{A}|=100,000$.}
    \end{subfigure}
    }
    {
    MSE of IPW estimator for soft-greedy logging policy classes \textit{top-$k$} and \textit{softmax}. 
    \label{fig:softgreedy_errors_more_policies}
    }
    {
     Simulation evidence of the MSE of the IPW estimator as a function of the greediness parameter for the \textit{top-$k$}, \textit{softmax}, and \textit{power-normalized} logging policy classes, in addition to \textit{top-k PN} and \textit{top-k SM}. 
     Full simulation details are in Appendix \ref{app:simulations}, under the same parameters as Figure \ref{fig:softgreedy_errors}.
    }
\end{figure}

\subsection{Environment Description}
In constructing the simulation environment, we set the number of users $|\mathcal{X}|$ and the number of actions $|\mathcal{A}|$. Then, for each $x \in \mathcal{X}$, we assign some arrival probability $\Pr(x)$. We then generate reward probabilities $\mu(a,x)$ as follows. For each user context $x$, randomly permute $\mathcal{A}$ to generate $\mathcal{A}'$. Then, for $i \in 1...|\mathcal{A}|$, assign $\mu( \mathcal{A}'_i,x)=\alpha\gamma^i$, where $\alpha$ is a scale parameter and $\gamma \in [0,1]$ is the decay rate. Model estimates $\hat{\mu}$ are generated from $\mu$  by multiplying each underlying probability by noise $\mathcal{N}(1,\sigma^2)$. At simulation time, the dataset is constructed as follows. For each instance, a user is drawn from $x \sim \Pr(x)$ with replacement, an action is taken according to the logging policy $a \sim \pi_l(\cdot|x)$, and a reward is drawn from the underlying distribution $r \sim \text{Bernoulli}(\mu(a,x))$. This is then repeated $n$ times to form a dataset.

\subsubsection*{Simulation generating Figure \ref{fig:estimate_histogram} } Simulation parameters are: $|\mathcal{A}|=10,000$  for a single context, with scaled geometric reward preferences ($\gamma = .99$, $\alpha=\frac{1}{10}$). The target policy is a \textit{top-$k$} policy ($k=10$) on $0$-noise estimates of $\mu$. The personalized logging policy is a  \textit{top-$k$} policy ($k=10$) on $\hat{\mu}$ with multiplicative Gaussian noise of $\sigma^2 = .25$.

\subsubsection*{Simulation generating Figure \ref{fig:mse_in_noise}}Simulation parameters are: $|\mathcal{A}|=1,000$ a single context ($|\mathcal{X}| = 1$), with scaled geometric reward preferences ($\gamma = .99$, $\alpha=\tfrac{1}{10}$). There are $100$ samples drawn for all logging policies to generate the MSE. Noises in $\hat{\mu}_l$ are placed equally spaced on $[0,.25]$, with $1,000$ trials at each noise. The target policy is generated for each trial based on a \textit{top-$k$} policy ($k=30$) on noisy estimates of $\mu$ with ($\epsilon = .25$). The optimal logging policy is derived for this target policy for each trial.

\subsubsection*{Simulation generating Figure \ref{fig:shrinkage}} Simulation parameters are: $|\mathcal{A}| = 1,000$, $|\mathcal{X}| = 1,000$ with $\mu(a,x)$ distributed by randomly permuting $\mathcal{A}$ then assigning a linearly decreasing $\mu(a,x)$ on the equally spaced lattice from $.4$ to $0$. Here, $50,000$ samples are drawn for each trial. The target policy is a \textit{top-$k$} policy ($k=100$) based on estimates generated with standard multiplicative noise with $\sigma^2 = .05$. Both MSE and observed outcomes used to derive $w^*$ are based on a sample size of $50,000$. 

\subsubsection*{Simulation generating Figure \ref{fig:softgreedy_errors}} Simulation parameters are: $|\mathcal{A}|=1,000$  for the small action space setting and $|\mathcal{A}|=100,000$ for the large action space setting for a single context, with scaled geometric reward preferences ($\gamma = .99$, $\alpha=\tfrac{1}{10}$), with. $1,000$ samples are drawn for all logging policies. The target policy is a \textit{top-$k$} policy ($k=200$) on $0$-noise estimates of $\mu$.

\end{APPENDICES}

\end{document}